%% file: main_arxiv_20260619.tex
\documentclass{article}

\input{llm_history_allen/llm_history_preamble}

\usepackage{hyperref}

\usepackage[accepted]{icml2026}

\input{math_commands.tex}

\usepackage{algorithm}
\usepackage{algorithmic}
\usepackage{amsmath}
\usepackage{amssymb}
\usepackage{mathtools}
\usepackage{amsthm}

\usepackage[capitalize]{cleveref}
\crefformat{equation}{Eq.~#2#1#3}
\crefname{section}{Sec.}{Secs.}
\Crefname{section}{Sec.}{Secs.}

\usepackage{xcolor, soul}         %
\sethlcolor{pink}
\usepackage{graphicx}
\usepackage{wrapfig}
\usepackage{subcaption}
\usepackage{enumitem}
\usepackage{titletoc} %
\usepackage{float}
\usepackage{booktabs}
\usepackage{minted}
\usepackage{comment}
\usepackage{array}

\usepackage{hhline}
\usepackage{tabularx,array}
\usepackage{enumitem}
\usepackage{etoc}
\usepackage{verbatim}

\setlist[enumerate]{itemsep=1pt, topsep=1pt}

\setlist[itemize]{itemsep=1pt, topsep=1pt}

\theoremstyle{plain}
\newtheorem{theorem}{Theorem}[section]



\newcolumntype{Y}{>{\centering\arraybackslash}X}
\icmltitlerunning{ModelSMC -- A Probabilistic Framework for LLM-Based Model Discovery}

\begin{document}

\twocolumn[
  \icmltitle{A Probabilistic Framework for LLM-Based Model Discovery}

  \icmlsetsymbol{equal}{*}

  \begin{icmlauthorlist}
    \icmlauthor{Stefan Wahl}{mls,aicenter}
    \icmlauthor{Raphaela Schenk}{bi}
    \icmlauthor{Ali Farnoud}{bi}
    \icmlauthor{Jakob H. Macke}{mls,aicenter,mpiis}
    \icmlauthor{Daniel Gedon}{mls,aicenter}
  \end{icmlauthorlist}

  \icmlaffiliation{mls}{Machine Learning in Science, University of Tübingen, Tübingen, Germany}
  \icmlaffiliation{aicenter}{Tübingen AI Center, Tübingen, Germany}
  \icmlaffiliation{mpiis}{Department Empirical Inference, Max Planck Institute for Intelligent Systems, Tübingen, Germany}
  \icmlaffiliation{bi}{Boehringer Ingelheim, Biberach, Germany}

  \icmlcorrespondingauthor{}{\{firstname.lastname\}@uni-tuebingen.de}

  \icmlkeywords{LLM, model discovery, SMC, Bayesian inference, SBI, scientific discovery} 

  \vskip 0.3in
]

\printAffiliationsAndNotice{}%

\begin{abstract}
Automated methods for discovering mechanistic simulator models from observational data offer a promising path toward accelerating scientific progress. 
Such methods often take the form of agentic-style iterative workflows that repeatedly propose and revise candidate models by imitating human discovery processes. However, existing LLM-based approaches typically implement such workflows via hand-crafted heuristic procedures, without an explicit probabilistic formulation.
We recast model discovery as probabilistic inference, i.e., as sampling from an unknown distribution over mechanistic models capable of explaining the data. This perspective provides a unified way to reason about model proposal, refinement, and selection within a single inference framework.
As a concrete instantiation of this view, we introduce ModelSMC, an algorithm based on Sequential Monte Carlo sampling. ModelSMC represents candidate models as particles which are iteratively proposed and refined by an LLM, and weighted using likelihood-based criteria.
Experiments on real-world scientific systems illustrate that this formulation discovers models with interpretable mechanisms and improves posterior predictive checks. More broadly, this perspective provides a probabilistic lens for understanding and developing LLM-based approaches to model discovery.
\end{abstract}

\section{Introduction} 

Scientific discoveries---from the expanding universe \citep{hubble1929relation} to gravitational waves \citep{abbott2016observation} and neuronal excitability \citep{hodgkin1952}---often rely on formal models that integrate theory with data to enable interpretation and prediction.
As scientific problems grow in experimental and conceptual complexity, computational simulators that encode explicit mechanistic hypotheses have become central tools for discovery %
\citep{frigg2009philosophy,winsberg2019scienceintheageofsimulation,lavin2022simulationintelligencenewgeneration}.

Simulators are only approximations of reality, and their fidelity depends on how accurately they  capture the structure of the underlying system \citep{box1987empirical}. Constructing scientific simulators that are simultaneously interpretable, tractable, and realistic can be challenging. This modeling process requires substantial domain expertise and iterative manual refinement, rendering it labor-intensive and difficult to scale.

\begin{figure*}[t]
    \centering
    \includegraphics{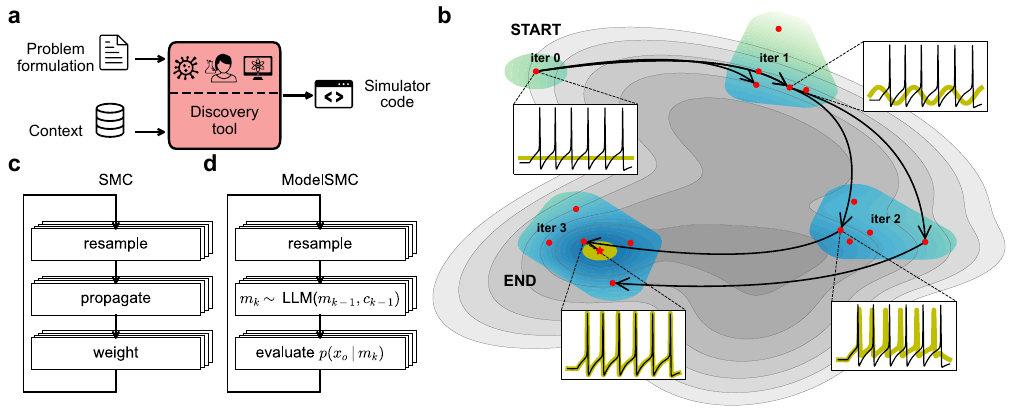}
    \caption{
    \textbf{Overview of ModelSMC for automated LLM-based model discovery.}
    \textbf{(a)} Given a textual problem formulation and context data, we infer a simulator model implemented in code.
    \textbf{(b)} ModelSMC iteratively refines an initial model to sample high-density regions of the model posterior $p(m | \vx_o)$, approaching the unknown data-generating process (red star) in high-density regions. 
    \textbf{(c)} ModelSMC is inspired by SMC, which approximates evolving distributions via weighted particles by iterative resampling, propagation, and weighting.
    \textbf{(d)} In ModelSMC, models are propagated by LLM sampling and weighted by likelihood evaluation.
    }
    \label{fig:overview}
\end{figure*}

A range of methods have been proposed to partially automate model discovery, but existing approaches typically trade off expressive model classes against mechanistic interpretability or require substantial manual specification \citep{udrescu2020ai,Donnelly2024pinnsurrogate,schroder2024sbmi}.
LLMs offer a qualitatively different point in this design space: trained on extensive corpora of scientific text and source code, they can generate and revise executable simulator implementations directly from textual problem formulations and feedback \citep{austin2021program,chen2021evaluating}.

However, most LLM-based discovery systems are defined operationally, i.e., via prompts, agent roles, and interaction protocols, rather than as inference procedures for an explicit target distribution.
As a result, these approaches 
(i)~do not explicitly state the underlying inference problem, which hinders interpretability, obscures failure modes, and prevents principled extensions or connections to existing theory;
(ii)~do not naturally admit theoretical analysis or convergence properties; and
(iii)~rely on loosely specified agent roles and interactions, making it difficult to reason formally about the necessity or sufficiency of components. %

Here, we formulate automated model discovery as a probabilistic inference problem. Given observational data $\vx_o$, the goal is to sample from the unknown distribution $p(m | \vx_o)$ over mechanistically interpretable models~$m$---here, simulator programs implemented in code---capable of explaining the data~$\vx_o$. Building on this formulation, we introduce ModelSMC, an approach based on Sequential Monte Carlo (SMC) methods, which approximate latent distributions using collections of weighted particles \citep{johansen2009tutorial}. In ModelSMC, candidate simulator models are represented as particles whose code is iteratively proposed and modified by an LLM, evaluated using likelihood-based criteria, and resampled to focus on promising (i.e., high-marginal likelihood) regions of the model space (\cref{fig:overview}).

Formulating model discovery as probabilistic inference within an SMC framework closely mirrors the iterative workflows employed by human experts, yielding conceptual and practical advantages:
First, it provides an explicit mathematical formulation of the refinement process, clarifying assumptions and failure modes. 
Second, under standard SMC conditions, it enables principled guarantees on stability and convergence. 
Third, it allows formal reasoning about the role and necessity of individual system components, facilitating systematic ablations and extensions.
Our main contributions can be summarized as follows:
\begin{itemize}
    \item We cast automated LLM-based discovery of models as probabilistic inference over executable programs.
    \item We propose ModelSMC, an SMC-style population method with LLM-based proposals of programs.
    \item We demonstrate competitive performance on synthetic tasks and show that ModelSMC discovers accurate and mechanistically interpretable models on real-world neuroscience and pharmacology datasets.
    \item We show how existing SMC theory can be used to analyze automated model discovery with LLMs.
\end{itemize}

\section{Related Work}

\textbf{Classic mechanistic model discovery.}
Automated discovery of mechanistic models has traditionally focused on symbolic regression, equation discovery, and program synthesis, including genetic programming \citep{Schmidt2009,udrescu2020ai,kamienny2022end} and sparse regression over predefined libraries of candidate terms \citep{brunton2016discovery,rudy2017data}, as well as grammar- or DSL-based program synthesis \citep{gulwani2017program,parisotto2017neurosymbolic,ellis2021dreamcoder}. Other approaches combine mechanistic structure with statistical learning or physics-informed models \citep{raissi2019physics,rackauckas2020universal,karniadakis2021physics,Donnelly2024pinnsurrogate}. These methods rely on carefully engineered search spaces and have so far focused on low-complexity models. In contrast, ModelSMC explores complex simulator programs spanning hundreds of lines of code in an open search space without hand-crafted restrictions.

\textbf{LLM-based methods.}
More recently, several works have used LLMs for open-ended automated model discovery. One class of approaches combines LLMs with symbolic regression or optimization loops, where the LLM proposes candidate equations or programs and iterative evaluation or gradient-based refinement is used to improve fit to data \citep{Ma2024SGA,Li2024Boxlm,shojaee2025llmsr,holt2025gsim}. Another line builds on evolutionary search, exemplified by FunSearch \citep{romera-paredes_mathematical_2024,Castro2025cogfunsearch} or the closed-source AlphaEvolve \citep{novikov2025alphaevolve}. These frameworks have been applied to domain-specific problems in cognitive science \citep{rmus2025generating,jha2025modeling}, language modeling \citep{cheng2025language}, pharmaceutical applications \citep{holt2024data}, neuroscience \citep{Tilbury2025}, or combinatorial optimization \citep{liu2024evolution}.
Model proposal, refinement, and selection are typically defined through heuristic system designs, such as fixed agent roles, optimization loops, or evolutionary operators. Selection in particular relies on general-purpose metrics such as MMD, MSE, or Wasserstein distance \citep{gretton2012kernel,arjovsky2017wasserstein}, rather than derived from an explicit mathematical objective.

Closest to our work are methods combining LLMs with Bayesian reasoning in scientific and probabilistic settings, such as Large Language Bayes \citep{domke2025large} and AutoDiscovery \citep{agarwal2025autodiscovery}. 
Large Language Bayes generates probabilistic programs from informal descriptions and performs approximate Bayesian inference via likelihood-weighted model averaging. 
AutoDiscovery explores nested hypotheses using Bayesian surprise within a Monte Carlo tree search, without explicitly defining a posterior over mechanistic models or maintaining a population of executable models.
In contrast, we frame mechanistic model discovery as inference over models.%

\textbf{Probabilistic model inference.}
Probabilistic approaches to model discovery and comparison are traditionally grounded in Bayesian model selection and averaging, where candidate models are evaluated via marginal likelihoods or Bayes factors \citep{kass1995bayes,morey2016bayes}, as well as Bayesian program induction \citep{lake2015human}. With simulators as tools for scientific inquiry, this has motivated simulation-based and likelihood-free inference methods, including synthetic likelihoods and neural density estimation \citep{Cranmer2020,deistler2025simulation}. It also includes SMC samplers for static or trans-dimensional targets, where explicit likelihood tempering bridges from prior to posterior to mitigate weight degeneracy \citep{del2006smcsampler,neal2001annealed}. Recent work, such as simulation-based model inference \citep{schroder2024sbmi}, extends this view to simultaneous model and parameter inference over graph-structured components using amortized inference, but assumes a fixed compositional prior without iterative refinement. In contrast, ModelSMC treats full model programs as latent variables and uses LLMs as proposals for open-ended model spaces, progressively concentrating on high-likelihood regions without an explicit tempering schedule.

\section{Method}  

\subsection{Background: Sequential Monte Carlo}\label{sec:MethodSMC}

SMC methods \citep{johansen2009tutorial,naesseth2019elements} approximate a sequence of target probability distributions using a set of weighted samples. Consider a state space model with latent states $x_{1:T}$ and observations $y_{1:T}$, with joint density $p(x_{1:T}, y_{1:T})$.
The inference task is to approximate the filtering or target distribution $p(x_t  |  y_{1:t})$. At each time step~$t$, SMC represents the distribution using a finite set of~$N$ particles $\{x_t^i\}_{i=1}^N$ and associated weights $\{w_t^i\}_{i=1}^N$ that together define an empirical measure

\begin{equation}
    p(x_t  |  y_{1:t}) \approx \sum_{i=1}^N w_t^i \, \delta_{x_t^i},
\end{equation}

providing a discrete approximation of the target distribution. In practice $N$ is of the order $10^2$–$10^5$. Convergence results guarantee that, under mild conditions, the particle approximation converges to the true target distribution as~$N\to\infty$ \citep{moral2004feynman,johansen2009tutorial}.

As a specific instantiation of SMC, we consider a standard bootstrap particle filter with adaptive resampling \citep{doucet2001sequential}. Given particles at time $t-1$, the update at time $t$ consists of the following steps:
\begin{enumerate}
    \item \textbf{Resample.} Sample $N$ ancestor indices $a_t^i$ with systematic resampling \citep{Kitagawa1996} if adaptive resampling is triggered. Otherwise copy all particles, i.e. $a_t^i=i$.
    \item \textbf{Propagate.} Sample new particles from the transition model, $x_t^i \sim p(x_t  |  x_{t-1}^{a_t^i})$.
    \item \textbf{Weight.} Assign unnormalized weights using the likelihood, $\tilde w_t^i = p(y_t  |  x_t^i)$, and normalize $w_t^i = \frac{\tilde w_t^i}{\sum_{j=1}^N \tilde w_t^j}$.
\end{enumerate}

Repeating these steps for $t=1,\dots,T$ yields a particle-based approximation of the target distributions. SMC can also be used for static target distributions, i.e., $p(x_t) \equiv p(x)$ for all $t$, by defining an artificial sequence of intermediate target distributions that progressively transform a simple initial distribution into the desired target (e.g., via likelihood tempering) \citep{neal2001annealed,del2006smcsampler}.

\subsection{Problem Formulation}\label{sec:MethodProblemFormulation}

Scientific model discovery aims to infer a model $m$ that reproduces the observable behavior of a system of interest~$m^*$. Information about~$m^*$ is available only through an observed set $\vx_o = \{(x_o^j, c_o^j)\}_{j=1}^M$, where each observation~$x_o^j$ is generated by~$m^*$ under context~$c_o^j$. The context may include, e.g., initial conditions or experimental parameters. We assume the observed data pairs are independent and identically distributed (iid).

A model $m$ is an algorithm written in code that probabilistically maps context $c \in C$ and model parameters $\theta \in \Theta$ to data $x \in X$. The model parameters $\theta$ are unknown but fall within a prior distribution $p(\theta)$ (details in App. \ref{app:ExperimentalDetailsPriorChoices}). For notational simplicity, we abbreviate $m(c,\theta) \equiv m$.

From a probabilistic perspective, model discovery corresponds to inference over models conditioned on the observed data. The inference target is a distribution over models $p(m|\vx_o)$ that assigns higher probability to models whose simulated behavior matches the observed behavior of $m^*$ across contexts. This distribution is defined implicitly via a likelihood and a prior: the likelihood $p(x_o|m)$ is induced by executing the simulator model defined by $m$ with latent parameters $\theta\sim p(\theta)$. The prior $p(m)$ corresponds to the marginal distribution over model implementations induced by an LLM-based generative process. The distribution $p(m|\vx_o)$ is generally intractable due to model space size and the absence of an explicit likelihood.

\subsection{SMC-Based Model Discovery with ModelSMC}
\label{sec:Method_ModelSMC}

ModelSMC approximates the posterior $p(m | \vx_o)$ via a population of $N$ candidate models that is iteratively refined using SMC.
Since the true system $m^*$ is fixed, the SMC time index corresponds to iterative refinement rather than temporal evolution. Unlike classical SMC, each particle corresponds to a candidate model rather than a dynamical state.

Each particle represents a candidate model and is associated with a weight reflecting its posterior relevance. In addition to the model implementation, particles store auxiliary context such as evaluation feedback. A particle consists of the tuple of model, weight and context $(m, w, c)$.

ModelSMC proceeds for $K$ iterations. In each iteration~$k$, particles are resampled, propagated to generate new candidate models, and weighted according to their likelihood under the observed data. Each iteration consists of three steps:
(i) resampling,
(ii) propagation, and 
(iii) weighting.
\Cref{alg:modelsmc} summarizes the complete procedure.

\begin{algorithm}[ht]
\small
\caption{ModelSMC (for $i=1,\dots,N$)}
\label{alg:modelsmc}
\renewcommand{\algorithmicrequire}{\textbf{Input:}}
\renewcommand{\algorithmicensure}{\textbf{Output:}}
\begin{algorithmic}[1]
\REQUIRE Number of particles~$N$, number of iterations~$K$, cloning probability~$\alpha\in[0,1]$, ESS threshold $\tau_{\mathrm{ESS}}>0$, observed data~$\vx_o=\{(x_o^j, c_o^j)\}_{j=1}^M$, task prompt~$c_{\text{task}}$, initial particle and weight~$(m_0, w_0=1)$.

\FOR{$k=1,\dots,K$}

    \STATE \textit{$\triangleright$} \textbf{1. Resample}

    Compute effective sample size (ESS)
    \begin{equation}\label{eq:AlgoModelSMCESS}
        \mathrm{ESS}_k = \frac{1}{\sum_{i=1}^N(w_k^i)^2}.
    \end{equation}

    \STATE Sample ancestor indices via systematic resampling $\mathcal{R}$.
    \begin{equation}
        a_k^i \sim 
        \begin{cases}
            \delta_i  & \text{if }\;\mathrm{ESS}_k\geq\tau_{\mathrm{ESS}}, \\
        \mathcal{R}\left(\left\{w_{k-1}^j\right\}_{j=1}^N\right)& \text{else.}
        \end{cases}
        \label{eq:AlgoModelSMCAncestorIdx}
    \end{equation}

    \STATE \textit{$\triangleright$} \textbf{2. Propagate}
    \STATE Propagate each particle $m_k^i$. Sample $z_i\sim \gU[0,1]$, then
    \begin{equation}\label{eq:AlgoModelSMCPropagation}
    m_k^i \sim 
    \begin{cases}
       \delta_{m_{k-1}^{a_k^i}}, & \text{if } z_i<\alpha, \\
       p_{\mathrm{LLM}}(m \mid m_{k-1}^{a_k^i}, c_{k-1}^{a_k^i}, c_{\text{task}}), & \text{else}.
    \end{cases}
    \end{equation}

    \STATE \textit{$\triangleright$} \textbf{3. Weight}
    \STATE Compute surrogate likelihood $p_\phi(x\mid m_k^i,\theta,c)$.
    \STATE Compute marginal likelihood for each $(x_o^j,c_o^j)\in\vx_o$
    \begin{equation}
        p(x_o^j\mid m_k^i)
        =
        \int p(\theta)\,
        p_\phi(x_o^j\mid m_k^i,\theta,c_o^j)\,
        d\theta.
        \label{eq:MarginalLikelihoodSingleInstance}
    \end{equation}
    \STATE Update weights $\tilde w_k^i$ and normalize to obtain $w_k^i$
    \begin{equation}
        \tilde w_k^i
        =
        \prod_{j=1}^M p(x_o^j\mid m_k^i).
        \label{eq:ResamplingWeightModelSMCFactorization}
    \end{equation}
    \STATE Generate feedback $c_k^i=\mathrm{LLM}(m_k^i,w_k^i, c_{\text{prompt}})$.
\ENDFOR

\ENSURE Particles and weights $\{m_K^i, w_K^i\}_{i=1}^N$.
\end{algorithmic}
\end{algorithm}

\textbf{Resampling.}
The effective sample size (ESS, \cref{eq:AlgoModelSMCESS}), a measure of how many particles carry meaningful weight, is computed for the particles in the population. If the ESS is below a threshold $\tau_{\mathrm{ESS}}$, systematic resampling \citep{Kitagawa1996} is applied to rejuvenate the population, i.e. to rebalance the population by discarding low-weight particles. Otherwise, the population remains unaltered.

\textbf{Propagation.}
Each particle in the population is propagated via a mixture kernel (\cref{eq:AlgoModelSMCPropagation}). With probability $\alpha$ the particle is cloned unchanged, and with probability $1-\alpha$ we sample a new implementation by drawing from the LLM conditioned on 
(i)~the models along its ancestry, 
(ii)~their associated contextual feedback, and 
(iii)~a fixed task-specific prompt $c_{\text{task}}$ encoding domain knowledge, implementation constraints, and evaluation criteria. Simulating and evaluating new models is computationally expensive; cloned particles avoid this cost entirely, motivating the mixture kernel. The number of particles in the population is constant by construction of the propagation scheme, while each iteration generates $N\cdot(1-\alpha)$ new particles in expectation.

The context $c_{k-1}^{a}$ contains the information on which the LLM is conditioned in the generation of a new implementation. In addition to the implementations of the resampled particle's ancestors, it also contains LLM-based performance assessments of the ancestors and task-specific information (prompting details in Apps.~\ref{app:LLMDetails} and ~\ref{app:PromptExamples}). To control computational cost and prompt length, only a limited number of the most recent ancestry elements are included. 
The probability $\alpha$ controls the trade-off between exploration and exploitation: cloned particles require no new simulations or likelihood evaluations, preserving computational resources, while newly proposed particles explore the model space. Selection among clones is handled implicitly by resampling based on likelihood weights.

This iterative adaptation of the proposal plays a role analogous to likelihood tempering in classical SMC samplers \citep{del2006smcsampler}: early iterations explore the model space broadly, while later iterations concentrate proposals around high-likelihood candidates as ancestry and feedback accumulate. Unlike explicit tempering, this annealing occurs in the proposal distribution rather than the target, and without formal variance-control guarantees.

\textbf{Weighting.}
Each generated model $m^i_k$ is assigned a weight proportional to the likelihood of the observed data,
\begin{equation}
    \tilde w_k^i = p(\vx_o \mid m_k^i).
    \label{eq:ResamplingWeightModelSMCText}
\end{equation}
Assuming iid observed data, the likelihood factorizes across samples (\cref{eq:ResamplingWeightModelSMCFactorization}). Weighting models by likelihood is consistent with the posterior $p(m|\vx_o)$ as target (under idealized assumptions; derivation in App.~\ref{app:ModelSMC_WeightDerivation}).

The likelihood $p(\vx_o|  m^i_k)$ induced by a model is generally intractable. To obtain an estimate of the likelihood, we can sample from the likelihood by evaluating the model at different values of $(\theta,c)$. Specifically, we sample a synthetic dataset $\theta\sim p(\theta)$, $c\sim p(c)$, $x \sim p(x|  m^i_k(\theta, c))$ and train a conditional density estimator $p_\phi(x |  m^i_k, \theta, c)$. This approach is also known as Neural Likelihood Estimation \citep[NLE]{papamakarios2019sequential}. Given a trained likelihood surrogate, the marginal likelihood for each observed data instance is obtained by integrating out $\theta$ via Monte Carlo approximation (\cref{eq:MarginalLikelihoodSingleInstance}). We note that the trained likelihood estimation is an approximation of the true likelihood (details in App.~\ref{app:Method_WeightEstimation}).

For NLE, we adapt a likelihood-estimation variant derived from neural posterior estimation with prior-data fitted networks (NPE-PFN) \citep{vetter2025effortless}. This approach uses the pretrained tabular foundation model TabPFN as a density estimator, enabling training-free likelihood estimation \citep{hollmann2025tabpfn}. Crucially, this eliminates the need for a separate likelihood estimation training loop at each iteration of ModelSMC, which is especially beneficial in model discovery where repeated likelihood evaluations are required. The foundation model is particularly efficient with small simulation budgets.
Our likelihood estimation approach allows ModelSMC to generate non-differentiable models without any explicit parameter optimization.

ModelSMC differs from prior model discovery methods in two key aspects. 
First, the weighting step is derived from the model posterior $p(m | \vx_o)$, providing a principled probabilistic objective rather than relying on alternative discrepancy measures such as MMD, MSE, or Wasserstein distance \citep{gretton2012kernel,arjovsky2017wasserstein}. 
Second, ModelSMC marginalizes over model parameters $\theta$ during discovery instead of optimizing point estimates, thereby reducing sensitivity to parameter uncertainty.  
For downstream use of discovered models, we perform parameter inference using neural posterior estimation (NPE) \citep{Lueckmann2017FlexibleStatisticalInference,papamakarios2018fastepsilonfreeinferencesimulation} and NPE-PFN \citep{vetter2025effortless} (details in App.~\ref{app:Method_ParameterEstimation}).

\subsection{Theoretical Insights}
Since our framework is based on SMC, we connect ModelSMC to classical SMC theory. Specifically, we apply a standard SMC consistency result \citep{chopin2004central,moral2004feynman} to the model discovery setting. showing that an idealized ModelSMC sampler provides an asymptotically unbiased particle approximation of the model posterior $p(m | \vx_o)$. 

\begin{theorem}[Consistency of ModelSMC]\label{thm:ConsistencyProofModelSMC}
    Let $\pi(m) \propto p(\vx_o|  m)p(m)$ be a fixed target distribution over models, and let $q(m' |  m)$ denote an idealized proposal kernel induced by the propagation step. Assume 
    (i) \emph{support coverage}, i.e., any $m$ with $\pi(m)>0$ can be generated with non-zero probability; 
    (ii) uniformly bounded importance weights; 
    (iii) conditional independence of propagated particles given resampling; and
    (iv) \emph{exact likelihood evaluation}, i.e., the importance weights are computed using the true marginal likelihood $p(\vx_o | m)$.
    Let $\{(m^{i}, w^{i})\}_{i=1}^N$ be the particle population at a fixed iteration. Then, for any bounded test function~$\varphi$,
    \begin{equation}
        \sum_{i=1}^N w^{i} \varphi(m^{i}) \xrightarrow{\mathbb{P}} \mathbb{E}_{\pi}[\varphi(m)] \quad \text{as} \quad N \to \infty,
    \end{equation}
    with asymptotic variance $\mathcal{O}(1/N)$. The result holds in the presence of (adaptive) resampling.
\end{theorem}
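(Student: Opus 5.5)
We will cast the idealized ModelSMC sampler as a standard SMC sampler for a static target and then invoke the classical consistency results of \citet{chopin2004central} and \citet{moral2004feynman}. The natural construction uses the \emph{path-space} (sequential importance sampling) view. At iteration $k$, the particle's ancestry $m_{0:k}$ is the latent trajectory. The proposal is the mixture kernel $K(m'\mid m)=\alpha\,\delta_m(m')+(1-\alpha)\,q(m'\mid m)$ from \cref{eq:AlgoModelSMCPropagation}; under the idealization, $q$ is a fixed Markov kernel, with the dependence on feedback and ancestry absorbed into an augmented state. The incremental weight is the ratio between the target extension and $K$. Following \citet{del2006smcsampler}, we introduce a backward kernel $L(m\mid m')$ so that the extended target $\tilde\pi_k(m_{0:k})=\pi(m_k)\prod_{l<k}L(m_l\mid m_{l+1})$ has marginal $\pi$ in its last coordinate. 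With the choice $L(m\mid m')=\pi(m)K(m'\mid m)/\int\pi(\bar m)K(m'\mid\bar m)\,d\bar m$, the incremental weight depends only on $m'$. In the idealized setting where the proposal's induced marginal is the prior $p(m)$, it reduces to $p(\vx_o\mid m')$, which is exactly \cref{eq:ResamplingWeightModelSMCText} (cf.\ App.~\ref{app:ModelSMC_WeightDerivation}). Assumption (iv) guarantees these weights are computed exactly rather than estimated, so no pseudo-marginal correction is needed.

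With this structure, the proof proceeds by induction over iterations on the statement
\[
\sum_{i=1}^N w_k^i\varphi(m_k^i)\xrightarrow{\mathbb P}\mathbb E_{\pi}[\varphi]
\]
for all bounded $\varphi$.
\begin{enumerate}
\item \textbf{Base case.} The initial population (the single model $m_0$, then the first propagation) is handled by the law of large numbers for i.i.d.\ draws together with importance sampling. Support coverage (i) ensures $\pi\ll$ the proposal law, so the importance identity holds.
\item \textbf{Propagation step.} Conditional on the resampled population, the new particles are independent by (iii). A conditional weak law, via Chebyshev with variance at most $\|\varphi\|_\infty^2/N$, transports convergence through the Markov kernel $K$.
\item \textbf{Weighting step.} Convergence of the numerator $\frac1N\sum\tilde w^i\varphi(m^i)$ and the denominator $\frac1N\sum\tilde w^i$ follows from (ii), since $\tilde w\varphi$ is bounded. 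The denominator's limit is strictly positive, so the normalized estimate converges by the continuous mapping theorem (Slutsky).
\item \textbf{Resampling step.} Systematic resampling is unbiased given the weights, and its conditional variance is bounded by $C/N$ for bounded $\varphi$. We adopt the standard convention of proving the result for multinomial resampling and then citing known results covering systematic resampling.
\end{enumerate}
Each step gives an $L^2$ error of order $N^{-1/2}$; chaining a fixed number $k$ of steps yields the $\mathcal O(1/N)$ variance via Minkowski's inequality, with constants depending on $k$, $\|\varphi\|_\infty$ and the weight bound.

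\textbf{Adaptive resampling.} When resampling is triggered by the ESS, the trigger is a random, data-dependent decision. We would handle this as in the adaptive SMC literature: the ESS ratio $\mathrm{ESS}_k/N$ converges in probability to a deterministic limit. Assuming $\tau_{\mathrm{ESS}}/N$ converges to a threshold not equal to that limit, the resampling schedule coincides with a deterministic schedule with probability tending to one, and the deterministic-schedule result applies.

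\textbf{Main obstacle.} The hardest step is making the idealization coherent. The actual LLM proposal depends on feedback $c_{k-1}$ and ancestry, so it is not a fixed Markov kernel. Moreover, the weight $p(\vx_o\mid m)$ equals the correct incremental weight only under a specific relation between $q$ and the prior $p(m)$. We would first try to resolve both issues by defining $p(m)$ as the LLM-induced marginal, as the problem formulation allows, and by putting the ancestry into the state so that the kernel is Markov. If that fails, we would restrict the statement to the target actually induced by the weights $\propto p(\vx_o\mid m)\cdot(\text{proposal marginal})$ and argue that it equals $\pi$ under the stated idealization.
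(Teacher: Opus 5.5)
Your proposal is correct under the same idealization the paper relies on, but it takes a different and more careful route.

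\textbf{The two routes.} The paper fixes the iteration and treats the particles as i.i.d.\ draws from a single proposal $q_k$ with exact weights $\tilde w=\pi/q_k$. It gets consistency from the strong law and the $\mathcal O(1/N)$ rate from bounded weights plus Chebyshev. It then simply asserts that resampling adds only $\mathcal O(1/N)$ variance, and that a path-space view (its Step~7) reduces SMC to this importance-sampling case. You instead run the classical inductive SMC argument through propagation, weighting and resampling, with an explicit backward kernel.

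\textbf{What each buys.} The paper's version buys brevity. Yours covers the points it glosses over:
\begin{itemize}
\item After resampling, particles share ancestors and are only conditionally independent, each with its own law $K(\cdot\mid m_{k-1}^{a^i})$. They are not i.i.d.\ from one $q_k$.
\item The estimator is self-normalized. The paper's identity $\operatorname{Var}(\hat I^N)=\operatorname{Var}(Z^1)/N$ is really about the unnormalized average, whereas your ratio/Slutsky step handles the normalization.
\item The Step~7 claim that incremental weights multiply to a marginal importance weight needs a specific backward kernel such as yours.
\item ESS-triggered resampling times are random. You treat this through the deterministic limit of $\mathrm{ESS}_k/N$. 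With the paper's fixed $\tau_{\mathrm{ESS}}$, that argument shows resampling is, with probability tending to one, never triggered for large $N$.
\end{itemize}
Your construction also exposes what the paper hides in its choice $\tilde w=\pi/q_k$. The likelihood $p(\vx_o\mid m)$ is the exact weight only if the pre-weighting law equals $p$: that is, $K(\cdot\mid m_0)=p$ at the first step and $\pi K=p$ afterwards.

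\textbf{A step that fails.} One step of your plan is false as stated, and the paper's Step~6 makes the same assertion. Systematic resampling is unbiased, but its conditional variance is not $\mathcal O(1/N)$ for bounded $\varphi$ in general, because it depends on the ordering of the particles. For example, take $N=2M$ particles alternating between $NW_i=3/2$ and $NW_i=1/2$, and let $\varphi$ be the indicator of the first group. The resampled average is then $1$ or $1/2$, each with probability $1/2$, against a target of $3/4$, for every $N$. So you cannot close this step by citing results for systematic resampling in arbitrary order. Instead, prove it for multinomial or stratified resampling, where independence across strata gives the $\|\varphi\|_\infty^2/N$ bound directly, or add an assumption on the particle ordering.
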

The proof is provided in App.~\ref{app:ProofTheorem}.
\Cref{thm:ConsistencyProofModelSMC} characterizes an idealized reference sampler with exact likelihoods and a prior-matching proposal. The practical algorithm deviates from this setting through surrogate likelihoods and a context-conditioned LLM proposal. Thus, the theorem is best understood as a proof of concept that the probabilistic framing of model discovery admits formal analysis through Bayesian inference over models.

\section{Experiments}

\subsection{Validation of ModelSMC for Model Discovery}
\label{sec:experiments_ValidationNoLLM}

We first validate the core probabilistic inference mechanism underlying ModelSMC in a controlled setting where the ground-truth model is known, and the model proposal is not affected by the LLM. We consider an LLM-free variant of ModelSMC on a finite model space with 20 candidates. Each candidate is a 10-dimensional Gaussian mixture models with varying numbers of components and dense covariances (details in App.~\ref{app:ExperimentalDetails_MinimalExample}). In this setting, the LLM-based propagation kernel component is replaced by uniformly sampling a random model from the candidate set.

\Cref{fig:MinimalExampleMainText} shows that the target model dominates the particle population for all evaluated configurations. Furthermore, a characteristic oscillation of the proportion of the target model and a upper bound of approximately $80\%$ can be observed. This is the expected behaviour for the evaluated propagation kernel and hyper parameter choice: resampling concentrates the population on the target model, while the random replacement of particles in the propagation step gradually reduces its proportion until the next resampling is triggered (details in App.~\ref{app:llm_free_example_additional_results}).

This experiment isolates the inference component of ModelSMC and shows that, in a finite model space, the algorithm concentrates posterior mass on the true data-generating process, behaving as a consistent posterior sampler over models. 
Thus, once the ground-truth model is proposed by the LLM, ModelSMC reliably identifies it. These results verify the inference procedure of ModelSMC.

\subsection{Quantitative Results} 
\label{sec:quantitative_results}

\begin{figure}[t]
    \centering
    \includegraphics[trim={7 7 7 7},clip,width=\columnwidth]{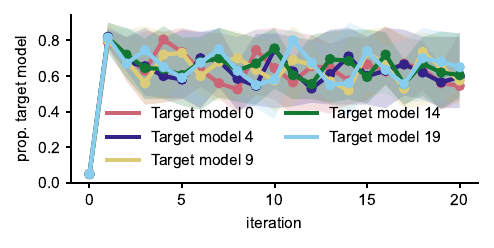}
    \caption{\textbf{Proportion of the target model in the population for LLM-free ModelSMC.} Median proportion of the target model in the population, averaged over ten random seeds, for five target models. Shaded regions depict the 100\% confidence interval of the median, estimated using 10,000 bootstrap replicates.}
    \label{fig:MinimalExampleMainText}
\end{figure}

\begin{figure*}[t]
    \centering
    \includegraphics{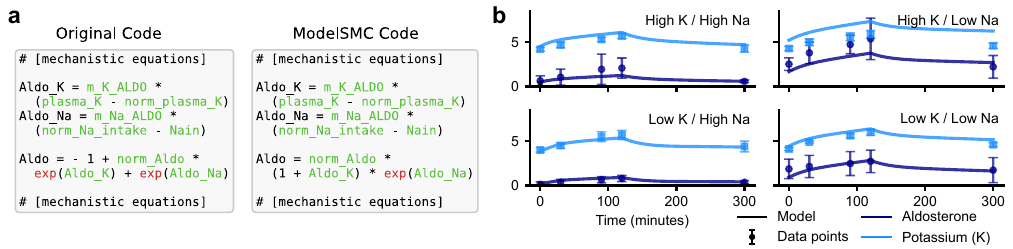}
    \caption{\textbf{Systems pharmacology kidney model with experimental data.} 
    \textbf{(a)} Original code snippet in R for the aldosterone mechanism (left) and one instance of the inferred model code (right). 
    \textbf{(b)} Posterior predictive for the code instance in (a) with real-world data points. 
    }
    \label{fig:results-kidney}
\end{figure*}

We perform a quantitative evaluation of ModelSMC across three model discovery tasks: a synthetic epidemiological system with known ground-truth dynamics (SIR; App.~\ref{app:ExperimentalDetails_SIR}), a pharmacological simulator of kidney aldosterone regulation implemented in R (\cref{sec:exp-kidney}), and a neuroscience Hodgkin-Huxley (HH) neuron simulator (\cref{sec:exp-allen}). ModelSMC targets model discovery as mechanistic equations for complex simulator models. We therefore compare against two baselines matching this scope: (i)~FunSearch+ a modification of FunSearch \citep{romera-paredes_mathematical_2024} that optimizes free model parameters and uses likelihood-based weights \citep{Castro2025cogfunsearch}, and (ii)~ModelSMC with a single particle ($N$=1) where the most recent particle is propagated ($\alpha = 0.0$), which removes the population-based inference of the full algorithm. 
Together, these two baselines represent the two dominant paradigms in LLM-based model discovery: evolutionary search (FunSearch+) and sequential non-probabilistic model exploration (ModelSMC $N$=1), the latter being representative of related sequential methods \citep{holt2025gsim,Ma2024SGA,Li2024Boxlm,shojaee2025llmsr}. Crucially, ModelSMC $N$=1 differs from ModelSMC only in removing the SMC framework, isolating the contribution of population-based probabilistic inference.
FunSearch+ is Python-specific, which prevents direct application to the R-based kidney simulator. We do not compare with classical symbolic regression because it is typically limited to low-complexity functions \citep{Schmidt2009}, and black-box predictive models, such as transformers, because they lack mechanistic interpretability.

\begin{table}[t]
\centering
\caption{\textbf{Performance evaluation.}
We evaluate the median metrics for the best particles over ten discovery runs. Results including confidence intervals are reported in Appendix \cref{tab:appendix_quantitative_results}.}
\setlength{\tabcolsep}{0pt}
\label{tab:quantitative_results}
\begin{tabularx}{\linewidth}{>{\centering\arraybackslash}p{1.0cm} *{3}{Y}ccc}
\toprule
 & \multicolumn{1}{c}{FunSearch+} & \multicolumn{1}{c}{ModelSMC} & \multicolumn{1}{c}{ModelSMC $N=1$} \\
\midrule
 & \multicolumn{3}{c}{$-\log p(\vx_o|m)~(\downarrow)$} \\
\midrule
SIR & -48794.12 & -45834.78 & -57028.36 \\
HH & 254.58 & 256.00 & 243.27 \\
Kidney & N/A & 42.85 & 36.54 \\
\bottomrule
\end{tabularx}
\end{table}

Across all tasks, ModelSMC achieves comparable quantitative performance (\cref{tab:quantitative_results}). On both synthetic and real-world tasks, it attains competitive marginal and conditional likelihoods at comparable computational budgets (\cref{fig:appendix_token_usage_vs_performance}), remaining on par with baseline methods. Unlike FunSearch+, ModelSMC handles any coding language. 

These results indicate that framing model discovery as an inference problem does not compromise quantitative performance. Instead, ModelSMC discovers competitive mechanistic models while providing the probabilistic foundation that enables additional interpretability and analysis.

\subsection{Pharmacological Kidney Model}
\label{sec:exp-kidney}

We evaluate ModelSMC on a real-world quantitative systems pharmacology (QSP) problem.
Specifically, we adapt a simulator model of plasma potassium regulation in the kidney mediated by aldosterone, a key steroid hormone \citep{Maddah2022}. The original model is implemented in R using RxODE2 \citep{wang2016tutorial,Rcore2025} and is derived from first principles of pharmacokinetics and pharmacodynamics, capturing renal excretion, tubular reabsorption, and hormonal feedback. The model's aldosterone mechanism is empirically fitted rather than biophysically derived~\citep{Maddah2022}.

We intentionally replace the aldosterone mechanism with a constant term (\cref{fig:results-kidney}a, left), thereby introducing a targeted structural misspecification. 
ModelSMC is tasked with inferring the missing regulatory dynamics from the data.
The inference problem is particularly challenging due to extremely limited data: four patient intake scenarios (high\,/\, low K $\times$ high\,/\,low Na), with a single measurement of plasma potassium and aldosterone at five time points per scenario \citep{dluhy1972studies} (details in App.~\ref{app:ExperimentalDetails_kidney}).

Across all seeds, ModelSMC assigns high posterior weight to a narrow family of aldosterone feedback mechanisms whose functional form closely matches the originally proposed model (\cref{fig:results-kidney}a, right, for one model). 
Although individual particles differ in structural details, models encoding qualitatively different regulatory hypotheses receive negligible posterior weight.
Posterior predictive simulations from high-probability models closely track observed plasma potassium and aldosterone measurements, despite the limited data (\cref{fig:results-kidney}b).

From an inference perspective, this experiment illustrates strong posterior concentration over a mechanistically coherent class rather than the selection of a single best-fitting model.
The absence of competing high-weight alternatives indicates that the available data are sufficient to rule out broad classes of hypotheses, diagnosing structural inadequacy rather than parameter uncertainty.
Compared to methods producing a single point estimate, ModelSMC yields a posterior over low-complexity mechanisms that both generalize in posterior predictive checks and quantify uncertainty over plausible mechanisms.

\subsection{Hodgkin-Huxley Model}
\label{sec:exp-allen}

\begin{figure*}[t]
    \centering
    \includegraphics{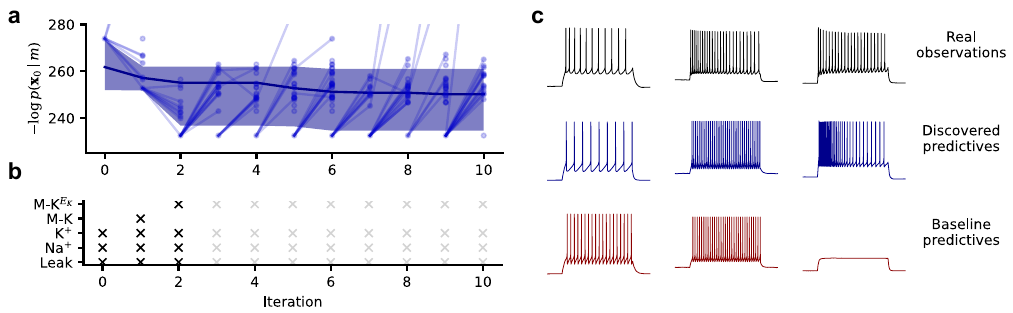}
    \caption{\textbf{Hodgkin--Huxley model with Allen cell types database data.} 
    Results are shown for one representative random seed from the 10 runs, used consistently across panels.
    \textbf{(a)} ModelSMC convergence across runs, showing the mean (solid line) and 95\% percentiles (shaded). With lower opacity, we overlay the full model ancestry of the selected run.
    \textbf{(b)} For the selected run, highlighting of inferred ion-channel mechanisms at successive improvement stages.
    \textbf{(c)} Posterior predictive voltage simulations for real observations from the Allen database, comparing the baseline HH model to the best model identified in the selected run.
    }
    \label{fig:results-allen}
\end{figure*}

Finally, we evaluate ModelSMC on a neuroscience simulator of neuronal membrane dynamics. Specifically, we consider the single-compartment Hodgkin-Huxley~(HH) model of action potential generation \citep{hodgkin1952,pospischil2008minimal}, which describes membrane voltage dynamics via nonlinear ion-channel conductances and gating variables.
As baseline, we consider the simulator from \citet{gonccalves2020training}, consisting of eight biophysical parameters and explicit ion channels for sodium, potassium, and leak current. While this model provides a strong description of neuronal dynamics, it can exhibit systematic discrepancies when fitted to real recordings \citep{gao2023generalized,bernaerts2023combinedhodgkinhuxley}. 
Observations consist of membrane voltage traces summarized by seven standard summary statistics \citep{gonccalves2020training}. We consider 10 real voltage recordings from the Allen Cell Types Database \citep{allen_cell_types_2015}.

In contrast to prior work focusing exclusively on parameter inference \citep{gonccalves2020training,vetter2025effortless}, we treat the HH model as a structural inference problem. Taking the baseline HH model, ModelSMC is tasked with inferring additional ion-channel mechanisms that improve model specification and reduce systematic mismatch to data.
This setting is particularly challenging due to the limited number of real-world observations, the already high fidelity of the baseline model, and the need to detect subtle but consequential structural deficiencies rather than gross misspecification (details in App. \ref{app:ExperimentalDetails_allen}).

ModelSMC shows rapid posterior concentration while continuing to explore multiple high-likelihood structural variants over successive iterations (\cref{fig:results-allen}a). 
For a representative run, particle ancestry reveals a diverse sampling of the model space with distinct ion-channel extensions of the baseline HH model (\cref{fig:results-allen}a, opaque).
Across runs, high posterior weight is consistently assigned to models introducing an M-type slow potassium current, also paired with a secondary channel such as persistent sodium (I$_{NaP}$), A-type, or HCN (I$_h$).
Within these families, refinements such as a corrected potassium reversal potential (E$_K$) receive increased posterior weight and improve posterior predictive agreement with recorded voltage traces, particularly for spike-frequency adaptation (\cref{fig:results-allen}b,c).

Rather than identifying a single revised HH model, ModelSMC yields a posterior over mechanistically interpretable extensions, with comparable explanatory power.
The posterior exposes residual non-identifiability in ion-channel composition while isolating robust conclusions, i.e., the extension with additional slow currents.
Thus, posterior mass serves as a diagnostic: low-weight regions indicate genuine structural mismatch, while clusters of high-probability models reveal symmetries and ambiguities that are invisible to single-model discovery.

\subsection{Posterior Analysis}
\label{sec:exp-posterior}

\begin{figure*}[t]
    \centering
    \includegraphics[trim={0 0 0 17}, clip, width=\textwidth]{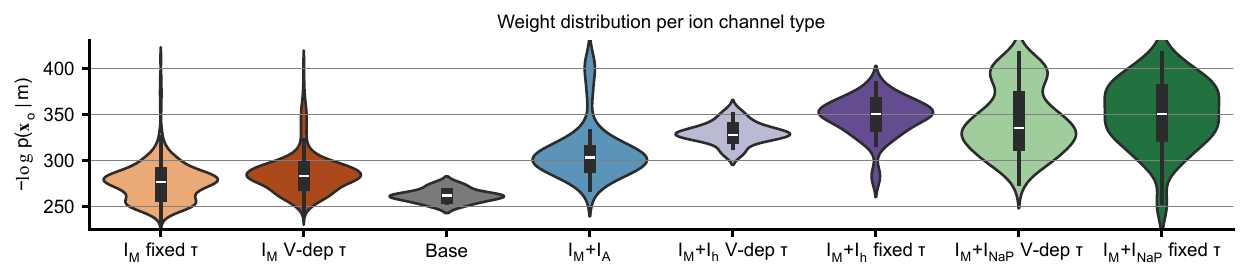}
    \caption{\textbf{Posterior mass analysis.}
    Distribution of weights per ion-channel subtype, pooled across all 1,445 particles and 10 seeds of the Hodgekin-Huxley experiment.
    Channel types are sorted and colored by family.
    }
    \label{fig:results-posterior}
\end{figure*}

Particle weights in ModelSMC represent a posterior distribution, enabling evaluation of entire families of modelling hypotheses beyond single-model selection.
We demonstrate this for the ion-channel extensions from \cref{sec:exp-allen}, showing which structural choices are supported by the data and which are ruled out.

We apply a three-step LLM-assisted pipeline to all $1{,}445$ particles across 10 seeds.
First, a stratified sample of $\approx15\%$ of particles is used to extract free-form descriptions of the added channels, and consolidated into a two-level taxonomy of 11 structural subtypes across families (I$_M$, I$_{NaP}$, I$_h$, I$_A$, and multi-channel combinations).
Second, all particles are classified against this taxonomy.
Third, weights are compared across subtypes (details in App.~\ref{app:PosteriorAnalysis_allen}).

Among the added channels, I$_M$ variants (M-type K$^+$ currents) consistently achieve the lowest weights across subtypes (\cref{fig:results-posterior}), while I$_M$ with a second channel including I$_{NaP}$, I$_h$, and I$_A$ variants receive substantially higher weights.
The unextended base model itself receives comparably low weights, reflecting its already high fidelity, but is consistently outperformed by I$_M$ extensions.
This ranking is stable across independent runs, with I$_M$ placing first in 9 out of 10 seeds (App.~\cref{fig:results-posterior-appendix}), demonstrating that ModelSMC arrives at reproducible posterior conclusions rather than spreading mass uniformly across alternatives (details in App.~\ref{app:PosteriorAnalysis_allen}).

\subsection{Ablation Study}
\label{sec:exp-ablation}

\begin{table}[h]
\centering
\caption{\textbf{Ablation study summary on the Hodgkin-Huxley task.}
Median $-\!\log p(\vx_o|m)$ with $90\%$ confidence intervals for each variant relative to the default (first row). The ablation results are based on 5 runs. Underlined cells indicate configurations outperforming the default. The confidence intervals are estimated using 10,000 bootstrap replicates.
}
\label{tab:ablation_main}
\setlength{\tabcolsep}{4pt}
\begin{tabular}{llr}
\toprule
Axis & Ablation&$-\log p(\vx_o|m)~(\downarrow)$\\
\midrule
Default&&256.00 {\scriptsize [255.05, 262.03]}\\
\midrule
Weighting&MSE weights&264.65 {\scriptsize [250.90, 341.82]}\\
\midrule
Simulations&1000&270.07 {\scriptsize [250.34, 292.69]}\\
&500&263.89 {\scriptsize [254.03, 284.50]}\\
&200&266.91 {\scriptsize [264.11, 292.58]}\\
\midrule
Prompt&Reduced&262.18 {\scriptsize [249.96, 282.15]}\\
\midrule
Feedback&LLM + metrics&\underline{254.82} {\scriptsize [233.71, 262.89]}\\
&Metrics only &258.30 {\scriptsize [245.49, 262.89]}\\
\midrule
Pool size&$N=150,K=5$&267.01 {\scriptsize [260.03, 278.91]}\\
&$N=10,K=75$&258.25 {\scriptsize [254.10, 277.81]}\\
&$N=5,K=150$&\underline{248.02} {\scriptsize [237.97, 267.01]}\\
\midrule
LLM&GPT-5 mini&\underline{254.49} {\scriptsize [242.22, 256.04]}\\
\bottomrule
\end{tabular}
\end{table}

We evaluate the sensitivity of ModelSMC to its core design choices on the Hodgkin-Huxley task (\cref{sec:exp-allen}), systematically varying six axes while holding all others fixed at the default configuration.
We vary the \emph{weighting scheme} (marginal likelihood vs.\ MSE) to test whether the principled Bayesian weights are necessary.
We reduce the \emph{simulation budget} for likelihood estimation ($5{,}000$ to $200$--$1{,}000$) to understand the compute--quality trade-off. 
We simplify the \emph{prompt} to assess how much prompt engineering contributes.
We change the \emph{feedback format} (LLM narrative, metrics, or both) to determine whether richer feedback improves LLM proposals. 
We vary \emph{pool size} $N$ and number of iterations $K$ at a fixed total budget of $K{\cdot}(1{-}\alpha){\cdot}N = 150$ new particles to study the depth-vs-diversity trade-off. 
Finally, we replace the \emph{LLM backbone} with GPT-5 mini to verify robustness across models.
Overall, no single variation causes a significant degradation in performance (\cref{tab:ablation_main}), indicating that ModelSMC is robust to these design choices (full results in App.~\ref{app:ablation_study_allen}).

\section{Discussion}

We formulated automated discovery of mechanistic models as Bayesian inference over programs. We introduced ModelSMC, an SMC-based algorithm that uses LLMs as proposal distributions. Unlike prior agentic systems, ModelSMC directly approximates the posterior $p(m|\vx_o)$ by marginalizing over parameters. This approach produces a weighted population of models, revealing which hypotheses concentrate posterior mass, which remain uncertain, and which are non-identifiable. Across synthetic and real-world systems, this enables diagnosing which mechanistic hypotheses are strongly supported, ruled out, or remain as plausible alternatives under limited data.

A natural question is whether automated discovery systems generate genuinely novel scientific insights or primarily rediscover known mechanisms. We emphasize that our primary contribution is methodological: we introduce a general inference framework in which such questions can be meaningfully posed. Whether novel mechanisms emerge depends on the application, data, and modeling assumptions, and is orthogonal to the validity of the framework itself.
Practically, ModelSMC applies to settings where candidate models can be executed and admit reliable likelihood approximation. In such domains, probabilistic reasoning over models is well-defined and informative.

\textbf{Limitations.}
We introduce an inference framework for model discovery, instantiated by ModelSMC. Limitations arise at algorithmic and framework levels:
First, ModelSMC is computationally expensive due to repeated simulation and likelihood estimation for each candidate model. This could be alleviated by multi-fidelity simulation, adaptive particle budgets, or early-rejection strategies from SMC literature. 
Second, posterior weights rely on surrogate likelihoods, which may introduce bias under misspecification. Incorporating uncertainty-aware or robust likelihood estimation is an important direction.
Third, the discovery process is bottlenecked by the LLM's ability to propose relevant model structures. Improvements in LLM capability, domain-specific pretraining, or retrieval-augmented generation will directly enhance discovery performance \citep{lewis2020retrieval}.
Finally, while posterior weights enable principled ranking and clustering of hypotheses (\cref{sec:exp-posterior}), models remain high-dimensional programs without an explicit geometry. Developing notions of similarity or clustering over model space would enable richer posterior analysis and enhance inference with semantic structure over program spaces.

\textbf{Conclusion.}
By reframing LLM-driven model discovery as probabilistic inference, we provide a framework that unifies proposal, refinement, and selection under a single probabilistic objective. 
ModelSMC provides one concrete instantiation of this view, combining LLM-based proposals with likelihood-based weighting to approximate a posterior over mechanistic models.
Crucially, this connection enables direct use of established Monte Carlo methodology, including particle rejuvenation moves \citep{gordon1993novel}, SMC samplers for static targets \citep{del2006smcsampler}, adaptive resampling schemes \citep{doucet2001sequential,johansen2009tutorial,del2012adaptive}, and annealed or tempered objectives \citep{neal2001annealed}. 
More broadly, this perspective situates recent LLM-based discovery systems within a well-studied inference framework, providing a common language for comparison, analysis, and systematic development of future methods.

\section*{Acknowledgements}
We thank Manuel Gloeckler for feedback on the manuscript, members of the Mackelab for discussions, and Julian Schwab for domain expertise on the QSP experiment. We thank Anthropic for API credits provided through the AI for Science Program. This work was funded by the German Research Foundation (DFG) under Germany’s Excellence Strategy – EXC number 2064/1 – 390727645 and SFB 1233 `Robust Vision' (276693517), and the Boehringer Ingelheim AI \& Data Science Fellowship Program. SW is a member of the International Max Planck Research School for Intelligent Systems (IMPRS-IS).

\section*{Impact Statement}
This paper presents work whose goal is to advance the field of Machine Learning. There are many potential societal consequences of our work, none of which we feel must be specifically highlighted here.

\bibliography{references}
\bibliographystyle{icml2026}

\newpage

\appendix

\onecolumn

\newcommand{\appsection}[1]{%
  \section{#1}%
  \edef\appsec@anchor{appsec.\thesection}%
  \hypertarget{\appsec@anchor}{}%
  \addtocontents{toc}{\protect\contentsline{section}%
    {\protect\numberline{\thesection}#1}{\thepage}{\appsec@anchor}}%
}
\newcommand{\appsubsection}[1]{%
  \subsection{#1}%
  \edef\appsec@anchor{appsec.\thesubsection}%
  \hypertarget{\appsec@anchor}{}%
  \addtocontents{toc}{\protect\contentsline{subsection}%
    {\protect\numberline{\thesubsection}#1}{\thepage}{\appsec@anchor}}%
}
\etocsettocstyle{\section*{Appendix Contents}\vspace{-6pt}}{}
\etocsetnexttocdepth{subsection}
\localtableofcontents
\newpage

\setcounter{section}{0}
\renewcommand{\thesection}{\Alph{section}}

\renewcommand{\thefigure}{\Alph{section}-\arabic{figure}}
\setcounter{figure}{0}
\renewcommand{\thetable}{\Alph{section}-\arabic{table}}
\setcounter{table}{0}
\renewcommand{\theequation}{\Alph{section}-\arabic{equation}}
\setcounter{equation}{0} %

\appsection{Software and Computational Resources}\label{app:SoftwareComputationalResources}

All methods are implemented in Python. For general computations, NumPy \citep{harris2020array} and PyTorch \citep{paszke2019pytorchimperativestylehighperformance} are used. Plotting is performed using Matplotlib \citep{Hunter2007Matplotlib}. To store and handle experimental results, we use pandas \citep{reback2020pandas}. NLE and NPE are performed using SBI's \citep{BoeltsDeistler_sbi_2025} implementation of the corresponding methods. In the kidney experiment (\cref{sec:exp-kidney}), the models are implemented in R \citep{Rcore2025}. To parallelize the evaluation of multiple particles in ModelSMC, we use Ray \citep{Moritz2018Ray}. LLMs are accessed via DSPy \citep{khattab2022demonstrate,khattab2024dspy}, which automatically assembles prompts and returns structured outputs with named fields. Unless stated otherwise, we run all experiments with \texttt{Claude Sonnet 4.6} \citep{Anthropic2026ClaudeSonnet46intro} as language model and have an ablation using \texttt{GPT-5-mini} \citep{OpenAI2025chatgpt5intro}. Within individual discovery runs, the same language model is used for propagation and feedback generation (App.~\ref{app:LLMDetails}). Code is available at \url{https://github.com/mackelab/ModelSMC}.

NPE-PFN (App.~\ref{app:Method_ParameterEstimation}) and NLE-PFN (App.~\ref{app:Method_WeightEstimation}) are run on an A100 GPU with 40GB of memory with associated CPUs. NLE and NPE are run on the CPUs only. The particle propagation and evaluation in ModelSMC can be parallelized. In our experiments, we run up to five particles in parallel. In contrast, the baseline method ModelSMC $N$=1 and FunSearch+ are run sequentially.

\appsection{Derivation of ModelSMC Weights from a Static Bayesian Target}
\label{app:ModelSMC_WeightDerivation}

This section derives the weighting rule used in ModelSMC from a Bayesian posterior over model programs.
The derivation makes explicit how likelihood-based weights arise from an SMC construction, with an underlying static inference problem.

\paragraph{Inference target.}
Given observed data $\vx_o = \{(x_o^j, c_o^j)\}_{j=1}^M$, the target posterior over models is
\begin{align}
    p(m \mid \vx_o)
    \;\propto\;
    p(\vx_o \mid m)\, p(m),
    \label{eq:PosteriorModel_App}
\end{align}
with marginal likelihood
\begin{align}
    p(\vx_o \mid m)
    =
    \prod_{j=1}^M p(x_o^j \mid m),
    \qquad
    p(x_o^j \mid m)
    =
    \int p(\theta)\, p(x_o^j \mid m, \theta, c_o^j)\, d\theta.
    \label{eq:MarginalLikelihood_App}
\end{align}
The prior $p(m)$ is implicit and corresponds to the marginal distribution over models induced by the LLM-based generative process in the absence of conditioning on observed data. 

For notational simplicity, we drop the explicit conditioning on the observed context $c_o^j$, including particle-specific feedback variables, for the remainder of the derivation.
This corresponds to an idealized setting in which the model proposal distribution depends only on the previously generated model.
In practice, the proposal is conditioned on additional contextual information, which leads to approximate rather than exact importance weights (see Remarks below at the end of this section).

\paragraph{Artificial SMC target distribution.}
To apply SMC, we introduce an artificial sequence of latent variables
$(m_1,\dots,m_K)$, where each $m_k$ represents a candidate model.
We define the unnormalized target distribution on the extended space as
\begin{align}
    \tilde\pi(m_{1:K})
    :=
    p(m_{1:K}, \vx_o)
    =
    \prod_{k=1}^K
    p(m_k \mid m_{1:k-1})\,
    p(\vx_o \mid m_k).
    \label{eq:ExtendedTarget_App}
\end{align}
The corresponding normalized target is
\begin{align}
    \pi(m_{1:K})
    =
    \frac{\tilde\pi(m_{1:K})}{Z},
    \qquad
    Z = p(\vx_o)^K.
\end{align}
Marginalizing $\pi(m_{1:K})$ over $m_{1:k-1}$ yields the posterior over models under the LLM-induced prior, $p_{\text{LLM}}(m_k \mid \vx_o)$ for every $k$.

Thus, the sequence is purely algorithmic and does not represent a temporal evolution of the system.

\paragraph{Proposal distribution.}
At iteration $k$, ModelSMC generates a new model using the same LLM-based generative mechanism.
In practice, this proposal is conditioned on the previous model, particle-specific feedback, and task context, yielding
\begin{align}
    q_k(m_k \mid m_{1:k-1})
    :=
    p_{\mathrm{LLM}}(m_k \mid m_{1:k-1}, c_{1:k-1}, c_{\text{task}}).
\end{align}
For the purpose of the derivation, we absorb all contextual variables into the conditioning and consider an idealized proposal that depends only on $m_{1:k-1}$.

\paragraph{Incremental importance weights.}
The standard SMC incremental weight is
\begin{align}
    \tilde w_k
    &=
    \frac{
        \tilde\pi(m_{1:k})
    }{
        \tilde\pi(m_{1:k-1})\,
        q_k(m_k \mid m_{1:k-1})
    }.
\end{align}
Using \cref{eq:ExtendedTarget_App}, we obtain
\begin{align}
    \tilde w_k
    &=
    \frac{
        \prod_{t=1}^k p(m_t \mid m_{1:t-1})\, p(\vx_o \mid m_t)
    }{
        \Big(\prod_{t=1}^{k-1} p(m_t \mid m_{1:t-1})\, p(\vx_o \mid m_t)\Big)\,
        q_k(m_k \mid m_{1:k-1})
    } \\
    &=
    \frac{
        p(m_k \mid m_{1:k-1})\, p(\vx_o \mid m_k)
    }{
        q_k(m_k \mid m_{1:k-1})
    }.
\end{align}
We refer to this as a prior-matching proposal when the proposal coincides with the conditional generative distribution defining the implicit model prior in an idealized setting, i.e.,
\begin{align}
    q_k(m_k \mid m_{1:k-1})
    =
    p(m_k \mid m_{1:k-1}).
\end{align}
Under this assumption, the transition terms cancel exactly, yielding the idealized incremental weight
\begin{align}
    \tilde w_k
    =
    p(\vx_o \mid m_k).
    \label{eq:IncrementalWeightFinal_App}
\end{align}
In practice, the proposal is conditioned on additional contextual information (e.g., data-dependent feedback, task context, or finite context windows), and the exact incremental weight would take the form
\begin{align}
    \tilde w_k^{\mathrm{true}}
    =
    p(\vx_o \mid m_k)\,
    \frac{p(m_k \mid m_{1:k-1})}
         {q_k(m_k \mid m_{1:k-1}, \vx_o, c_{1:k-1})}.
\end{align}
Because this context-conditioned proposal density is implicit and intractable for large language models, ModelSMC omits the correction factor and uses $p(\vx_o | m_k)$ as a practical, approximate importance weight.

\paragraph{Resulting weighting rule.}
Thus, under a prior-matching proposal, the incremental SMC weight at iteration $k$
is precisely the marginal likelihood of the newly proposed model.
This yields the weighting rule used in ModelSMC as the likelihood
\begin{align}
    \tilde w(m)
    \propto
    p(\vx_o \mid m).
\end{align}

\paragraph{Remarks.}
\begin{itemize}
    \item \textbf{Implicit model prior.}  
    The prior $p(m)$ is implicit and defined by the marginal distribution over models induced by the LLM generative process. The proposal distribution corresponds to its conditional transitions.
    ModelSMC therefore performs inference under this implicit prior rather than an explicit analytic one.
    \item \textbf{Approximate prior matching.}
    The derivation assumes an idealized prior-matching proposal that ignores contextual conditioning.
    In practice, the LLM proposal depends on feedback and task-specific context, so the exact importance ratio for the SMC weight would additionally include a proposal correction term.
    Because this proposal density is implicit and intractable, ModelSMC uses likelihood-based weights as a practical approximation, corresponding to inference under an implicit LLM-induced prior.
    \item \textbf{Static target.}  
    The sequence $(m_1,\dots,m_K)$ is artificial and introduced solely to enable SMC;
    all marginals correspond to the same static posterior $p(m \mid \vx_o)$.
    \item \textbf{Tempering extension.}
    The current formulation uses the full likelihood $p(\vx_o \mid m_k)$ at every iteration. This can be generalized by introducing a schedule $\beta_1 \leq \cdots \leq \beta_K = 1$ and replacing the target at iteration $k$ with $p(\vx_o \mid m_k)^{\beta_k}$, recovering a standard tempered SMC sampler \citep{del2006smcsampler}. The existing temperature parameter $\tau$ (\cref{eq:TenmperatureScaledResamplingWeight}) could be adapted to serve this role.
\end{itemize}

\appsection{Proof of ModelSMC Convergence}\label{app:ProofTheorem}

\begin{theorem}[restated \cref{thm:ConsistencyProofModelSMC}]
    Let $\pi(m) \propto p(\vx_o|  m)p(m)$ be a fixed target distribution over models, and let $q(m' |  m)$ denote an idealized proposal kernel induced by the propagation step.
    Assume 
    (i) \emph{support coverage}, i.e., any $m$ with $\pi(m)>0$ can be generated with non-zero probability; 
    (ii) uniformly bounded importance weights; 
    (iii) conditional independence of propagated particles given resampling; and
    (iv) \emph{exact likelihood evaluation}, i.e., the importance weights are computed using the true marginal likelihood $p(\vx_o | m)$.
    Let $\{(m^{i}, w^{i})\}_{i=1}^N$ be the particle population at a fixed iteration. Then, for any bounded test function~$\varphi$,
    \begin{align}
        \sum_{i=1}^N w^{i} \varphi(m^{i}) \xrightarrow{\mathbb{P}} \mathbb{E}_{\pi}[\varphi(m)] \quad \text{as} \quad N \to \infty,
    \end{align}
    with asymptotic variance $\mathcal{O}(1/N)$. The result holds in the presence of (adaptive) resampling.
\end{theorem}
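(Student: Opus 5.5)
The plan is to reduce the statement to the classical consistency result for Feynman--Kac particle approximations \citep{chopin2004central,moral2004feynman}. We do this by writing the idealized ModelSMC iteration as a sequential importance sampling/resampling scheme on the extended space of App.~\ref{app:ModelSMC_WeightDerivation}.

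\textbf{Step 1: reduction to a standard SMC sampler.} Take the extended target $\tilde\pi(m_{1:K})$ from \cref{eq:ExtendedTarget_App}. Its $k$-th marginal is $\pi$, which is the posterior under the implicit prior. The propagation kernel is the mixture $\alpha\delta_{m}+(1-\alpha)q(\cdot\mid m)$, and we take $q$ to be prior-matching. By the derivation leading to \cref{eq:IncrementalWeightFinal_App}, the incremental weight is then exactly $p(\vx_o\mid m_k)$. Assumption (iv) guarantees that the algorithm computes precisely this quantity, so the algorithm is a genuine sequential importance resampling scheme for a Feynman--Kac flow. The potential is $G(m)=p(\vx_o\mid m)$ and the Markov kernel is the mixture. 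Assumption (i) ensures the proposal dominates the target, so the importance ratios are well defined and $\pi$ is reachable.

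\textbf{Step 2: induction over iterations.} We prove by induction on $k$ that, for every bounded $\varphi$, $\sum_i w_k^i\varphi(m_k^i)\to\eta_k(\varphi)$ in probability and in $L^2$ with error $O(N^{-1/2})$. Here $\eta_k$ is the Feynman--Kac marginal, identified with $\mathbb{E}_\pi[\varphi]$ by Step 1. Each iteration is handled in three sub-steps.
\begin{enumerate}
\item \emph{Resampling.} By (iii), given the past, systematic (or multinomial) resampling is conditionally unbiased for the weighted measure. Its conditional variance is bounded by $\|\varphi\|_\infty^2/N$, so it adds $O(N^{-1/2})$ error.
\item \emph{Propagation.} Given the ancestors, the particles are conditionally independent by (iii). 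A conditional Chebyshev/Marcinkiewicz bound then transfers convergence from $\varphi$ to $M\varphi$, where $M\varphi$ is again bounded.
\item \emph{Reweighting.} We write the self-normalized estimator as a ratio $\frac{N^{-1}\sum G\varphi}{N^{-1}\sum G}$. Numerator and denominator converge separately because $G\varphi$ and $G$ are bounded by (ii). The denominator's limit $\eta(G)=p(\vx_o)>0$. A Slutsky/continuous-mapping argument then gives convergence of the ratio, and a delta-method bound gives $O(1/N)$ variance, with constant roughly $\|G\|_\infty^2\|\varphi\|_\infty^2/\eta(G)^2$.
\end{enumerate}

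\textbf{Step 3: adaptive resampling.} The resampling decision depends on $\mathrm{ESS}_k$, which is itself random. We would invoke the adaptive-resampling result of \citet{del2012adaptive}: the ESS ratio converges to a deterministic limit, so asymptotically the random decision coincides with a deterministic schedule except on events of vanishing probability. Consistency then follows from the fixed-schedule case. For the ``no resample'' branch, Step 2 applies with the resampling sub-step omitted, since weights simply accumulate multiplicatively and remain bounded over a fixed finite number of iterations.

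\textbf{Main obstacle.} We expect Step 1 to be the delicate point. We must make precise that the mixture kernel with cloning, combined with the likelihood-only weight, targets $\pi$ rather than some distorted marginal. The identification of the Feynman--Kac marginal with $\pi$ rests on the paper's construction, in which each $m_k$ is marginally $\pi$ under $\tilde\pi$. With the cloning component, the prior transition must be taken to be the same mixture for the cancellation to hold. We would therefore define $p(m_k\mid m_{1:k-1})$ to be the full mixture kernel and verify that the LLM prior $p(m)$ is its stationary marginal. Once this is fixed, the remainder is the standard argument of \citet{chopin2004central}.
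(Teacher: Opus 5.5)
The gap is in Step~1, at ``its $k$-th marginal is $\pi$''. The repair you sketch under ``Main obstacle'' does not close it.

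The time-$k$ marginal of $\tilde\pi(m_{1:K})=\prod_k p(m_k\mid m_{1:k-1})\,p(\vx_o\mid m_k)$ equals $\pi$ only when the prior transition is an independence kernel, $p(m_k\mid m_{1:k-1})=p(m_k)$. The normalizer $Z=p(\vx_o)^K$ in App.~\ref{app:ModelSMC_WeightDerivation} also requires this.

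For a genuine Markov kernel $M$, your Step~2 induction actually controls the Feynman--Kac flow $\hat\eta_k\propto(\hat\eta_{k-1}M)\,G$ with $G(m)=p(\vx_o\mid m)$. The resampled population approximates $\hat\eta_{k-1}=\pi$. Starting from there, $\hat\eta_k=\pi$ holds if and only if $\pi M\propto p$ on $\{G>0\}$. The stationarity condition $pM=p$ that you propose to check is therefore the wrong condition.

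Cloning violates the right condition. For $Q(m,\cdot)=p$ one gets $\pi M=\alpha\pi+(1-\alpha)p\neq p$ whenever $\alpha>0$ and $\pi\neq p$. The reason is that a cloned particle, already distributed as $\pi\propto pG$, is reweighted by $G$ a second time and contributes mass $\propto\alpha\,pG^2$.

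A two-model example makes this concrete. Take $p=(\tfrac12,\tfrac12)$, $G=(1,2)$, so $\pi=(\tfrac13,\tfrac23)$, and $\alpha=0.8$. The propagated population is distributed as $(\tfrac{11}{30},\tfrac{19}{30})$, and the weighted estimator converges to $(\tfrac{11}{49},\tfrac{38}{49})\neq\pi$, even though $pM=p$ holds.

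So your induction, while otherwise standard, converges to $\hat\eta_k$ and not to $\mathbb{E}_\pi[\varphi]$. The same slip appears in two other places:
(a) in Step~2(c), where $\eta(G)=p(\vx_o)$ presumes $\eta=p$;
(b) in Step~3, where accumulating weights multiplicatively gives a model cloned $k$ times the weight $G(m)^k$. Algorithm~\ref{alg:modelsmc} in fact resets the weights to $G$ rather than accumulating them.

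This is not a technicality of your write-up. With weights equal to $G$, $\alpha>0$ and resampling, the conclusion itself fails. Assumption~(iv) therefore has to be read as exact importance ratios built from $p(\vx_o\mid m)$, which is how the paper uses it. The paper's core argument (Steps~1--5) works as follows:
(a) it fixes an iteration;
(b) it treats the propagated particles as i.i.d.\ draws from a proposal $q_k$ whose support covers that of $\pi$;
(c) it weights them by $\pi(m)/q_k(m)$;
(d) it concludes by self-normalized importance sampling with bounded weights plus Chebyshev;
(e) it argues separately that resampling only adds $\mathcal{O}(1/N)$ variance.
That ratio reduces to the likelihood only when $q_k$ is the prior itself.

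To salvage your Feynman--Kac route, you have three options.
(a) Restrict the idealized kernel to $\alpha=0$ and $q(\cdot\mid m)=p$. Propagated particles are then i.i.d.\ $p$ whatever the resampling history, Step~3 becomes unnecessary, and your ratio/Slutsky argument from Step~2(c) alone proves the claim.
(b) Weight by the exact ratio relative to the proposal actually used. After resampling this is $\pi/(\hat\eta_{k-1}M)$ rather than $G$.
(c) Use a $\pi$-invariant move with unit incremental weights, as in \citet{del2006smcsampler}. There cloning is harmless as a lazy component of the kernel.
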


\paragraph{Discussion of assumptions.}
\begin{itemize}
    \item \textbf{Assumption~(i)} (support coverage) requires that the proposal induced by the LLM assigns non-zero probability to any model with non-zero posterior mass. This assumption cannot be verified formally and depends critically on the expressive capacity and training of the underlying LLM. In practice, we assume that a large class of scientifically meaningful models, including those commonly used in the literature, their combinations, and moderate extrapolations thereof, lie within the effective support of the proposal distribution. The use of textual feedback and task-specific prompts allows the proposal to be steered toward relevant regions of model space, but does not guarantee full coverage. If this assumption is violated, ModelSMC converges to the posterior restricted to the subset of models reachable by the proposal, resulting in asymptotic bias.
    \item \textbf{Assumption~(ii)} (uniformly bounded importance weights) ensures finite variance of the Monte Carlo estimator.
    While difficult to verify formally, it is encouraged in ModelSMC by finite surrogate likelihood outputs. 
    \item \textbf{Assumption~(iii)} is standard in SMC analyses and is satisfied by resampling followed by conditionally independent propagation \citep{moral2004feynman}. In ModelSMC, propagation additionally depends on particle context, so strict conditional independence is not fully met in practice. The theorem therefore characterizes an idealized setting that serves as a reference point for the algorithm’s behavior.
    \item \textbf{Assumption~(iv)} formalizes an idealized setting in which the marginal likelihood $p(\vx_o | m)$ is available exactly. In ModelSMC, this likelihood is approximated using a surrogate density estimator using NLE \citep{papamakarios2019sequential} or NLE-PFN (details in App.~\ref{app:Method_WeightEstimation}). The theorem therefore characterizes the asymptotic behavior of an idealized sampler and serves as a reference point for the practical algorithm, as is standard in analyses of likelihood-free SMC methods \citep{del2006smcsampler}.
\end{itemize}

The following argument establishes consistency for an idealized importance sampling or SMC scheme in which the proposal distribution $q_k$ is known and used to compute exact importance weights.
In ModelSMC, the proposal is implicit and context-dependent, so the theorem applies to the corresponding idealized sampler that motivates the practical algorithm.

\begin{proof}
The proof follows classical importance sampling arguments and their extension to SMC samplers for static target distributions, adapted to the ModelSMC setting \citep{robert1999monte,moral2004feynman,del2006smcsampler}.

\paragraph{Step 1: Target expectation and proposal identity.}
Fix an iteration $k$. Conditional on the resampling step at iteration $k-1$, the propagation step induces a proposal distribution
\begin{align}
    q_k(m) := p_{\mathrm{LLM}}(m \mid m_{k-1}),
\end{align}
where $m_{k-1}$ denotes a resampled ancestor. By assumption~(i), the support of $q_k$ contains the support of $\pi$.

For any bounded test function $\varphi$, we have
\begin{align}
    \mathbb E_{\pi}[\varphi(m)]
    = \int \varphi(m)\,\pi(m)\,dm 
    = \int \varphi(m)\,\frac{\pi(m)}{q_k(m)}\,q_k(m)\,dm 
    = \mathbb E_{q_k}\!\left[\frac{\pi(m)}{q_k(m)}\,\varphi(m)\right].
\end{align}

\paragraph{Step 2: Importance sampling estimator.}
Let $m^1,\dots,m^N$ be iid samples from $q_k(m)$.
Define (unnormalized) importance weights
\begin{align}
    \tilde w^i = \frac{\pi(m^i)}{q_k(m^i)}, 
    \qquad
    w^i = \frac{\tilde w^i}{\sum_{j=1}^N \tilde w^j}.
\end{align}
The corresponding estimator is
\begin{align}
    \hat I^N := \sum_{i=1}^N w^i\,\varphi(m^i).
\end{align}

By the strong law of large numbers,
\begin{align}
    \sum_{i=1}^N  w^i\,\varphi(m^i)
    &\xrightarrow{\text{a.s.}}
    \mathbb E_{q_k}\!\left[\frac{\pi(m)}{q_k(m)}\,\varphi(m)\right]
    = \mathbb E_{\pi}[\varphi(m)], \\
    \sum_{i=1}^N  w^i
    &\xrightarrow{\text{a.s.}}
    \mathbb E_{q_k}\!\left[\frac{\pi(m)}{q_k(m)}\right]
    = \int \pi(m)\,dm = 1.
\end{align}
Hence $\hat I^N \to \mathbb E_{\pi}[\varphi(m)]$ almost surely.

\paragraph{Step 3: Variance decomposition.}
Define the random variables
\begin{align}
    Z^i := \frac{\pi(m^i)}{q_k(m^i)}\,\varphi(m^i),
    \qquad m^i \sim q_k \text{iid}
\end{align}
Then
\begin{align}
    \frac{1}{N}\sum_{i=1}^N Z^i
    \;\xrightarrow{\text{a.s.}}\;
    \mathbb E_{\pi}[\varphi(m)].
\end{align}
Moreover,
\begin{align}
    \operatorname{Var}\!\left(\hat I^N\right)
    = \frac{1}{N}\operatorname{Var}(Z^1).
\end{align}

\paragraph{Step 4: Variance goes to zero.}
By Assumption~(ii) (uniformly bounded importance weights) and boundedness of $\varphi$, there exists $C<\infty$ such that
\begin{align}
    \left|\frac{\pi(m)}{q_k(m)}\,\varphi(m)\right| \le C
    \quad \text{for all } m.
\end{align}
Hence $\operatorname{Var}(Z^1) \leq C^2$, and therefore
\begin{align}
    \operatorname{Var}\!\left(\hat I^N\right)
    = \operatorname{Var}\!\left(\frac{1}{N}\sum_{i=1}^N Z^i\right)
    \leq \frac{C^2}{N}
    = \mathcal O(1/N) \xrightarrow{} 0.
\end{align}

\paragraph{Step 5: Convergence in probability (Chebyshev).}
For any $\varepsilon>0$, Chebyshev’s inequality yields
\begin{align}
    \mathbb P\!\left(
    \left|
    \hat I^N - \mathbb E_{\pi}[\varphi(m)]
    \right| \ge \varepsilon
    \right)
    &\le
    \frac{\operatorname{Var}\!\left(\hat I^N\right)}{\varepsilon^2} 
    \le \frac{C^2}{N\varepsilon^2},
\end{align}
which converges to zero as $N\to\infty$.
This establishes convergence in probability of the importance sampling estimator to $\mathbb E_{\pi}[\varphi(m)]$.

\paragraph{Step 6: Effect of resampling.}
The resampling step replaces the weighted particle system by an equally weighted one whose empirical measure is unbiased.
Under Assumption~(iii) (conditional independence of propagated particles given resampling) and bounded importance weights, resampling does not change the limit of the estimator and introduces additional variance of order $\mathcal O(1/N)$ only.
Therefore, resampling preserves convergence in probability and the Monte Carlo rate.
When adaptive resampling is used, i.e., resampling is triggered only when $\mathrm{ESS} < \tau_{\mathrm{ESS}}$, the argument extends directly. On iterations where resampling is skipped, particles retain their current weights and ancestry, which is part of the path-space reduction in Step~7 without introducing additional bias.

\paragraph{Step 7: Importance sampling vs.\ SMC (path-space argument).}
ModelSMC performs SMC updates with incremental weights of the form
\begin{align}
    w_k^i
    = \frac{\pi_k(m_k^i)}
           {\pi_{k-1}(m_{k-1}^i)\,q_k(m_k^i \mid m_{k-1}^i)},
\end{align}
where $\{\pi_k\}$ is an artificial sequence of target distributions whose marginal at iteration $k$ is $\pi$.
Viewing SMC as importance sampling on the extended path space $(m_0,\dots,m_k)$, the product of incremental weights along a trajectory equals the standard importance weight $\pi(m_k)/q_k(m_k)$ for the marginal distribution.
Hence, analyzing the corresponding importance sampling estimator suffices to establish consistency of the SMC estimator.

\paragraph{Step 8: Conclusion.}
Combining the above steps, we obtain that
\begin{align}
    \sum_{i=1}^N w^i\,\varphi(m^i)
    \;\xrightarrow{\mathbb P}\;
    \mathbb E_{\pi}[\varphi(m)]
    \quad\text{as } N\to\infty,
\end{align}
with asymptotic variance $\mathcal O(1/N)$.
Thus, ModelSMC provides a consistent Monte Carlo approximation of the posterior $p(m\mid\vx_o)$.
\end{proof}

\textbf{Remarks.}
\begin{itemize}
    \item \textbf{Mixture proposal.} The mixture kernel in \cref{alg:modelsmc} clones a fraction of particles while propagating the remainder via the LLM. Since cloned particles retain their previous weights and the overall kernel satisfies the support coverage assumption as $N\to\infty$, this does not affect the validity of the convergence argument.
    \item \textbf{Idealized setting.} The theorem characterizes the asymptotic behavior of an idealized ModelSMC sampler with exact importance weights. The implemented algorithm uses approximate weights due to implicit, context-conditioned LLM proposals, and should therefore be interpreted as an approximation to this idealized sampler.
    \item \textbf{Prompt engineering and proposal quality.}
    Prompt design affects the proposal distribution implicitly by shaping the inductive biases of the LLM.
    From the perspective of the theorem, prompt engineering serves as a practical mechanism to improve support coverage and reduce weight variance, analogous to hand-crafted proposal kernels in traditional Monte Carlo methods.
    It influences finite-sample efficiency but does not affect the asymptotic consistency result.
\end{itemize}

\appsection{Method Details}

\appsubsection{Note on the Definition of Prior Distributions}
\label{app:ExperimentalDetailsPriorChoices}

The dimensionality of the parameter vectors $\theta$ and the interpretation of the individual entries may vary across models. Therefore, the prior $p(\theta)$ is model- and task-specific, and is defined by the scientist using domain knowledge to identify meaningful parameter ranges, defaulting to uniform priors over physically plausible bounds, or incorporating knowledge from literature. While this is the approach taken in our experiments, an alternative is to jointly generate models and priors in the discovery experiment. For example, \citet{holt2025gsim} synthesize the bounds of uniform priors as part of the models.

The experimental conditions $c$ (e.g., the stimulus current and initial voltage in Hodgkin-Huxley (\cref{sec:exp-allen}) or the initial population sizes in the SIR task (App.~\ref{app:ExperimentalDetails_SIR})) are observable by definition. Therefore, a categorical distribution over observed training values is always available as a minimal choice for $p(c)$. Additional information about the experimental setting can provide more sophisticated distributions.

\appsubsection{Prompting Strategy}%
\label{app:LLMDetails}

In the propagation step of ModelSMC (\cref{alg:modelsmc} and \cref{sec:Method_ModelSMC}), an LLM is used as a conditional sampler for implementations of models, implicitly synthesizing new models. To obtain meaningful implementations, we condition the LLM on text-based conditions incorporating information about previously generated models and the task to solve.

Passing information about models generated earlier in the discovery process, especially their implementations, to the LLM, is a well-established strategy in LLM-based model discovery \citep{holt2025gsim, Castro2025cogfunsearch, romera-paredes_mathematical_2024, agarwal2025autodiscovery}. We follow this general scheme by providing the ancestry of implementations $m_{k-1}^{a_k^{i}}$ and the ancestry of performance feedback $c_{k-1}^{a_k^{i}}$ to the LLM. Both rely on the ancestry of the resampled particle. Injecting ancestral information in the propagation step enables the LLM to reason about the effect of changes to the models on their performance. %

Furthermore, we pass a task-specific context $c_\text{task}$ to the LLM, which does not change over the course of the discovery run. The task-specific context $c_\text{task}$ contains all information about the task. By passing $c_\text{task}$ to the LLM, we aim at triggering the LLM's broad knowledge base to further improve its ability to provide implementations that solve the given task. ModelSMC structures $c_\text{task}$ into three parts:
\begin{enumerate}
    \item \textbf{System description.} We include all prior information about the task to solve. This includes information about the initial solution provided to ModelSMC or the metrics used to evaluate the generated models. The description also contains information about the scientific domain and about the data used to test the synthesized models.
    \item \textbf{Signature description.} We collect all information about the framework in which the generated models are evaluated. This includes details such as the interpretation and structure of the model's input arguments and outputs.
    \item \textbf{Task description.} This part summarizes the aim of the discovery run and describes how the tasks should be approached (i.e., to reason from previous attempts, use coding standards, or specific coding instructions).
\end{enumerate}
The different parts of $c_\text{task}$ can vary across different applications, depending on the amount of prior knowledge, the environment used to evaluate the code, the scientific domain, and the specifics of the task. For the full prompts, we refer to the accompanying code.

The generation of contextual feedback $c_{k}^{i}$ of a model $m_k^i$ in \cref{alg:modelsmc} is performed after model simulation, and numerical evaluation. In a single LLM call, the system description, the simulator code, and the numerical evaluation metrics are jointly provided to the LLM, which is instructed to simultaneously diagnose failure modes and propose concrete improvements. Any exceptions arising during evaluation (invalid code, numerical instability, non-terminating simulations, or timeouts) are captured and included in the performance metrics string passed to this call. The LLM returns structured feedback comprising a concise main diagnosis and up to two prioritized issues, each annotated with a severity level, a code location, and a targeted fix suggestion. The resulting feedback is designed to guide subsequent LLM propagation steps by highlighting concrete directions for model revision.

During model simulation and evaluation, exceptions may arise due to invalid code, numerical instability, non-terminating simulations, or timeouts. Such exceptions are explicitly caught and recorded. When an exception occurs, a structured description of the failure mode (e.g., error messages, timeout events, or invalid outputs) is included in the first-stage diagnosis and propagated to the LLM as part of the contextual feedback. This allows models that failed to receive targeted corrective feedback to be revised if they are later resampled as ancestors, rather than being irreversibly discarded. \Cref{fig:appendix_figure_prompting_strategy} visualizes the structure of the prompts used in the propagation and feedback steps of ModelSMC. An instantiation of the prompts is provided in App.~\ref{app:PromptExamples}.\newpage

\begin{figure}[H]
    \centering
    \includegraphics[width=0.75\textwidth]{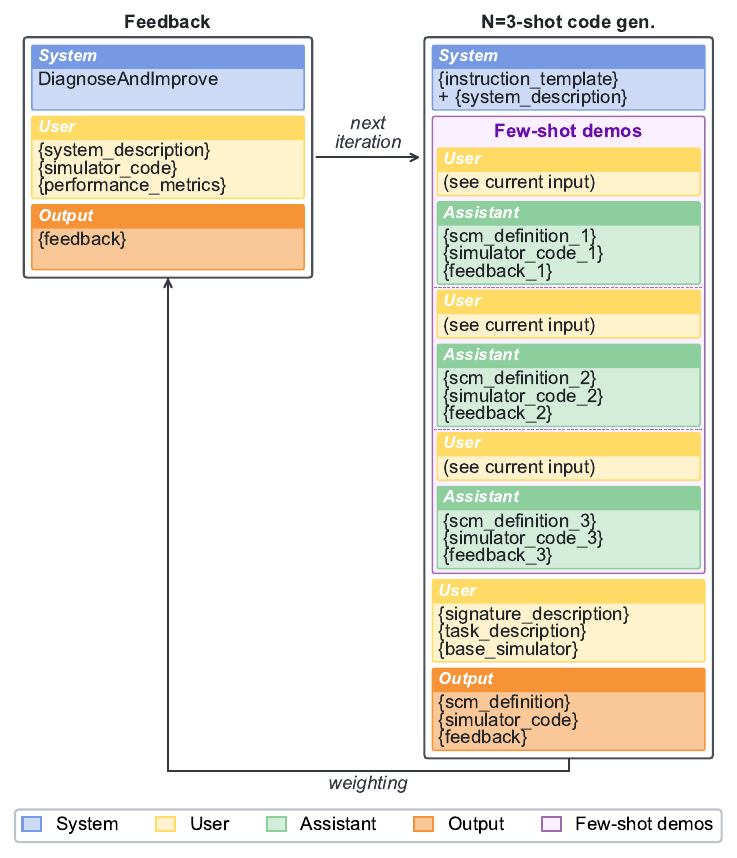}
    \caption{
        \textbf{End-to-end prompting example for ModelSMC.} Based on the numeric performance metrics computed in the weighting step, the implementation of the model, general instructions, and task-specific information, we obtain LLM-based feedback about the model (\textbf{left}). Based on general instructions, task-specific information, few-shot examples consisting of the implementation of particles together with their feedback, the structural causal model, and the corresponding user prompt, we generate a new model implementation and the corresponding structural causal model in the propagation step. The propagation step also yields a text-based feedback field for consistency with the few-shot example structure, though this feedback is not used. To avoid duplicating static text, we do not include the signature description, task description, and base implementation in the few-shot example messages, but instead include references to the final user message (\textbf{right}).
    }
    \label{fig:appendix_figure_prompting_strategy}
\end{figure}

\appsubsection{Parameter Estimation}
\label{app:Method_ParameterEstimation}

ModelSMC does not require the value of the external parameters $\theta$ for the model discovery process. Since many downstream applications require parameter estimation, we estimate the parameter values $\hat\theta$ that best explain the observed data. For an observation $x_o$ and associated context $c_o$, we estimate $\hat\theta_o$ by maximizing the posterior $p(\theta|  x, c, m)$ with respect to~$\theta$ as
\begin{equation}\label{eq:ParameterEstimationMaxLogPosterior}
    \hat\theta_o = \arg\max_{\theta\in\Theta} \log p\left(\theta\mid x_o, c_o, m\right).\\
\end{equation}
To simplify notation, we drop the conditioning on the discovered model for the remainder of this section.
The key ingredient for estimating $\hat\theta_o$ via \cref{eq:ParameterEstimationMaxLogPosterior} is access to the posterior $p\left(\theta| x, c\right)$, which in practice is usually intractable. Therefore, we use a surrogate model
\begin{equation}
    p_\psi(\theta\mid x, c)\approx p(\theta\mid x, c).
\end{equation}
One approach for obtaining $p_\psi$ is Neural Posterior Estimation \citep[NPE]{Lueckmann2017FlexibleStatisticalInference,papamakarios2018fastepsilonfreeinferencesimulation} which trains a density estimator $p_\psi$ from scratch using a training set of $n_{\text{sim}}$ synthetic samples
\begin{equation}\label{eq:TrainingSetNLENPE}
    \mathcal{D} = \{\left(\theta_j\sim p(\theta), c_j\sim p(c), x_j = m(c_j,\theta_j) \sim p(x\mid m(\theta, c)\right)\}_{j = 1}^{n_{\text{sim}}}.
\end{equation}
Once $p_\psi$ is trained, the expression in \cref{eq:ParameterEstimationMaxLogPosterior} is estimated by drawing $n_\text{post}$ samples from $p_\psi$ and selecting the sample that maximizes $p_\psi(\theta| x, c)$ 
\begin{equation}\label{eq:MaxLogPosteriorMAPEstimationSampleBased}
    \hat\theta_o = \arg\max_{\theta\in \Theta_o}\; \log p_\psi\left(\theta\mid x_o, c_o\right), \qquad \text{with} \quad \Theta_o = \left\{\theta_l\sim p_\psi\left(\theta\mid x_o, c_o\right)\right\}_{l = 1}^{n_\text{post}}.
\end{equation}
An alternative to NPE is NPE-PFN \citep{vetter2025effortless}. NPE-PFN employs TabPFN as a conditional density estimator. TabPFN \citep{hollmann2025tabpfn} is a Prior-data Fitted Network \citep{muller2022transformers} for tabular data, which provides one-dimensional density estimates and can be autoregressively extended for higher-dimensional densities. The key difference between NPE and NPE-PFN is that NPE-PFN does not train a new surrogate model but uses $\mathcal{D}$ (see, \cref{eq:TrainingSetNLENPE}) as an in-context dataset for the pretrained TabPFN model. Hence, NPE-PFN allows for training-free posterior estimation.

Following \citet{vetter2025effortless} and adapting to the case of additional external context $c_o$, the posterior can be decomposed into a product over the parameter dimensions as 
\begin{equation}\label{eq:FactorizationPosterior}
    p(\theta|x_o, c_o) \approx \prod_{j = 1}^{d_{\theta}}q_{\psi}\left(\theta_j\mid\theta_{<j}, x_o, c_o, \mathcal{D}_{<j}\right).
\end{equation}
For a $d_\theta$-dimensional parameter space, $\theta_j$ is the $j$-th component of $\theta$, and $\theta_{<j}$ the components of $\theta$ up to, but not including, the $j$-th. Analogously, $\mathcal{D}_{<j}$ is defined as $\mathcal{D}_{<j} = \left\{\theta_j^i,\left[\theta_{<j}^i, x^i, c^i\right]: \left(\theta^i, x^i, c^i\right)\in\mathcal{D}\right\}$. In the setting of NPE-PFN $q_{\psi}$ is a one-dimensional density estimate by TabPFN. To sample from the approximated posterior \cref{eq:FactorizationPosterior}, the individual dimensions of $\theta$ are sampled recursively.

The parameter estimation scheme \cref{eq:ParameterEstimationMaxLogPosterior} used for NPE can also be applied with NPE-PFN. For high-dimensional parameter spaces, however, the recursive sampling and density estimation (\cref{eq:FactorizationPosterior}) for $n_\text{post}$ posterior samples can slow down the discovery process. To reduce the computational costs of the parameter estimation, we exploit NPE-PFNs factorization of the posterior \cref{eq:FactorizationPosterior} and introduce a new strategy for estimating $\hat\theta_o$ which is based on temperature-scaling the posterior.

Temperature-scaling refers to exponentiating a density with an exponent of $\frac{1}{T}$, with the temperature $T>0$. Applying this transformation to a density leaves the positions of the modes, i.e., the local maxima, unchanged. Furthermore, for $T\ll 1$ the temperature-scaled density becomes localized around the mode with the highest density value in the original density. Sampling from the temperature-scaled distribution accordingly produces samples that have a high density under the original distribution. 

Sampling from the temperature-scaled posterior at $T\ll 1$ instead of the posterior itself can significantly reduce the necessary amount of posterior samples $n_\text{post}$ in \cref{eq:MaxLogPosteriorMAPEstimationSampleBased} as all samples are close to the posterior mode with the highest density value. Therefore, we temperature-scale the posterior to obtain
\begin{equation}\label{eq:FactorizationPosteriorTemperatureScaled}
    p(\theta|x_o, c_o)^{\frac{1}{T}} \approx \prod_{j = 1}^{d_{\theta}}q_{\psi}\left(\theta_j\mid\theta_{<j}, x_o, c_o, \mathcal{D}_{<j}\right)^{\frac{1}{T}}.
\end{equation}
Crucially, \cref{eq:FactorizationPosteriorTemperatureScaled} only requires to power-scale the one-dimensional factors in the product rather than the full, potentially high-dimensional, posterior itself. In our experiments, we draw one sample from the temperature-scaled posterior and use it as $\hat\theta_o$. Drawing only one sample from the temperature-scaled posterior reduces the computational cost of the parameter estimation from $n_\text{post}$ sampling and density estimation steps to a single sampling step. Therefore, this parameter estimation approach exploits the factorization of the posterior in NPE-PFN and significantly reduces the computational costs of the parameter estimation.

\appsubsection{Estimation of Resampling Weights: Likelihood Estimation}%
\label{app:Method_WeightEstimation}

\cref{eq:MarginalLikelihoodSingleInstance} defines the resampling weight of a particle as the likelihood of the observed data $x_o$ under the model $m$, marginalized over the external parameters $\theta$. In practice, the likelihood $p(x| \theta,c,m)$ is intractable. Therefore, we use a surrogate model $p_{\phi}(x| \theta,c,m) \approx p(x| \theta,c,m)$ in the computation of \cref{eq:MarginalLikelihoodSingleInstance}. For simplicity, we drop the conditioning on $m$ for the remainder of this section. Throughout, Monte Carlo samples are drawn iid unless otherwise noted. 

One approach to learn a surrogate model is to apply Neural Likelihood Estimation \citep[NLE]{papamakarios2019sequential}. NLE uses the training set $\mathcal{D}$ (\cref{eq:TrainingSetNLENPE}) to train the surrogate model $p_{\phi}(x| \theta,c,m)$ via maximum likelihood training. An alternative to NLE is to adapt NPE-PFN \citep{vetter2025effortless} to likelihood estimation. We denote this approach as NLE-PFN. Compared to \cref{eq:FactorizationPosterior}, NLE-PFN effectively interchanges $\theta$ and $x$, yielding 
\begin{equation}\label{eq:FactorizationLikelihood}
    p(x|\theta, c) \approx \prod_{\ell = 1}^{d_{x}}q_{\psi}\left(x_\ell\mid x_{<\ell}, \theta, c, \mathcal{B}_{<\ell}\right),
\end{equation}
where $d_{x}$ denotes the dimensionality of the observation space, $x_\ell$ the $\ell$-th component of the observation, and $x_{<\ell}$ the components of $x$ up to, but not including, the $\ell$-th. The in-context dataset $\mathcal{B}$ is defined as $\mathcal{B}_{<j} = \left\{x_\ell^i,\left[x_{<\ell}^i, \theta^i, c^i\right]: \left(\theta^i, x^i, c^i\right)\in\mathcal{D}\right\}$. As in NPE-PFN, each factor in \cref{eq:FactorizationLikelihood} is estimated by a one-dimensional density estimate from TabPFN \citep{hollmann2025tabpfn}.

With the likelihood surrogate model $p_{\phi}(x\mid \theta,c,m)$, \cref{eq:MarginalLikelihoodSingleInstance} can be estimated by the Monte Carlo estimate 
\begin{equation}\label{eq:appendixMCEstimateMarginal}
    p(x_o^j\mid m^i_k)\approx\frac{1}{B}\sum_{b = 1}^B p(x_o^j\mid m_k^i, \theta_b,c_o^j), \qquad \theta_b\overset{\mathrm{iid}}{\sim}p\left(\theta\right), \qquad \forall b = 1, ..., B.
\end{equation}

If a (neural) posterior estimate $p_{\psi}\left(\theta\mid x\right)$ is available, $p(x_o^j\mid m^i_k)$ can also be approximated by using standard importance weighting and samples drawn from the posterior distribution rather than from the prior 

\begin{equation}\label{eq:appendixMCEstimateMarginalPosterior}
    p(x_o^j\mid m^i_k)\approx\frac{1}{B}\sum_{b = 1}^B\frac{p_{\psi}\left(\theta_b\mid x_o^j\right)}{p(\theta_b)}p(x_o^j\mid m_k^i, \theta_b,c_o^j), \qquad \theta_b\overset{\mathrm{iid}}{\sim}p_{\psi}\left(\theta\mid x_o^j\right), \qquad \forall b = 1, ..., B.
\end{equation}

We find that for small $B$, \cref{eq:appendixMCEstimateMarginalPosterior} has a lower variance than \cref{eq:appendixMCEstimateMarginal}. This reduced variance however, comes at the cost of requiring a pretrained NPE and additional density evaluations to compute the importance weights for each parameter sample.

For high-dimensional observations, we can use summary statistics $s: X\rightarrow \mathbb{R}^d$ with $d\ll d_x$ \citep{Lueckmann2017FlexibleStatisticalInference}. Assuming that $s(x)$ retains most of the relevant information about $x$, the approximation of the likelihood changes to 
\begin{equation}
    p(x|\theta, c) \approx p_\phi(s(x)|\theta, c).
\end{equation}
Evaluating \cref{eq:appendixMCEstimateMarginal} for each observed instance $(x_o^j,c_o^j)\in\vx_o$ allows us to compute the unnormalized resampling weight $\tilde w_k^i$ (\cref{eq:ResamplingWeightModelSMCFactorization}) of a model $m_k^i$. In practice, we use the slightly adapted definition \cref{eq:TenmperatureScaledResamplingWeight} of $\tilde w_k^i$ of the form
\begin{equation}\label{eq:TenmperatureScaledResamplingWeight}
    \tilde w_k^i \propto \exp\left(\frac{1}{\tau}\log\left(\prod_{j=1}^M p(x_o^j\mid m_k^i)\right)\right).
\end{equation}
The temperature parameter $\tau$ in \cref{eq:TenmperatureScaledResamplingWeight} controls the bias of the resampling step. For $\tau\ll1$, particles explaining the data well are favored, whereas for $\tau\gg1$, the selection becomes more uniform across all particles. Unless stated otherwise, the temperature is set to $\tau=1$ in all experiments. Importantly, the temperatures $T$ and $\tau$ used in \cref{eq:FactorizationPosteriorTemperatureScaled} and \cref{eq:TenmperatureScaledResamplingWeight} are two distinct hyperparameters.

\appsection{Baseline Methods}

\appsubsection{FunSearch+}
\label{app:baselines_FunSearchPlus}

FunSearch \citep{romera-paredes_mathematical_2024} is an LLM-based evolutionary method to evolve function implementations to improve their performance with respect to an evaluation function. The program database used by FunSearch to store previously generated functions consists of several populations of functions, referred to as islands. Periodically, the worst-performing islands are discarded and reinitialized with a function from the remaining well-performing islands. When generating the prompt to synthesize a new function, FunSearch randomly selects an island and samples $k$ functions from this island with sampling probabilities based on their score and length. The selected functions are sorted by their scores and inserted into the program skeleton to provide ranked examples to the LLM. Based on this prompt, the LLM is tasked to implement a new function with the goal of generating a higher-scoring function. Newly generated functions are added to the same island from which the functions in the prompt were selected. Therefore, the islands evolve independently between reinitialization steps. A key difference between ModelSMC and FunSearch is that FunSearch selects the functions in the prompt based on the score but does not consider ancestral relations. 

As originally proposed for problems from extremal combinatorics, FunSearch does not allow the use of external model parameters $\theta$. We therefore include the ability to use and estimate model parameters in FunSearch and denote this extended version as FunSearch+. Including external model parameters into FunSearch is conceptually inspired by \citet{Castro2025cogfunsearch}. \citet{Castro2025cogfunsearch} applies gradient descent-based optimization on a likelihood-based criterion to estimate the values of the model parameters. The score of a discovered model is given by the validation performance of this criterion, given the optimized parameters. Instead of using gradient-based parameter optimization, we use the parameter estimation approach based on NPE and NPE-PFN (details in App.~\ref{app:Method_ParameterEstimation}). The advantage of using NPE and NPE-PFN to estimate $\hat\theta$ over gradient-based approaches is to obtain posterior estimates for the parameters without requiring differentiable models. Inspired by the use of a likelihood-based score in \citet{Castro2025cogfunsearch}, we also use a likelihood-based score in FunSearch+, which we define as
\begin{equation}
    s =\log\prod_{j=1}^M p_{\phi}(x_o^j\mid m,c_o^j, \hat\theta^j).\label{eq:appendix_FunSearch_likelihood_score}
\end{equation}
In \cref{eq:appendix_FunSearch_likelihood_score}, $\hat\theta^j$ denotes the parameter estimate for the observed instance $(x_o^j,c_o^j)\in\vx_o$ and $p_{\phi}(x|m,c,\theta)$ is a surrogate likelihood (described in App.~\ref{app:Method_WeightEstimation})for the unknown true likelihood $p(x|m,c,\theta)$. 

We base our implementation of FunSearch+ on the openly available FunSearch implementation provided by \citet{RayZhhh2024funsearchimplementation}. Our main modifications include evolving class methods rather than standalone functions and including our parameter estimation (App.~\ref{app:Method_ParameterEstimation}) and scoring schemes (\cref{eq:appendix_FunSearch_likelihood_score}). \citet{RayZhhh2024funsearchimplementation} relies on string-based modifications of functions and is therefore tightly coupled to Python code. As a result, unlike ModelSMC, it cannot be readily extended to other programming languages.

\appsubsection{ModelSMC $N$=1}
\label{app:baselines_ModelSMConeparticle}

This baseline method represents alternative LLM-based approaches for model discovery that explore model space along a single trajectory \citep{holt2025gsim,Ma2024SGA,Li2024Boxlm,shojaee2025llmsr}. ModelSMC $N$=1 uses the same implementation as ModelSMC, but employs a population size of one and always propagates the most recently generated particle (i.e. setting $\alpha=0$ in \cref{eq:AlgoModelSMCPropagation}).

\appsection{Experimental Details}\label{app:ExperimentalDetails}

Here, we provide details on the exact configurations used in the experiments. For a complete list of all settings and details on the implementation, we refer to the accompanying code.

\appsubsection{Validation of ModelSMC for Model Discovery}\label{app:ExperimentalDetails_MinimalExample}

The experiment presented in \cref{sec:experiments_ValidationNoLLM} tests the probabilistic inference mechanism underlying ModelSMC without using LLMs to synthesize new models. To implement this, we restrict ModelSMC to select models from a finite model space consisting of 20 predefined candidate models with observation dimension $d_x=10$.

\subsubsection{Candidate models}

The candidate models used in this experiment are Gaussian Mixture Models (GMMs). Given the number of mixture components $n_{\text{comp}}\in\mathbb{N}$, a set of covariance matrices $\left\{\Sigma^i\in\mathbb{R}^{d_x\times d_x}\right\}_{i =1}^{n_{\text{comp}}}$, mean vectors $\left\{\mu^i\in\mathbb{R}^{d_x}\right\}_{i =1}^{n_{\text{comp}}}$, and weights $\left\{w^i\in\mathbb{R}_+\right\}_{i =1}^{n_{\text{comp}}}$ with $\sum_i w^i = 1$, we define a GMM as
\begin{equation}\label{eq:appendix_GMMDefinition}
    p_{\text{GMM}}(x\mid s, u) = \sum_{i = 1}^{n_{\text{comp}}}w^{i}\cdot\frac{1}{\left(2\pi\right)^{d_x/2}\det\left(s^2\Sigma^i\right)^{1/2}}\exp\left(-\frac{1}{2s^2}\left(x-\left(s\mu^i+u\right)\right)^T\left(\Sigma^i\right)^{-1}\left(x-\left(s\mu^i+u\right)\right)\right).
\end{equation}
Here, $s$ and $u$ are the external parameters of the model: the scale parameter $s$ is scalar and follows a uniform prior over the interval $[0.1,2]$. The shift parameter $u$ is a $d_x$-dimensional vector, where all but four components are set to zero. Each of the four non-zero components follows a uniform prior over the interval $[-2,2]$. The non-zero components are sampled uniformly at random when initializing the candidate model and remain unaltered over the discovery run. The external parameters allow to scale and to shift the GMM along selected components to fit it to the observed data. For each candidate model, the configuration of the underlying GMM is randomly initialized as
\begin{align}
    n_{\text{comp}} &\sim \text{Unif}\left(\left\{1, 2, ..., 10\right\}\right),\\
    \tilde w^i &\sim \text{Unif}\left(\left[0,1\right]\right), \qquad w^i = \frac{\tilde w^i}{\sum_{j=1}^{n_{\text{comp}}}\tilde w^j}, \qquad \forall i =1,...,n_{\text{comp}},\\
    \mu^i_j&\sim\text{Unif}\left(\left[-5,5\right]\right), \qquad \forall j=1,...,d_x, \qquad \forall i =1,...,n_{\text{comp}},\\
    \tilde \Sigma^i_{jk}&\sim\text{Unif}\left(\left[-2,2\right]\right), \qquad \forall j,k=1,...,d_x, \qquad \forall i =1,...,n_{\text{comp}},\\
    \Sigma^i &= \alpha^i\cdot\tilde\Sigma^i\left(\tilde\Sigma^i\right)^T, \qquad \alpha^i\sim\text{Unif}\left(\left[1,2\right]\right), \qquad \forall i =1,...,n_{\text{comp}}.
\end{align}
In each discovery run, one of the 20 candidate models is selected as the ground truth model $m^*$, and observed data is drawn from $m^*$ with $s = 1$ and $u = \mathbf{0}_{d_x}$, i.e., no scaling and shifting is applied.

\subsubsection{Experimental configuration}

Five different target models are selected from the set of 20 candidate models. For each target model, ten discovery runs are conducted, each with a different random seed. The set of candidate models is the same for all discovery runs.

ModelSMC's resampling weights are estimated using NLE \citep{papamakarios2019sequential} with a training set $\mathcal{D}$ of size 5,000 (App.~\ref{app:Method_WeightEstimation}). No additional context is used, i.e., a training instance consists of parameters $\theta = (s,u)$ drawn from the prior and a simulation $m(\theta)$. The observed set $\vx_o$ is of size 1,000 and is drawn from the target model. To compute the Monte Carlo estimate \cref{eq:appendixMCEstimateMarginal}, $B=2,000$ prior samples are used. We employ sbi's \citep{BoeltsDeistler_sbi_2025} default hyperparameters for NLE but limit the maximum number of training epochs to 500.

All discovery runs proceed for 20 iterations and the probability for copying a particle in the propagation step is $\alpha = 0.8$ (\cref{eq:AlgoModelSMCPropagation}). The LLM-based propagation kernel component is replaced by uniformly sampling a random model from the candidate set. The population contains $N=100$ particles. At initialization of the population, the resampling weights are computed for all 20 candidate models. The population is then initialized by copying each model five times. In addition to not using an LLM in the propagation step, no text-based performance feedback (App.~\ref{app:LLMDetails}) is generated.

\appsubsection{SIR Model}\label{app:ExperimentalDetails_SIR}

We apply ModelSMC to a synthetic dataset where the ground truth simulator model is known. In particular, we use a simple epidemiological SIR model \citep{Kermack1927SIR}. The SIR model considers three populations---Susceptible, Infected, Recovered---with interactions controlled by two external parameters, a base transmission rate $\beta$ and a recovery probability $\gamma$. In this experiment, we use a time-discretized adaptation of the classical SIR model from \citet{holt2025gsim}.

\subsubsection{Dataset}

We use the implementation of the ground truth model and prior distributions of \citet{holt2025gsim} to generate observed data. Synthetic data is generated by drawing the initial numbers of susceptible, infected, and recovered individuals from the prior distributions specified as
\begin{align}
    S_0\sim \text{Unif}\left(\left\{50, 51, ..., 99, 100\right\}\right), \qquad
    I_0\sim \text{Unif}\left(\left\{1, 2, ..., 4, 5\right\}\right), \qquad
    R_0= 0.
\end{align}
These initial population sizes define the context $c = (S_0, I_0, R_0)$. Given the external parameters $\beta$ and $\gamma$, the population sizes at time step $t \in \left\{1, ..., T\right\}$ are computed recursively from those at time $t-1$. Let $N = S_{t-1} + I_{t-1} + R_{t-1}$ and $p_{\text{inf}} = 1.0 - \exp(-\beta\cdot I_{t-1} / N)$. New infections and recoveries are drawn as
\begin{align}
    N_{\text{inf}}\sim\text{Binomial}\left(S_{t-1},p_{\text{inf}}\right), \qquad
    N_{\text{rec}}\sim\text{Binomial}\left(I_{t-1},\gamma\right),
\end{align}
and the state is updated via
\begin{align}
    S_t = S_{t-1} - N_{\text{inf}}, \qquad
    I_t = I_{t-1} + N_{\text{inf}} - N_{\text{rec}}, \qquad
    R_t = R_{t-1} + N_{\text{rec}}.
\end{align}
Because infections and recoveries are sampled from Binomial distributions, the transitions across time steps are stochastic.

In total, the simulation runs for $T=60$ time steps, yielding three time series of 61 time steps. The prior distributions for the external parameters are given by
\begin{align}
    \gamma \sim \text{Unif}\left([0,1]\right), \qquad
    \beta \sim \text{Unif}\left([0,2]\right).
\end{align}
In our experiments, these priors are fixed for all generated models and are not treated as part of the model. In contrast to \citet{holt2025gsim}, we do not use an LLM to synthesize the dimensionality of the parameter space and the bounds of the uniform priors. Following their NPE-based parameter estimation scheme, a single training trajectory drawn from the ground truth model is used to estimate $\beta$ and $\gamma$. For computations of evaluation metrics, including the resampling weights, we use $\vx_o$ containing 100 samples drawn from the same generative process.

\subsubsection{Experimental Configuration}

Parameter estimation for $\beta$ and $\gamma$ is performed using NPE  with a training set of 1,000 synthetic instances (App.~\ref{app:Method_ParameterEstimation}). Each synthetic training sample consists of a context $c=(S_0,I_0,R_0)$ drawn from the initial condition priors, parameters $\theta = (\gamma,\beta)$ drawn from their uniform priors, and an observation $x=m(\theta,c)$. The observation is a 183-dimensional vector formed by concatenating the simulated time series of susceptible, infected, and recovered populations.

After training, parameter values are inferred using $10^4$ samples drawn from the approximate posterior $p_{\psi}(\theta|x_o,c_o,m)$ using maximum log-posterior (MAP) criterion (\cref{eq:MaxLogPosteriorMAPEstimationSampleBased}). Here, $x_o$ denotes the single observed trajectory generated from the ground-truth data-generating model and $c_o$ the corresponding initial condition. The resulting parameter estimate for $\beta$ and $\gamma$ is fixed and reused for all downstream computations. No summary statistics are applied to the observations. 

This estimation procedure is inspired by \citet{holt2025gsim} but differs in multiple aspects: In our implementation, the initial condition is explicitly treated as a condition in NPE. Moreover, we normalize the observations and the initial conditions elementwise by the total population size before passing them to the density estimator. Finally, we use a MAP estimate based on posterior samples, instead of averaged posterior samples, which can lead to poor parameter estimates for multimodal posteriors. 

Resampling weights for ModelSMC are computed using NLE (App.~\ref{app:Method_WeightEstimation}). The training set for NLE is identical to the normalized training set used for NPE in the parameter estimation step. We approximate the marginal likelihood (\cref{eq:appendixMCEstimateMarginal}) using 5,000 prior samples. ModelSMC runs for 15 iterations. In the propagation kernel (\cref{eq:AlgoModelSMCPropagation}), existing particles are copied unaltered with a probability of $\alpha=0.8$. The particle population size is $N{=}50$. Therefore, in expectation $N \cdot (1-\alpha) = 10$ new particles are generated per iteration.

The prompts in $c_\text{task}$ (App.~\ref{app:LLMDetails}) are constructed by adapting the prompts used in \citet{holt2025gsim} to match the specifics of ModelSMC. We follow the same procedure for the program skeleton. Components predicting prior bounds are removed, and the initial simulator model is restricted to return constant population sizes.

For ModelSMC $N$=1 (App.~\ref{app:baselines_ModelSMConeparticle}), 150 models are generated in total, matching the expected number generated by ModelSMC across iterations. FunSearch+ (App.~\ref{app:baselines_FunSearchPlus}) uses the same evaluation pipeline as the other two approaches. Since we find that FunSearch+ requires less LLM tokens per generated model compared to ModelSMC and ModelSMC $N$=1, we increase the total number of generated models for FunSearch+ to 190 in order to match the token budget of the other methods. All methods provide the LLM with up to three previously generated implementations (i.e., ancestors). To compute the results in \cref{tab:quantitative_results}, all methods are run with ten different initial random seeds. The two baseline methods use the same parameter estimation scheme as ModelSMC. For implementation details, we refer to the accompanying code.

\appsubsection{Pharmacological Kidney Model}
\label{app:ExperimentalDetails_kidney}

\subsubsection{Dataset}
The dataset is originally from \citet{dluhy1972studies}; we use the version provided with the manuscript from \citet{Maddah2022}. The data contains measured plasma potassium and aldosterone responses to a standardized potassium chloride infusion in healthy humans under controlled dietary conditions. Subjects were assigned to four diet groups defined by low/high potassium (40 vs. 200 mEq/day) and low/high sodium (10 vs. 200 mEq/day) intake for 6–10 days, then received potassium chloride at 0.62 mEq/min for 120 min followed by 180 min recovery. Plasma potassium and aldosterone were sampled at baseline, during infusion (0, 30, 90, 120 min), and after recovery (300 min). This constitutes a total of 40 unique data points. There are no summary statistics used.

As context $c$ is passed to the model, we give the initial condition of the model. The initial condition is derived from steady-state conditions with normal sodium and potassium intake. The model parameters $\theta$ are drawn from uniform priors. For details about the bounds of the prior, we refer to the accompanying code.

\subsubsection{Experimental Configuration}
ModelSMC is run for 15 iterations with cloning probability $\alpha{=}0.8$ (\cref{eq:AlgoModelSMCPropagation}). The particle population contains $N{=}50$ particles. Parameter estimation is performed using NPE-PFN with temperature-scaled posteriors (App.~\ref{app:Method_ParameterEstimation}), using temperature $T=10^{-5}$ and 2,000 simulator calls per evaluation. Resampling weights are estimated using NLE-PFN (App.~\ref{app:Method_WeightEstimation}), and the marginal likelihood estimate (\cref{eq:appendixMCEstimateMarginalPosterior}) is approximated with $B=500$ posterior samples. In \cref{eq:appendixMCEstimateMarginalPosterior}, NPE-PFN is used for importance weighting and sampling proposals. The in-context dataset used for NPE-PFN is constructed from the same simulations as the in-context dataset of NLE-PFN. In each propagation step, up to three previously discovered models are provided as few-shot context to the LLM (App.~\ref{app:LLMDetails}).

For ModelSMC $N$=1, the same parameter and weight estimation settings are used. ModelSMC $N$=1 is run for 150 iterations. All experiments are performed using fixed, hand-crafted summary statistics and are evaluated using the negative log marginal likelihood estimated by NLE-PFN.

\appsubsection{Hodgkin-Huxley Model}
\label{app:ExperimentalDetails_allen}

\subsubsection{Dataset}

We consider data from the Allen cell types database \citep{allen_cell_types_2015}, which consists of 10 in-vitro recordings from a mouse cortex. This data was previously used for parameter inference \citep{gonccalves2020training,vetter2025effortless}. We use the same set of seven summary statistics as previously proposed in \citet{gonccalves2020training}: Spike count, mean, and standard deviation of the resting potential, and the first four voltage moments, i.e., mean, standard deviation, skew and kurtosis.

The additional context $c$ passed to the model is a two-dimensional vector containing the initial membrane voltage $V_0$ and the input current $I$ to the cell. The prior over the conditions is defined by uniformly drawing the context values from the observed set $\vx_o$. The two context dimensions are drawn independently, resulting in 100 possible combinations of initial voltage and input current. Model parameters $\theta$ are 10-dimensional vectors with components independently drawn from uniform priors, in the same way as provided in \citet{gonccalves2020training}. For details about the bounds of the prior, we refer to the accompanying code. 

\subsubsection{Experimental Configuration}

ModelSMC is run for 15 iterations with cloning probability $\alpha{=}0.8$ (\cref{eq:AlgoModelSMCPropagation}). The particle population has size of $N{=}50$. For parameter estimation, we employ NPE-PFN-based parameter estimation utilizing temperature-scaling of the posterior (App.~\ref{app:Method_ParameterEstimation}). The temperature used for temperature-scaling (\cref{eq:FactorizationPosteriorTemperatureScaled}) is $T = 10^{-5}$. NPE-PFN uses an in-context dataset~$\mathcal{D}$ (\cref{eq:TrainingSetNLENPE}) of size 5,000. Resampling weights, i.e. likelihood estimates, are obtained using NLE-PFN (App.~\ref{app:Method_WeightEstimation}), and \cref{eq:appendixMCEstimateMarginalPosterior} is estimated with $B=500$ posterior samples. In \cref{eq:appendixMCEstimateMarginalPosterior}, NPE-PFN is used for importance weighting and sampling proposals.  NLE-PFN and NPE-PFN are applied to seven standard summary statistics \citep{gonccalves2020training} rather than to the full voltage traces. In each propagation step, up to three ancestral samples are prompted to the LLM (App.~\ref{app:LLMDetails}).

For ModelSMC $N$=1 (App.~\ref{app:baselines_ModelSMConeparticle}), we use the same configuration for parameter and weight estimation as in ModelSMC. ModelSMC $N$=1 runs for 150 iterations. FunSearch+ (App.~\ref{app:baselines_FunSearchPlus}) generates 180 models and uses up to three previously synthesized models to construct the prompts used for propagation. Per prompt, four new models are generated. The program database consists of ten islands, and reinitialization of the worst-performing islands is performed every four hours. To estimate the parameters of the generated models and to compute the score (\cref{eq:appendix_FunSearch_likelihood_score}), NPE-PFN and NLE-PFN are used with the same settings as in ModelSMC and ModelSMC $N$=1. \Cref{eq:appendixMCEstimateMarginalPosterior} is estimated with $B=2,000$ posterior samples. All three methods are run with ten different initial random seeds to obtain the results presented in \cref{tab:quantitative_results}.

\appsubsection{Posterior Analysis of Hodgkin-Huxley Experiment}
\label{app:PosteriorAnalysis_allen}

To characterize posterior mass across all discovered models, we classify all $1{,}445$ usable particles (those with finite log-weight) from the Hodgkin-Huxley experiment (\cref{sec:exp-allen}) using the three-step LLM-assisted pipeline described in \cref{sec:exp-posterior}.

\paragraph{Step 1: Taxonomy construction.}
A stratified sample of approximately 15\% of all particles (across quality quartiles defined by weight) is used to extract free-form descriptions of the ion channels added beyond the three base Hodgkin-Huxley currents (fast Na$^+$, delayed-rectifier K$^+$, ohmic leak) via an LLM call.
A second LLM call consolidates all descriptions into a two-level taxonomy: multiple families (I$_M$, I$_{NaP}$, I$_h$, I$_A$, and multi-channel combinations) with 11 structural subtypes, split by activation gate power, presence of an inactivation gate, and time-constant implementation (voltage-dependent, fixed scalar, log-scaled, or instantaneous).

\paragraph{Step 2: Classification.}
All $1{,}445$ particles are classified in batches via an LLM call per batch, yielding a \texttt{family\_id} and \texttt{subtype\_id} for each particle.
The resulting family distribution is: I$_M$ = $1{,}084$ (75.0\%), I$_M$+I$_{NaP}$ = 216 (14.9\%), I$_M$+I$_h$ = 80 (5.5\%), I$_M$+I$_A$ = 35 (2.4\%), base only = 10 (0.7\%), unknown = 11 (0.8\%).

\paragraph{Step 3: Weight analysis.}
The distribution of weights per ion-channel subtype is shown in \cref{fig:results-posterior}. Note that we omit showing subtypes with less than 8 models belonging to that subtype, i.e. we result with 8 subtypes in \cref{fig:results-posterior}.
\Cref{fig:results-posterior-appendix} shows the within-seed rank of each ion-channel family across all 10 independent seeds.
I$_M$ ranks first in 9 out of 10 seeds, confirming that the preference for M-type channel extensions is a stable posterior conclusion and not an artefact of a single run.

\begin{figure}[ht]
    \centering
    \includegraphics{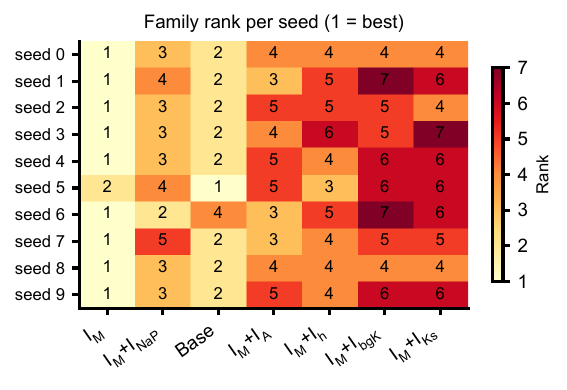}
    \caption{\textbf{Cross-seed rank stability of ion-channel families.}
    Each row is one seed and each column is an ion-channel family; cell colour and number give the within-seed rank of that family (1 = best; brighter = better).
    }
    \label{fig:results-posterior-appendix}
\end{figure}

\newpage
\appsubsection{Ablation Study of Hodgkin-Huxley Experiment}
\label{app:ablation_study_allen}

We systematically vary six design axes of ModelSMC on the Hodgkin-Huxley task (\cref{sec:exp-allen,sec:exp-ablation}). The default configuration uses Claude Sonnet 4.6, $N{=}50$ particles, $K{=}15$ iterations, $5{,}000$ simulations for posterior and likelihood estimation, marginal-likelihood weights, the full prompt, and LLM-only narrative feedback. Each ablation modifies exactly one component. The default is run with ten independent seeds; all variants use five seeds. The total budget in new particles is $K{\cdot}(1{-}\alpha){\cdot}N = 15{\cdot}0.2{\cdot}50 = 150$.

\paragraph{Weighting scheme.}
We replace the marginal-likelihood weights with MSE-based weights to test whether the principled Bayesian weighting is necessary for good performance of single models. Note that modifying the weighting removes the interpretatbility of the particles as valid posterior samples.

\paragraph{Simulation budget.} 
We reduce the number of simulations used to train the likelihood estimator from $5{,}000$ to $\{1{,}000, 500, 200\}$ to characterise the compute--quality trade-off.

\paragraph{Prompt complexity.}
We replace the full task description with a minimal prompt to assess how much prompt engineering contributes to discovery performance.

\paragraph{Feedback format.}
We vary how performance information is communicated to the LLM. I.e., LLM-based feedback text only (default), combined LLM feedback + numeric metrics, and numeric metrics only. This tests whether structured metric feedback helps or hurts proposal quality.

\paragraph{Pool size and number of iterations.}
We vary $N$ and $K$ while keeping the total budget of $150$ new particles fixed: $(N{=}150, K{=}5)$ favours population diversity, while $(N{=}5, K{=}150)$ favours deep iterative refinement.

\paragraph{LLM backbone.}
We replace Claude Sonnet 4.6 with GPT-5 mini to verify that ModelSMC is not tied to a specific language model.

\appsection{Additional Experiments}\label{app:AdditionalExperiments}

\appsubsection{Validation of ModelSMC for Model Discovery}\label{app:llm_free_example_additional_results}

In \cref{fig:MinimalExampleMainText}, the median proportion of the target model in the population, i.e., the model from which the data $\vx_o$ was drawn, converges to an upper bound of $\approx80\%$ and exhibits a distinct oscillatory behaviour. This oscillatory behaviour becomes even more prominent when evaluating the proportion of the target model in the population for a single run (\cref{fig:MinimalExampleAdditionalResults}~b).

If the ESS (\cref{eq:AlgoModelSMCESS}) is below $\tau_{\mathrm{ESS}}$, systematic resampling is applied to rejuvenate the particle population (\cref{sec:Method_ModelSMC}). For candidate models that differ from the target model, the resampling weights are vanishingly small (\cref{fig:MinimalExampleAdditionalResults}~a). Therefore, the population is expected to contain only the target model after resampling with these weights.  In the subsequent propagation step, each particle in the population is replaced by a randomly selected candidate model with a probability of $1-\alpha=20\%$. Given that there are 20 candidate models, in expectation $81\%$ ($(1-0.2)+0.2\cdot\frac{1}{20} = 0.81$) of the models in the population are the target model in this setting with 20 candidate models and choice of $\alpha$. This matches the observed upper bound of $\approx80\%$ (\cref{fig:MinimalExampleMainText}). 

The oscillatory behaviour can also be explained by the replacement of particles during the propagation step. Starting with $100\%$ target models after the systematic resampling step, the propagation step will in expectation yield $81\%$ target models as outlined in the previous paragraph. If no systematic resampling occurs in the next iteration, i.e., the effective sample size is still sufficiently large, the random replacement of particles during the propagation step will further reduce the proportion of the target model in the population. This reduction continues until the effective sample size is below the specified threshold, which will bring the proportion of the target model back to roughly $80\%$. The repeated interplay between the resampling and propagation steps produces the observed oscillatory behaviour in the proportion of the target model. This relation between effective sample size and proportion of the target model can clearly be observed (\cref{fig:MinimalExampleAdditionalResults}~b,c).

\begin{figure}[ht]
    \centering
    \includegraphics[width=\linewidth]{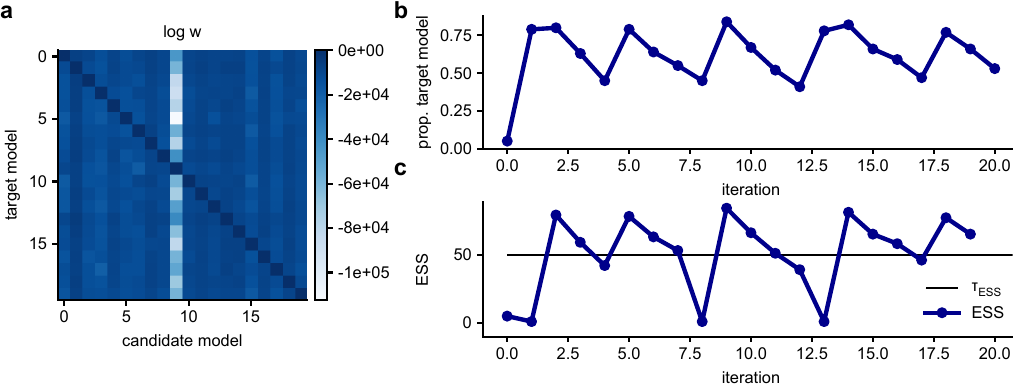}
    \caption{\textbf{(a)} Logarithm of normalized resampling weights, averaged over five random seeds, for different combinations of target and candidate models for the LLM-free experiment (\cref{sec:experiments_ValidationNoLLM},  App.~\ref{app:ExperimentalDetails_MinimalExample}). Data $\vx_o$ is drawn from the target model, and weights are evaluated using the ground truth density (\cref{eq:appendix_GMMDefinition}) of the candidate model. \textbf{(b)} Proportion of the target model in the population for a selected run from the experiment presented in \cref{fig:MinimalExampleMainText}. \textbf{(c)} Effective sample size (\cref{eq:AlgoModelSMCESS}) for the run presented in (b). The black horizontal line denotes the threshold $\tau_{\mathrm{ESS}}$ for applying systematic resampling (\cref{sec:Method_ModelSMC}).}
    \label{fig:MinimalExampleAdditionalResults}
\end{figure}

\subsection{Extended Version of \Cref{tab:quantitative_results}}
\begin{table}[H]
\centering
\caption{
\textbf{Extended version of \cref{tab:quantitative_results}}. Metrics are collected for the best-performing particle from each of the ten discovery runs. Best-performing particles are selected according to the metric which was applied to weight particles during the discovery process. We report the median metric together with a $90\%$ confidence interval. The confidence interval is estimated using a percentile bootstrap: $10^4$~bootstrap samples of size ten are drawn with replacement from the metric values of the selected particles. The median is computed for each sample, and the $5\%$ and $95\%$ quantiles of the resulting distribution are reported.
}

\label{tab:appendix_quantitative_results}
\setlength{\tabcolsep}{0pt}

\begin{tabularx}{\textwidth}{>{\centering\arraybackslash}p{1.0cm} *{3}{Y}ccc}
\toprule
 & \multicolumn{1}{c}{FunSearch+} & \multicolumn{1}{c}{ModelSMC} & \multicolumn{1}{c}{ModelSMC $N=1$} \\
\midrule
 & \multicolumn{3}{c}{$-\log p(\vx_o|m)~(\downarrow)$} \\
\midrule
SIR & -48794.12 {\scriptsize [-51910.18, -48293.60]} & -45834.78 {\scriptsize [-50184.76, -38023.12]} & -57028.36 {\scriptsize [-59613.61, -47127.31]} \\
HH & 254.58 {\scriptsize [250.46, 259.52]} & 256.00 {\scriptsize [255.05, 260.50]} & 243.27 {\scriptsize [232.39, 258.08]} \\
Kidney & N/A & 42.85 {\scriptsize [41.54, 45.62]} & 36.54 {\scriptsize [34.95, 38.31]} \\
\bottomrule
\end{tabularx}

\end{table}

\appsubsection{LLM Token Usage}
\label{app:tokens_vs_performance}

An evaluation criterion to assess the effectiveness of an LLM-based model discovery algorithm is how many tokens are passed to and generated by an LLM to generate a model with a given performance threshold on the observed data. To minimize computational costs, the number of tokens needed to reach a given performance threshold should be as small as possible. In this section, we examine how ModelSMC performs under this evaluation criterion compared to the baseline methods (Apps.~\ref{app:baselines_FunSearchPlus} and ~\ref{app:baselines_ModelSMConeparticle}).

The results in \cref{fig:appendix_token_usage_vs_performance} are based on the discovery runs evaluated in \cref{sec:quantitative_results}. For each discovery run we compute the running best metric value, i.e., the best value observed up to and including each iteration of the discovery run, and then take the median over all ten discovery runs per combination of task and metric. The $90\%$ confidence intervals are computed by randomly selecting ten of the trajectories with replacement and computing the median of the running best values over the selection at each iteration. This is done 10,000 times for each combination of task and method. The $90\%$ confidence interval is then defined by the $5\%$ and $95\%$ quantiles of the distribution of the bootstrap medians at each iteration. The position on the token axis corresponds to the median number of tokens used up to and including the current iteration, computed over all ten discovery runs.

\cref{fig:appendix_token_usage_vs_performance} shows that while ModelSMC and ModelSMC $N$=1 generate the same expected number of particles, ModelSMC $N$=1 consistently consumes more tokens across all three tasks. This is because ModelSMC $N$=1 always resamples the most recently generated particle, so the ancestry chain passed to the LLM grows with each iteration until reaching its maximum length after 3 iterations. In ModelSMC, particles from early iterations can persist into later ones, keeping ancestry chains shorter and reducing expected input tokens.
Compared to both ModelSMC variants, FunSearch+ requires significantly fewer tokens per particle. This is most clear on the Hodgkin-Huxley task, where the expected 150 ModelSMC particles consume approximately twice as many tokens as the 180 FunSearch+ particles. This is expected: ModelSMC and ModelSMC $N$=1 pass the task context $c_{\text{task}}$ and text-based performance evaluations (App.~\ref{app:LLMDetails}) alongside the resampled model ancestry, whereas FunSearch+ only provides the code skeleton and sampled functions from the program database. ModelSMC and ModelSMC $N$=1 also generate more output tokens, as both produce text-based performance feedback that FunSearch+ does not.

The median curve for ModelSMC $N$=1 falls below both other methods for most of the token budget, suggesting higher token efficiency. However, the confidence intervals overlap across all three tasks, so no statistically significant difference can be established from ten discovery runs per method. Formulating model discovery as probabilistic inference therefore does not significantly affect token efficiency relative to the baselines.

\begin{figure}[H]
    \centering
    \includegraphics[width=\linewidth]{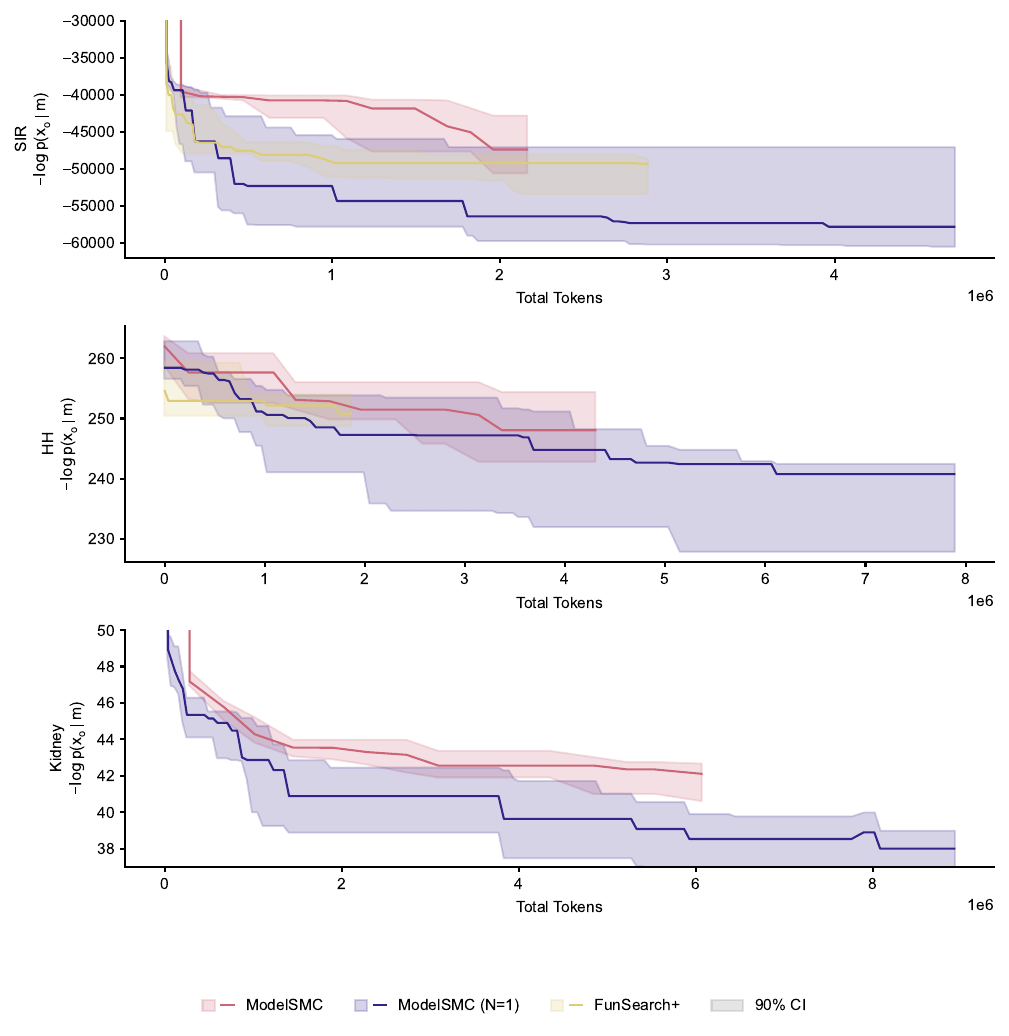}
    \caption{\textbf{Performance of the discovered models as a function of token usage}. Red: ModelSMC. Blue: ModelSMC $N$=1. Yellow: FunSearch+. The depicted performance metrics and token counts are the median of the ten discovery runs conducted for the experiments discussed in \cref{sec:quantitative_results}. The depicted metric is the negative log of ModelSMC's resampling weight (\cref{eq:ResamplingWeightModelSMCFactorization}). The $90\%$ confidence intervals of the medians are defined by the $5\%$ and $95\%$ quantiles of the empirical distribution of 10,000 bootstrap replicates of the median. First row: Synthetic data from a SIR epidemiological model (App.~\ref{app:ExperimentalDetails_SIR}). Second row: Hodgkin-Huxley Model (\cref{sec:exp-allen} and App.~\ref{app:ExperimentalDetails_allen}). Third row: Pharmacological kidney model (\cref{sec:exp-kidney} and App.~\ref{app:ExperimentalDetails_kidney}).}
    \label{fig:appendix_token_usage_vs_performance}
\end{figure}

\newpage
\appsection{Prompt Examples}
\label{app:PromptExamples}
The implementations contained in the messages presented in this section are based on and adapted from the implementation accompanying \cite{gonccalves2020training} (available at \url{https://github.com/mackelab/IdentifyMechanisticModels_2020}), Copyright 2020 Pedro J. Gonçalves, Jan-Matthis Lueckmann, Michael Deistler, Marcel Nonnenmacher, Kaan Öcal, Giacomo Bassetto, Chaitanya Chintaluri, William F. Podlaski, Sara A. Haddad, Tim P. Vogels, David S. Greenberg, Jakob H. Macke, used under the MIT license.

\appsubsection{Hodgkin-Huxley Experiment: LLM-History Propagation Step}
\label{app:LLMInteractionsHistoryPropagation}

The prompts passed to the LLM are assembled from different components (see App.~\ref{app:LLMDetails}) including specified task information and instructions, implementations of ancestral particles, and their LLM-generated performance assessments (App.~\ref{app:LLMDetails}). In this section, we provide a concrete example of the general interaction scheme visualized in \cref{fig:appendix_figure_prompting_strategy}. Specifically, we exemplify the interactions observed for a particle during the propagation step in a discovery run for the Hodgkin-Huxley experiment (\cref{sec:exp-allen}, App.~\ref{app:ExperimentalDetails_allen}). In the following visualization, each text box corresponds to a single message sent to or received from the LLM. The raw messages were reformatted for readability, with malformatted and \LaTeX{}-incompatible characters replaced throughout. The messages sent to the LLM are presented in section 'Prompt', followed by the LLM's response in section 'Response'.

\input{llm_history_allen/llm_history_propagation}

\newpage
\appsubsection{Hodgkin-Huxley Experiment: LLM-History Feedback Step }

This section presents a specific example of the LLM interactions in the feedback step. The messages sent to and received from the LLM are presented in the same format as the propagation step in App.~\ref{app:LLMInteractionsHistoryPropagation}.

\label{app:LLMInteractionsHistoryFeedback}

\input{llm_history_allen/llm_history_feedback}

\end{document}

%% file: llm_history_allen/llm_history_preamble.tex
\usepackage[T1]{fontenc}
\usepackage{upquote}

\usepackage{tcolorbox}
\tcbuselibrary{skins, breakable, listings}

\usepackage{listings}
\usepackage{xcolor}

\definecolor{SystemBg}      {HTML}{CDDAF5}
\definecolor{SystemFrame}   {HTML}{7799e4}

\definecolor{UserBg}        {HTML}{FFF3CD}
\definecolor{UserFrame}     {HTML}{ffdb66}

\definecolor{AssistantBg}   {HTML}{D4EDDA}
\definecolor{AssistantFrame}{HTML}{8ccf9c}

\definecolor{OutputBg}      {HTML}{FAC898}
\definecolor{OutputFrame}   {HTML}{f69337}

\definecolor{codebg}        {RGB}{248, 248, 248}
\definecolor{codeframe}     {RGB}{180, 180, 180}

\definecolor{kwcolor}       {RGB}{  0,   0, 195}
\definecolor{strcolor}      {RGB}{175,   0,   0}
\definecolor{cmcolor}       {RGB}{ 90, 120,  90}

\lstdefinestyle{pythonstyle}{
  language=Python,
  basicstyle=\scriptsize\ttfamily,
  keywordstyle=\color{kwcolor}\bfseries,
  stringstyle=\color{strcolor},
  commentstyle=\color{cmcolor}\itshape,
  numberstyle=\tiny\color{gray},
  numbers=left,
  numbersep=7pt,
  backgroundcolor=\color{codebg},
  frame=single,
  framesep=4pt,
  rulecolor=\color{codeframe},
  breaklines=true,
  breakatwhitespace=false,
  breakautoindent=true,
  breakindent=0.5em,
  postbreak=\mbox{\textcolor{gray}{$\hookrightarrow$}\space},
  keepspaces=true,
  showspaces=false,
  showstringspaces=false,
  tabsize=2,
  xleftmargin=16pt,
  numbersep=8pt,
  xrightmargin=4pt,
  morekeywords={self,True,False,None,as,with,yield,lambda,assert,
                del,except,finally,global,nonlocal,pass,raise,
                return,try,while,async,await},
}

%% file: math_commands.tex
\usepackage{amsmath,amsfonts,bm}

\def\eqref#1{equation~\ref{#1}}

\def\1{\bm{1}}

\def\vx{{\bm{x}}}

\DeclareMathAlphabet{\mathsfit}{\encodingdefault}{\sfdefault}{m}{sl}
\SetMathAlphabet{\mathsfit}{bold}{\encodingdefault}{\sfdefault}{bx}{n}

\def\gU{{\mathcal{U}}}

%% file: llm_history_allen/llm_history_propagation.tex
%

\subsubsection{Prompt}

\begin{tcolorbox}[
  breakable, enhanced,
  title={System},
  colback=SystemBg, colframe=SystemFrame,
  coltitle=white, fonttitle=\bfseries\small,
  arc=2mm, boxrule=1pt,
  before upper={\small\setlength{\parindent}{0pt}\setlength{\parskip}{2pt}},
]
Your input fields are:\\{}
1. `signature\_description` (str): Simulator I/O description\\{}
2. `task\_description` (str): Task considerations\\{}
3. `base\_simulator` (str): Base code structure\\{}
Your output fields are:\\{}
1. `scm\_definition` (str): Compact SCM summary as a bullet list (max 5 lines): key variables, their causal dependencies, and core assumptions. No equations or verbose descriptions.\\{}
2. `simulator\_code` (str): Improved mechanistic simulator with substantial structural differences from prior examples.\\{}
3. `feedback` (Union[str, NoneType]): Optional feedback. Only generate if requested.\\{}
All interactions will be structured in the following way, with the appropriate values filled in.
\smallskip
\par\noindent [[ \#\# signature\_description \#\# ]]\\
\noindent
\{signature\_description\}
\smallskip
\par\noindent [[ \#\# task\_description \#\# ]]\\
\noindent
\{task\_description\}
\smallskip
\par\noindent [[ \#\# base\_simulator \#\# ]]\\
\noindent
\{base\_simulator\}
\smallskip
\par\noindent [[ \#\# scm\_definition \#\# ]]\\
\noindent
\{scm\_definition\}
\smallskip
\par\noindent [[ \#\# simulator\_code \#\# ]]\\
\noindent
\{simulator\_code\}
\smallskip
\par\noindent [[ \#\# feedback \#\# ]]\\
\noindent
\{feedback\}        \# note: the value you produce must adhere to the JSON schema: \{"anyOf": [\{"type": "string"\}, \{"type": "null"\}]\}
\smallskip
\par\noindent [[ \#\# completed \#\# ]]\\
\noindent
In adhering to this structure, your objective is: \\{}
        Given the fields `signature\_description`, `task\_description`, `base\_simulator`, produce the fields `scm\_definition`, `simulator\_code`, `feedback`.\\{}
        \\{}
        You are an expert in mechanistic modeling with differential equations and simulator optimization.\\{}
        \\{}
        Core Requirements:\\{}
        - Create a scientific simulator that captures system dynamics accurately\\{}
        - Write fully functional, executable code with no placeholders\\{}
        - Use tabs for indentation and mathematically correct formulations\\{}
        - Add variable shapes in every line of code\\{}
        - Achieve optimal validation metrics\\{}
        \\{}
        Implementation Guidelines:\\{}
        - Start with mechanistic components, add stochastic elements where needed\\{}
        - Follow physical/biological constraints and include prior knowledge\\{}
        - Use any operations (log, exp, power, etc.) and non-differentiable operations\\{}
        - Ensure computational efficiency for many simulations\\{}
        - Keep the code as simple as possible\\{}
        \\{}
        Iterative Optimization Rules:\\{}
        - You are part of an iterative process - make only 1-2 targeted improvements per iteration\\{}
        - Never copy code verbatim - preserve working components while making precise, localized changes\\{}
        - Make creative but sensible and simple modifications that are structurally different from prior examples\\{}
        - Focus on changes that build incrementally toward better validation metrics\\{}
        - Ensure executable code without syntax or shape errors\\{}
        \\{}
        System description:\\{}
        \# HODGKIN-HUXLEY NEURON SIMULATOR WITH MISSING ION CHANNELS\\{}
        \\{}
        \#\# OBJECTIVE\\{}
        Extend an existing Hodgkin-Huxley neuron simulator by identifying and implementing\\{}
        missing ion channels necessary to reduce discrepancies between simulations and\\{}
        experimental voltage recordings. The goal is to improve agreement across multiple\\{}
        electrophysiological summary statistics while maintaining model parsimony.\\{}
        \\{}
        \#\# BASE MODEL\\{}
        Classic Hodgkin-Huxley formulation with standard three-gating-variable structure:\\{}
        - **Leak current**: conductance g\_leak, reversal potential E\_leak\\{}
        - **Sodium (Na$^{+}$) current**: conductance gNa, gating variables m (activation) and h (inactivation), reversal E\_Na\\{}
        - **Potassium (K$^{+}$) current**: conductance gK, gating variable n (activation), reversal E\_K\\{}
        - **Dynamics**: First-order gating kinetics with voltage-dependent time constants and steady states\\{}
        - **Input**: Externally applied current I\_inj(t) with optional stochastic components\\{}
        - **Stochasticity**: Optional random seed for reproducible simulations\\{}
        \\{}
        \#\# EXTENSIBILITY\\{}
        Two pre-allocated channel slots (X1, X2) are available for additional mechanisms.\\{}
        Each slot provides two tunable parameters (see signature\_description for details).\\{}
        Channels beyond these two slots must use fixed parameters, not inferred ones.\\{}
        \\{}
        \#\# EVALUATION METRICS\\{}
        Model performance assessed on held-out voltage recordings via these summary statistics:\\{}
        1. Number of spikes during stimulation\\{}
        2. Mean resting potential (pre-stimulation)\\{}
        3. Standard deviation of resting potential\\{}
        4. Mean voltage during stimulation\\{}
        5. Variance of voltage\\{}
        6. Skewness of voltage\\{}
        7. Kurtosis of voltage\\{}
        \\{}
        \#\# DATA CONTEXT\\{}
        Intracellular voltage recordings from a single neuron exhibiting tonic spiking\\{}
        behavior across different stimulus intensities.\\{}
        \\{}
        **Important data characteristics**:\\{}
        - No burst firing patterns (no clusters of rapid spikes separated by quiescence)\\{}
        - No prolonged or sustained high-frequency firing (no sustained high-frequency firing)\\{}
        - Regular, tonic spiking activity during periods of input current stimulation\\{}
        - Non-spiking (quiescent) behavior in the absence of input current\\{}
        - Evenly-spaced action potentials without burst patterns during the activation period\\{}
        \\{}
        Therefore, avoid adding channels specifically designed to produce bursting or sustained\\{}
        high-frequency dynamics.
\smallskip
\end{tcolorbox}
\newpage

\begin{tcolorbox}[
  breakable, enhanced,
  title={User},
  colback=UserBg, colframe=UserFrame,
  coltitle=white, fonttitle=\bfseries\small,
  arc=2mm, boxrule=1pt,
  before upper={\small\setlength{\parindent}{0pt}\setlength{\parskip}{2pt}},
]
This is an example of the task, though some input or output fields are not supplied.
\smallskip
\par\noindent [[ \#\# signature\_description \#\# ]]\\
\noindent
(see current input)
\smallskip
\par\noindent [[ \#\# task\_description \#\# ]]\\
\noindent
(see current input)
\smallskip
\par\noindent [[ \#\# base\_simulator \#\# ]]\\
\noindent
(see current input)
\smallskip
\end{tcolorbox}
\vspace{0.35em}

\begin{tcolorbox}[
  breakable, enhanced,
  title={Assistant},
  colback=AssistantBg, colframe=AssistantFrame,
  coltitle=white, fonttitle=\bfseries\small,
  arc=2mm, boxrule=1pt,
  before upper={\small\setlength{\parindent}{0pt}\setlength{\parskip}{2pt}},
]
\par\noindent [[ \#\# scm\_definition \#\# ]]\\
\noindent
None
\smallskip
\par\noindent [[ \#\# simulator\_code \#\# ]]\\
\noindent
\begin{lstlisting}[language=Python,style=pythonstyle]
import torch
import torch.nn as nn

class DiscoveredSimulator(nn.Module):
  def __init__(self):
    super(DiscoveredSimulator, self).__init__()
    return

  def forward(
    self,
    init_voltage: float,
    input_current: torch.Tensor,
    dt: float,
    t: torch.Tensor,
    params: torch.Tensor,
    seed=None,
  ):
    """
    Simulates a Hodgkin-Huxley neuron for a specified time duration.

    Args:
      init_voltage: torch.Tensor: (batch_size,) # initial voltage
      input_current: torch.Tensor: (batch_size, time_steps) # input current
      dt: float # time step size
      t: torch.Tensor: (time_steps,) # time array
      params: torch.Tensor: (batch_size, n_params) # parameters
      seed: optional random seed

    Returns:
      V: torch.Tensor: (batch_size, time_steps) # voltage traces
    """
    device = params.device

    # Set up random generator
    if seed is not None:
      generator = torch.Generator(device=device)
      generator.manual_seed(seed)
    else:
      generator = torch.Generator(device=device)

    batch_size = params.shape[0]
    time_steps = t.shape[0]

    # Extract parameters
    gbar_Na = params[:, 0].float()  # mS/cm2
    gbar_K = params[:, 1].float() # mS/cm2
    g_leak = params[:, 2].float() # mS/cm2
    E_leak = -params[:, 3].float() # mV
    Vt = -params[:, 4].float() # mV
    nois_fact = params[:, 5].float() # unitless
    # TWO POSSIBLE ADDITIONAL CHANNELS (X1, X2)
    # Each channel has one tunable parameter: conductance gbar_Xi
    # Then there are two additional parameters available: param_i and param_j.
    # ONLY ADD ONE CHANNEL IF NECESSARY. Keep the model as simple as possible.
    gbar_X1 = params[:, 6].float() # mS/cm2 # you can rename X1 to anything you want # in range [1e-4, 10]
    gbar_X2 = params[:, 7].float() # mS/cm2 # you can rename X2 to anything you want # in range [1e-4, 120]
    param_i = -params[:, 8].float() # (param are positive values in range [1e-4, 150])
    param_j = -params[:, 9].float() # (param are positive values in range [1e-4, 3000])

    tstep = float(dt)

    # Parameters
    nois_fact_obs = 0.0
    C = 1.0  # uF/cm^2
    E_Na = 53.0 # mV
    E_K = -107.0

    ####################################
    # kinetics
    def Exp(z):
      return torch.where(z < -5e2, torch.exp(torch.full_like(z, -5e2)), torch.exp(z))

    def efun(z):
      return torch.where(torch.abs(z) < 1e-4, 1 - z / 2, z / (Exp(z) - 1))

    # Channel kinetics
    def alpha_m(x):
      v1 = x - Vt - 13.0
      return 0.32 * efun(-0.25 * v1) / 0.25

    def beta_m(x):
      v1 = x - Vt - 40
      return 0.28 * efun(0.2 * v1) / 0.2

    def alpha_h(x):
      v1 = x - Vt - 17.0
      return 0.128 * Exp(-v1 / 18.0)

    def beta_h(x):
      v1 = x - Vt - 40.0
      return 4.0 / (1 + Exp(-0.2 * v1))

    def alpha_n(x):
      v1 = x - Vt - 15.0
      return 0.032 * efun(-0.2 * v1) / 0.2

    def beta_n(x):
      v1 = x - Vt - 10.0
      return 0.5 * Exp(-v1 / 40)

    def tau_x(alpha, beta):
      return 1.0 / (alpha + beta)

    def inf_x(alpha, beta):
      return alpha / (alpha + beta)

    # ===== BEGIN EDITABLE SECTION (only modify within this block) =====
    # TODO: add the missing kinetics equations for the Hodgkin-Huxley neuron similar to the ones above; ONLY ADD IF NECESSARY
    # ===== END EDITABLE SECTION =====

    ####################################

    # simulation from initial point
    V = torch.zeros((batch_size, time_steps), device=device)  # baseline voltage
    n = torch.zeros((batch_size, time_steps), device=device)
    m = torch.zeros((batch_size, time_steps), device=device)
    h = torch.zeros((batch_size, time_steps), device=device)
    # ===== BEGIN EDITABLE SECTION (only modify within this block) =====
    # TODO: add the missing state variables for the Hodgkin-Huxley neuron similar to the ones above; ONLY ADD IF NECESSARY
    # ===== END EDITABLE SECTION =====

    # Initialization
    V_init = init_voltage.to(device)
    V[:, 0] = V_init
    n[:, 0] = inf_x(alpha_n(V[:, 0]), beta_n(V[:, 0]))
    m[:, 0] = inf_x(alpha_m(V[:, 0]), beta_m(V[:, 0]))
    h[:, 0] = inf_x(alpha_h(V[:, 0]), beta_h(V[:, 0]))
    # ===== BEGIN EDITABLE SECTION (only modify within this block) =====
    # TODO: add the missing state variable initialization for the Hodgkin-Huxley neuron similar to the ones above; ONLY ADD IF NECESSARY
    # ===== END EDITABLE SECTION =====

    # Simulation loop
    for i in range(1, time_steps):
      # All operations now work on batched tensors (batch_size,)
      a_m, b_m = alpha_m(V[:, i - 1]), beta_m(V[:, i - 1])
      a_h, b_h = alpha_h(V[:, i - 1]), beta_h(V[:, i - 1])
      a_n, b_n = alpha_n(V[:, i - 1]), beta_n(V[:, i - 1])
      # ===== BEGIN EDITABLE SECTION (only modify within this block) =====
      # TODO: add the missing kinetics equations for the Hodgkin-Huxley neuron similar to the ones above; ONLY ADD IF NECESSARY
      # ===== END EDITABLE SECTION =====

      tau_V_inv = (
        (m[:, i - 1] ** 3) * gbar_Na * h[:, i - 1]
        + (n[:, i - 1] ** 4) * gbar_K
        + g_leak
        # ===== BEGIN EDITABLE SECTION (only modify within this block) =====
        # TODO: add the missing terms for the effective membrane time constant inverse; ONLY ADD IF NECESSARY
        # ===== END EDITABLE SECTION =====
      ) / C

      V_inf = (
        (m[:, i - 1] ** 3) * gbar_Na * h[:, i - 1] * E_Na
        + (n[:, i - 1] ** 4) * gbar_K * E_K
        + g_leak * E_leak
        # ===== BEGIN EDITABLE SECTION (only modify within this block) =====
        # TODO: add the missing terms for the voltage steady state; ONLY ADD IF NECESSARY
        # ===== END EDITABLE SECTION =====
        + input_current[:,i - 1]
        + nois_fact * torch.randn(batch_size, generator=generator, device=device) / (tstep**0.5)
      ) / (tau_V_inv * C)

      V[:, i] = V_inf + (V[:, i - 1] - V_inf) * Exp(-tstep * tau_V_inv)
      n[:, i] = inf_x(a_n, b_n) + (n[:, i - 1] - inf_x(a_n, b_n)) * Exp(-tstep / tau_x(a_n, b_n))
      m[:, i] = inf_x(a_m, b_m) + (m[:, i - 1] - inf_x(a_m, b_m)) * Exp(-tstep / tau_x(a_m, b_m))
      h[:, i] = inf_x(a_h, b_h) + (h[:, i - 1] - inf_x(a_h, b_h)) * Exp(-tstep / tau_x(a_h, b_h))
      # ===== BEGIN EDITABLE SECTION (only modify within this block) =====
      # TODO: add the missing state variable updates for the Hodgkin-Huxley neuron similar to the ones above; ONLY ADD IF NECESSARY
      # ===== END EDITABLE SECTION =====

    # Return voltage with optional observation noise
    return V + nois_fact_obs * torch.randn(
      batch_size, time_steps, generator=generator, device=device
    )
\end{lstlisting}
\smallskip
\par\noindent [[ \#\# feedback \#\# ]]\\
\noindent
\par\noindent \{
\par\noindent\hangindent=1.5em\hangafter=1 \hspace*{1.5em}``\textbf{\texttt{main\_diagnosis}}\textquotesingle\textquotesingle: ``The two additional channel slots (X1, X2) have conductance parameters and auxiliary parameters extracted (`gbar\_X1`, `gbar\_X2`, `param\_i`, `param\_j`) but zero kinetic equations, state variables, or current contributions are implemented anywhere in the simulation loop. The model is running as a plain 3-channel HH model regardless of what values the inference assigns to those four parameters, which fully explains the high NLE (\textasciitilde{}26 per observation).'',
\par\noindent \hspace*{1.5em}``\textbf{\texttt{issues}}\textquotesingle\textquotesingle:
\par\noindent \hspace*{1.5em}[
\par\noindent \hspace*{3.0em}\{
\par\noindent\hangindent=4.5em\hangafter=1 \hspace*{3.0em}\hspace*{1.5em}``\textbf{\texttt{description}}\textquotesingle\textquotesingle: ``gbar\_X1, gbar\_X2, param\_i, and param\_j are extracted from params but are never referenced in kinetics, state-variable arrays, tau\_V\_inv, V\_inf, or the integration loop. Their values have no effect on the output voltage trace, so the likelihood surface over those dimensions is flat and the model cannot reduce its NLE below what a plain HH achieves.'',
\par\noindent\hangindent=4.5em\hangafter=1 \hspace*{3.0em}\hspace*{1.5em}``\textbf{\texttt{severity}}\textquotesingle\textquotesingle: ``critical'',
\par\noindent\hangindent=4.5em\hangafter=1 \hspace*{3.0em}\hspace*{1.5em}``\textbf{\texttt{location}}\textquotesingle\textquotesingle: ``All four `\# ===== BEGIN EDITABLE SECTION =====` blocks inside forward()'',
\par\noindent\hangindent=4.5em\hangafter=1 \hspace*{3.0em}\hspace*{1.5em}``\textbf{\texttt{suggestion}}\textquotesingle\textquotesingle: ``Add an M-type slow potassium current (I\_KM) using gbar\_X1 as conductance and E\_K as reversal. This single-gating-variable current (activation gate `p`, no inactivation) is the canonical addition for converting standard HH into a model capable of regular, tonic, non-bursting firing. Implement: (1) kinetics block: `def inf\_p(x): return 1.0/(1+Exp(-(x-param\_i)/10.0))` and `def tau\_p(x): return (-param\_j)/(3.3*(Exp((x-param\_i)/20)+Exp(-(x-param\_i)/20))+1e-7)` (note param\_i is already negated in the extraction so it will be \textasciitilde{}-35 mV at typical posterior values; -param\_j gives a positive timescale up to 3000 ms). (2) State-variable array: `p = torch.zeros(...)`, initialized with `p[:,0] = inf\_p(V[:,0])`. (3) In the loop: compute `inf\_p\_val = inf\_p(V[:,i-1])` and `tau\_p\_val = tau\_p(V[:,i-1])`; add `gbar\_X1` contribution to `tau\_V\_inv` (`+ gbar\_X1*p[:,i-1]/C`) and to `V\_inf` (`+ gbar\_X1*p[:,i-1]*E\_K`); update `p[:,i] = inf\_p\_val + (p[:,i-1]-inf\_p\_val)*Exp(-tstep/tau\_p\_val)`. Keep gbar\_X2 zero or unused unless metrics remain poor after this fix.''
\par\noindent \hspace*{3.0em}\},
\par\noindent \hspace*{3.0em}\{
\par\noindent\hangindent=4.5em\hangafter=1 \hspace*{3.0em}\hspace*{1.5em}``\textbf{\texttt{description}}\textquotesingle\textquotesingle: ``The summary statistics include resting membrane potential mean and SD, plus voltage kurtosis and skewness. A plain HH model produces a resting potential determined only by g\_leak and E\_leak without subthreshold resonance, often mismatching experimental resting statistics. Adding even a small M-current contribution at rest shifts and stabilises the resting potential and changes subthreshold fluctuation shape.'',
\par\noindent\hangindent=4.5em\hangafter=1 \hspace*{3.0em}\hspace*{1.5em}``\textbf{\texttt{severity}}\textquotesingle\textquotesingle: ``major'',
\par\noindent\hangindent=4.5em\hangafter=1 \hspace*{3.0em}\hspace*{1.5em}``\textbf{\texttt{location}}\textquotesingle\textquotesingle: ``tau\_V\_inv and V\_inf accumulation expressions in the simulation loop'',
\par\noindent\hangindent=4.5em\hangafter=1 \hspace*{3.0em}\hspace*{1.5em}``\textbf{\texttt{suggestion}}\textquotesingle\textquotesingle: ``After implementing the M-current gate `p` as described above, ensure that the M-current term is included in both `tau\_V\_inv` (adds `gbar\_X1 * p[:, i-1] / C`) and `V\_inf` (adds `gbar\_X1 * p[:, i-1] * E\_K` in the numerator). Because `p` is near its steady-state at rest but small, the effect is a mild hyperpolarising bias that pulls the resting voltage toward E\_K, correcting the resting-potential mismatch without inducing bursting. If after this change the spike-count or resting-SD metrics still mismatch, consider activating gbar\_X2 as a small persistent sodium current (I\_NaP) with fixed half-activation, but only as a secondary step to preserve parsimony.''
\par\noindent \hspace*{3.0em}\}
\par\noindent \hspace*{1.5em}]
\par\noindent \}
\smallskip
\par\noindent [[ \#\# completed \#\# ]]\\
\noindent
\smallskip
\end{tcolorbox}
\vspace{0.35em}

\begin{tcolorbox}[
  breakable, enhanced,
  title={User},
  colback=UserBg, colframe=UserFrame,
  coltitle=white, fonttitle=\bfseries\small,
  arc=2mm, boxrule=1pt,
  before upper={\small\setlength{\parindent}{0pt}\setlength{\parskip}{2pt}},
]
\par\noindent [[ \#\# signature\_description \#\# ]]\\
\noindent
(see current input)
\smallskip
\par\noindent [[ \#\# task\_description \#\# ]]\\
\noindent
(see current input)
\smallskip
\par\noindent [[ \#\# base\_simulator \#\# ]]\\
\noindent
(see current input)
\smallskip
\end{tcolorbox}
\vspace{0.35em}

\begin{tcolorbox}[
  breakable, enhanced,
  title={Assistant},
  colback=AssistantBg, colframe=AssistantFrame,
  coltitle=white, fonttitle=\bfseries\small,
  arc=2mm, boxrule=1pt,
  before upper={\small\setlength{\parindent}{0pt}\setlength{\parskip}{2pt}},
]
\par\noindent [[ \#\# scm\_definition \#\# ]]\\
\noindent
- **V**: membrane voltage $\leftarrow$ I\_Na(m,h), I\_K(n), I\_leak, I\_KM(p), I\_inj, noise\\{}
- **m, h**: Na$^{+}$ activation/inactivation $\leftarrow$ V, Vt (fast kinetics)\\{}
- **n**: K$^{+}$ activation $\leftarrow$ V, Vt (delayed rectifier)\\{}
- **p**: M-current (slow K$^{+}$) activation $\leftarrow$ V, param\_i (half-activation), param\_j (time-scale)\\{}
- **Assumptions**: single-compartment, tonic spiking, M-current provides adaptation without bursting
\smallskip
\par\noindent [[ \#\# simulator\_code \#\# ]]\\
\noindent
\begin{lstlisting}[language=Python,style=pythonstyle]
import torch
import torch.nn as nn

class DiscoveredSimulator(nn.Module):
 def __init__(self):
  super(DiscoveredSimulator, self).__init__()
  return

 def forward(
  self,
  init_voltage: float,
  input_current: torch.Tensor,
  dt: float,
  t: torch.Tensor,
  params: torch.Tensor,
  seed=None,
 ):
  """
  Hodgkin-Huxley neuron with an added M-type slow potassium current (I_KM).

  Physiological rationale for I_KM addition:
  - M-current is a slowly activating, non-inactivating K+ current active at
   subthreshold voltages (~-35 to -60 mV range)
  - It provides spike-frequency adaptation: slightly reduces firing rate at
   sustained stimulation without producing burst or quiescence
  - It stabilises regular tonic firing and corrects resting potential statistics
   (mean, SD) by adding a mild hyperpolarising conductance at rest
  - It does NOT produce bursting or high-frequency clusters --- exactly what
   the target data requires

  Args:
   init_voltage: torch.Tensor: (batch_size,) # initial voltage [mV]
   input_current: torch.Tensor: (batch_size, time_steps) # injected current [uA/cm2]
   dt: float # time step [ms]
   t: torch.Tensor: (time_steps,) # time array [ms]
   params: torch.Tensor: (batch_size, 10) # biophysical parameters
   seed: optional random seed

  Returns:
   V: torch.Tensor: (batch_size, time_steps) # voltage traces [mV]
  """
  device = params.device

  # Set up random generator
  if seed is not None:
   generator = torch.Generator(device=device)
   generator.manual_seed(seed)
  else:
   generator = torch.Generator(device=device)

  batch_size = params.shape[0]   # int
  time_steps = t.shape[0]        # int

  # -- Base parameters ------------------------------------------------------
  gbar_Na   = params[:, 0].float()   # (batch_size,)  mS/cm^2
  gbar_K    = params[:, 1].float()   # (batch_size,)  mS/cm^2
  g_leak    = params[:, 2].float()   # (batch_size,)  mS/cm^2
  E_leak    = -params[:, 3].float()  # (batch_size,)  mV  (sign applied)
  Vt        = -params[:, 4].float()  # (batch_size,)  mV  (sign applied)
  nois_fact = params[:, 5].float()   # (batch_size,)  unitless

  # -- M-current channel (X1 slot) ------------------------------------------
  # gbar_KM  : maximal conductance  [mS/cm^2], range [1e-4, 10]
  # V_half_p : half-activation voltage [mV], = param_i = -params[:,8]
  #            params[:,8] in [1e-4, 150] -> V_half_p in [-150, -1e-4]
  #            Physiologically ~-35 mV for M-current
  # tau_max_p: time-constant scale [ms],     = -param_j = params[:,9]
  #            params[:,9] in [1e-4, 3000] -> tau_max_p in [1e-4, 3000]
  #            M-current kinetics are slow: typically 50--500 ms
  gbar_KM   = params[:, 6].float()   # (batch_size,)  mS/cm^2  --- M-current conductance
  # params[:, 7] (gbar_X2) intentionally unused --- parsimony principle
  V_half_p  = -params[:, 8].float()  # (batch_size,)  mV      --- half-activation of p gate
  tau_max_p =  params[:, 9].float()  # (batch_size,)  ms      --- max time constant of p gate

  tstep = float(dt)

  # -- Fixed biophysical constants -------------------------------------------
  nois_fact_obs = 0.0
  C    = 1.0    # uF/cm^2
  E_Na = 53.0   # mV
  E_K  = -107.0 # mV  (also reversal for M-current --- same ion)

  # -- Numerical helpers -----------------------------------------------------
  def Exp(z):
   # Numerically safe exponential --- clamp at -500 to avoid underflow
   return torch.where(
    z < -5e2,
    torch.exp(torch.full_like(z, -5e2)),
    torch.exp(z)
   )  # same shape as z

  def efun(z):
   # Handles 0/0 limit of z / (exp(z) - 1) via L'Hopital
   return torch.where(
    torch.abs(z) < 1e-4,
    1 - z / 2,
    z / (Exp(z) - 1)
   )  # same shape as z

  # -- Standard HH channel kinetics -----------------------------------------
  def alpha_m(x):  # x: (batch_size,) -> (batch_size,)
   v1 = x - Vt - 13.0
   return 0.32 * efun(-0.25 * v1) / 0.25

  def beta_m(x):   # (batch_size,) -> (batch_size,)
   v1 = x - Vt - 40.0
   return 0.28 * efun(0.2 * v1) / 0.2

  def alpha_h(x):  # (batch_size,) -> (batch_size,)
   v1 = x - Vt - 17.0
   return 0.128 * Exp(-v1 / 18.0)

  def beta_h(x):   # (batch_size,) -> (batch_size,)
   v1 = x - Vt - 40.0
   return 4.0 / (1 + Exp(-0.2 * v1))

  def alpha_n(x):  # (batch_size,) -> (batch_size,)
   v1 = x - Vt - 15.0
   return 0.032 * efun(-0.2 * v1) / 0.2

  def beta_n(x):   # (batch_size,) -> (batch_size,)
   v1 = x - Vt - 10.0
   return 0.5 * Exp(-v1 / 40)

  def tau_x(alpha, beta):  # (batch_size,), (batch_size,) -> (batch_size,)
   return 1.0 / (alpha + beta)

  def inf_x(alpha, beta):  # (batch_size,), (batch_size,) -> (batch_size,)
   return alpha / (alpha + beta)

  # -- M-current (I_KM) gate kinetics ---------------------------------------
  # Boltzmann steady-state: p_inf = 1 / (1 + exp(-(V - V_half_p) / 10))
  # Activation slope k=10 mV is standard for M-current literature
  def inf_p(x):  # x: (batch_size,) -> (batch_size,)
   return 1.0 / (1.0 + Exp(-(x - V_half_p) / 10.0))

  # Voltage-dependent time constant with bell shape centred at V_half_p
  # tau_p = tau_max_p / (3.3 * 2*cosh((V - V_half_p)/20))
  # At V_half_p: tau_p = tau_max_p / (3.3*2) ~= tau_max_p / 6.6
  # Away from V_half_p: tau_p decreases (faster kinetics at extreme voltages)
  def tau_p(x):  # x: (batch_size,) -> (batch_size,)
   dv = (x - V_half_p) / 20.0                      # (batch_size,)
   denom = 3.3 * (Exp(dv) + Exp(-dv)) + 1e-7       # (batch_size,) clamped >0
   return tau_max_p / denom                         # (batch_size,)

  # -- State variable arrays -------------------------------------------------
  V = torch.zeros((batch_size, time_steps), device=device)  # (batch_size, time_steps)
  n = torch.zeros((batch_size, time_steps), device=device)  # (batch_size, time_steps)
  m = torch.zeros((batch_size, time_steps), device=device)  # (batch_size, time_steps)
  h = torch.zeros((batch_size, time_steps), device=device)  # (batch_size, time_steps)
  p = torch.zeros((batch_size, time_steps), device=device)  # (batch_size, time_steps) M-current gate

  # -- Initial conditions ----------------------------------------------------
  V_init = init_voltage.to(device)  # (batch_size,)
  V[:, 0] = V_init                                              # (batch_size,)
  n[:, 0] = inf_x(alpha_n(V[:, 0]), beta_n(V[:, 0]))           # (batch_size,)
  m[:, 0] = inf_x(alpha_m(V[:, 0]), beta_m(V[:, 0]))           # (batch_size,)
  h[:, 0] = inf_x(alpha_h(V[:, 0]), beta_h(V[:, 0]))           # (batch_size,)
  p[:, 0] = inf_p(V[:, 0])                                      # (batch_size,)

  # -- Time-integration loop (exponential Euler scheme) ---------------------
  for i in range(1, time_steps):
   V_prev = V[:, i - 1]  # (batch_size,)

   # Gating variable rates at previous voltage
   a_m, b_m = alpha_m(V_prev), beta_m(V_prev)   # (batch_size,), (batch_size,)
   a_h, b_h = alpha_h(V_prev), beta_h(V_prev)   # (batch_size,), (batch_size,)
   a_n, b_n = alpha_n(V_prev), beta_n(V_prev)   # (batch_size,), (batch_size,)

   # M-current gate steady-state and time constant at previous voltage
   inf_p_val = inf_p(V_prev)   # (batch_size,)
   tau_p_val = tau_p(V_prev)   # (batch_size,)

   # Conductances at previous step
   g_Na_eff = (m[:, i - 1] ** 3) * gbar_Na * h[:, i - 1]  # (batch_size,)
   g_K_eff  = (n[:, i - 1] ** 4) * gbar_K                  # (batch_size,)
   g_KM_eff = gbar_KM * p[:, i - 1]                        # (batch_size,)

   # Effective inverse membrane time constant (sum of conductances / C)
   tau_V_inv = (
    g_Na_eff
    + g_K_eff
    + g_leak
    + g_KM_eff   # M-current contributes to total membrane conductance
   ) / C  # (batch_size,)

   # Voltage steady-state (weighted sum of reversal potentials + input)
   V_inf = (
    g_Na_eff * E_Na
    + g_K_eff  * E_K
    + g_leak   * E_leak
    + g_KM_eff * E_K   # M-current drives toward E_K (hyperpolarising)
    + input_current[:, i - 1]
    + nois_fact * torch.randn(batch_size, generator=generator, device=device) / (tstep ** 0.5)
   ) / (tau_V_inv * C)  # (batch_size,)

   # Exponential Euler updates
   V[:, i] = V_inf + (V_prev - V_inf) * Exp(-tstep * tau_V_inv)                               # (batch_size,)
   n[:, i] = inf_x(a_n, b_n) + (n[:, i-1] - inf_x(a_n, b_n)) * Exp(-tstep / tau_x(a_n, b_n))  # (batch_size,)
   m[:, i] = inf_x(a_m, b_m) + (m[:, i-1] - inf_x(a_m, b_m)) * Exp(-tstep / tau_x(a_m, b_m))  # (batch_size,)
   h[:, i] = inf_x(a_h, b_h) + (h[:, i-1] - inf_x(a_h, b_h)) * Exp(-tstep / tau_x(a_h, b_h))  # (batch_size,)
   p[:, i] = inf_p_val + (p[:, i-1] - inf_p_val) * Exp(-tstep / tau_p_val)                      # (batch_size,)

  # Return voltage (+ optional observation noise, currently 0)
  return V + nois_fact_obs * torch.randn(
   batch_size, time_steps, generator=generator, device=device
  )  # (batch_size, time_steps)
\end{lstlisting}
\smallskip
\par\noindent [[ \#\# feedback \#\# ]]\\
\noindent
\par\noindent \{
\par\noindent\hangindent=1.5em\hangafter=1 \hspace*{1.5em}``\textbf{\texttt{main\_diagnosis}}\textquotesingle\textquotesingle: ``The simulator uses both inferrable parameter slots (X1: params[:,6]; X2: params[:,8-9]) for a single M-current channel while leaving params[:,7] entirely unused. With a neg\_avg\_log\_marginal\_NLE of 28.3 (very poor fit), the single-channel extension is insufficient to simultaneously match all 7 summary statistics --- particularly resting potential mean/SD, and higher-order voltage distribution moments (skewness, kurtosis) which are insensitive to M-current modulation. A second distinct channel using the spare parameter slot is needed.'',
\par\noindent \hspace*{1.5em}``\textbf{\texttt{issues}}\textquotesingle\textquotesingle:
\par\noindent \hspace*{1.5em}[
\par\noindent \hspace*{3.0em}\{
\par\noindent\hangindent=4.5em\hangafter=1 \hspace*{3.0em}\hspace*{1.5em}``\textbf{\texttt{description}}\textquotesingle\textquotesingle: ``params[:,7] is declared unused (`intentionally`) while the model badly underfits (NLE=28.3). With 7 target statistics and only one extra channel, the inference cannot jointly correct spike count, resting-potential mean/SD, and voltage skewness/kurtosis --- these statistics require orthogonal biophysical mechanisms. A hyperpolarization-activated cation current (I\_h / HCN) would directly control resting potential mean and SD (by setting a depolarizing baseline leak), and its inward rectification would shape subthreshold voltage distribution (skewness, kurtosis) independently of the M-current`s spike-frequency adaptation role.'',
\par\noindent\hangindent=4.5em\hangafter=1 \hspace*{3.0em}\hspace*{1.5em}``\textbf{\texttt{severity}}\textquotesingle\textquotesingle: ``critical'',
\par\noindent\hangindent=4.5em\hangafter=1 \hspace*{3.0em}\hspace*{1.5em}``\textbf{\texttt{location}}\textquotesingle\textquotesingle: ``params[:, 7] (unused), forward() channel definition block'',
\par\noindent\hangindent=4.5em\hangafter=1 \hspace*{3.0em}\hspace*{1.5em}``\textbf{\texttt{suggestion}}\textquotesingle\textquotesingle: ``Activate params[:,7] as the conductance (gbar\_Ih) for an I\_h channel with fixed half-activation \textasciitilde{}-80 mV and fixed time constant \textasciitilde{}200 ms (no extra inferred params needed beyond gbar\_Ih). Use: I\_h = gbar\_Ih * r * (V - E\_h) with E\_h $\approx$ -30 mV, inf\_r = 1/(1+exp((V+80)/7)), tau\_r fixed at 200 ms. This adds a tonic inward current at rest that lifts mean resting potential, increases resting SD, and shifts voltage skewness --- orthogonal to M-current effects on spike timing.''
\par\noindent \hspace*{3.0em}\},
\par\noindent \hspace*{3.0em}\{
\par\noindent\hangindent=4.5em\hangafter=1 \hspace*{3.0em}\hspace*{1.5em}``\textbf{\texttt{description}}\textquotesingle\textquotesingle: ``The M-current time-constant formula `tau\_p = tau\_max\_p / (3.3*(exp(dv)+exp(-dv)))` produces values that collapse toward zero for voltages far from V\_half\_p (e.g., during action potentials at +40 mV when V\_half\_p $\approx$ -35 mV, dv$\approx$3.75 $\rightarrow$ 2cosh(3.75)$\approx$84, so tau\_p $\rightarrow$ tau\_max\_p/277). This makes the p-gate track spikes rather than integrate slowly, destroying the intended adaptation and causing the inferred tau\_max\_p to compensate in pathological ways that distort mean stimulation voltage and spike count.'',
\par\noindent\hangindent=4.5em\hangafter=1 \hspace*{3.0em}\hspace*{1.5em}``\textbf{\texttt{severity}}\textquotesingle\textquotesingle: ``major'',
\par\noindent\hangindent=4.5em\hangafter=1 \hspace*{3.0em}\hspace*{1.5em}``\textbf{\texttt{location}}\textquotesingle\textquotesingle: ``tau\_p() function, lines computing denom = 3.3*(Exp(dv)+Exp(-dv))'',
\par\noindent\hangindent=4.5em\hangafter=1 \hspace*{3.0em}\hspace*{1.5em}``\textbf{\texttt{suggestion}}\textquotesingle\textquotesingle: ``Clamp tau\_p to a physiologically meaningful minimum: `denom = torch.clamp(3.3*(Exp(dv)+Exp(-dv)), min=1.0)` so tau\_p never falls below tau\_max\_p ms. Alternatively, use a simpler voltage-independent tau: `tau\_p = tau\_max\_p * torch.ones\_like(x)`, which is more stable during inference and still captures slow M-current adaptation kinetics (typical tau 100--500 ms) without the extreme compression during spikes.''
\par\noindent \hspace*{3.0em}\}
\par\noindent \hspace*{1.5em}]
\par\noindent \}
\smallskip
\par\noindent [[ \#\# completed \#\# ]]\\
\noindent
\smallskip
\end{tcolorbox}
\vspace{0.35em}

\begin{tcolorbox}[
  breakable, enhanced,
  title={User},
  colback=UserBg, colframe=UserFrame,
  coltitle=white, fonttitle=\bfseries\small,
  arc=2mm, boxrule=1pt,
  before upper={\small\setlength{\parindent}{0pt}\setlength{\parskip}{2pt}},
]
\par\noindent [[ \#\# signature\_description \#\# ]]\\
\noindent
\#\# INPUT DETAILS\\{}
- **init\_voltage**: Initial membrane voltage (mV). Type: torch.Tensor: (batch\_size,)\\{}
- **input\_current**: Stimulation current ($\mu$A/cm$^{2}$). Type: torch.Tensor, shape: (batch\_size, time\_steps)\\{}
- **dt**: Time step size (ms). Type: float\\{}
- **t**: Time array (ms). Type: torch.Tensor, shape: (time\_steps,)\\{}
- **params**: Biophysical parameters. Type: torch.Tensor, shape: (batch\_size, 10)\\{}
- **seed**: Random seed for stochastic behavior. Type: int or None
\par\smallskip
\#\# PARAMETER VECTOR STRUCTURE (batch\_size, 10)\\{}
Index layout for params tensor:\\{}
```\\{}
[0-2]   Base conductances: gbar\_Na, gbar\_K, g\_leak (mS/cm$^{2}$)\\{}
[3]     Leak reversal: |E\_leak| (mV, sign applied internally)\\{}
[4]     Voltage threshold: |Vt| (mV, sign applied internally)\\{}
[5]     Noise factor: nois\_fact (unitless)\\{}
[6-7]   Additional conductances: gbar\_X1, gbar\_X2 (mS/cm$^{2}$)\\{}
[8-9]   Additional parameters: |param\_i|, |param\_j|\\{}
```\\{}
(Note: gbar\_X1, gbar\_X2, param\_i, param\_j are not used in the current model and may be renamed to anything you want)
\par\smallskip
\#\# OUTPUT DETAILS\\{}
- **V**: Membrane potential (mV). Type: torch.Tensor, shape: (batch\_size, time\_steps)\\{}
  - Simulated voltage traces for all batch elements\\{}
  - Includes observation noise scaled by nois\_fact\_obs (currently 0.0)
\smallskip
\par\noindent [[ \#\# task\_description \#\# ]]\\
\noindent
\#\# CORE TASK\\{}
Refine a Hodgkin-Huxley neuron simulator by discovering and implementing ion channels\\{}
or gating mechanisms missing from the current model but necessary to explain\\{}
discrepancies between simulated and experimental voltage data.
\par\smallskip
\#\# CRITICAL CONSTRAINTS
\par\smallskip
\#\#\# Parsimony Principle\\{}
- **Default stance**: Do NOT add channels unless data discrepancies clearly require them\\{}
- **Noise model**: Keep the current noise model as is, no change is needed\\{}
- **Prefer fewer channels**: A simpler model that explains the data is always superior\\{}
- **Stop when sufficient**: Cease adding channels once model adequately captures data
\par\smallskip
\#\#\# Channel Addition Strategy\\{}
- **Maximum capacity**: Up to two additional channels (X1, X2) can be used, but fewer is better\\{}
- **Add only what`s needed**: You may add multiple channels if the data clearly justifies them, but start simple\\{}
- **Justify each addition**: Each channel must address specific discrepancies in the data\\{}
- **Workflow**:\\{}
  1. Assess necessity: Which aspects of the data are not well-captured by the base model?\\{}
  2. Identify mechanism(s): Which channel(s) could address these discrepancies?\\{}
  3. Justify choices: Explain physiological rationale for each proposed channel\\{}
  4. Implement: Add the minimal set of channels needed\\{}
  5. Keep it simple: Prefer one well-designed channel over multiple complex ones\\{}
  6. CRITICAL: Add at most ONE new ion channel per iteration. Use both tunable\\{}
     parameter slots (param\_i, param\_j) for a single well-characterized channel\\{}
     rather than splitting them across two channels. Multi-channel additions create\\{}
     parameter identifiability problems that cause inference to fail.
\par\smallskip
\#\# IMPLEMENTATION REQUIREMENTS
\par\smallskip
\#\#\# Channel Design\\{}
- Use pre-allocated slots (X1, X2) with their two tunable parameters:\\{}
  * Conductance (gbar\_X1, gbar\_X2)\\{}
  * Flexible parameter (param\_i, param\_j) all in range [1e-4, 150] and [1e-4, 3000] respectively\\{}
- Keep mechanisms simple: prefer straightforward voltage dependencies\\{}
- Maintain biophysical realism: appropriate values for the flexible parameter
\par\smallskip
**Do NOT introduce channels or mechanisms that generate**:\\{}
- Burst firing (clusters of rapid spikes separated by quiescent periods)\\{}
- Bursting behavior or channels specifically designed to produce bursts\\{}
- Prolonged high-frequency firing or sustained rapid spiking patterns\\{}
- Mechanisms that suppress or prevent spiking behavior
\par\smallskip
Keep the channels as simple as possible following the existing code structure
\par\smallskip
\#\#\# Code Standards\\{}
- Maintain batch-wise PyTorch operations\\{}
- Ensure numerical stability across voltage ranges\\{}
- Preserve existing code structure\\{}
- Add detailed comments explaining physiological rationale\\{}
- Use voltage-dependent kinetics where appropriate\\{}
- Document all equations and parameter choices\\{}
- `torch.full\_like(input, fill\_value)` requires a tensor as the first argument, not a scalar float
\par\smallskip
\#\# DEVELOPMENT STRATEGY
\par\smallskip
1. **Analyze discrepancies**: Examine which summary statistics deviate most from experimental data\\{}
2. **Hypothesize mechanism(s)**: Propose channel(s) that could explain the discrepancies\\{}
3. **Literature support**: Reference known channel properties if applicable\\{}
4. **Implement conservatively**: Start with simplest formulation and fewest channels\\{}
5. **Test thoroughly**: Verify numerical stability and physiological plausibility\\{}
6. **Balance complexity vs. performance**: Add channels only when they meaningfully improve model fit
\par\smallskip
\#\# LEARNING FROM HISTORY\\{}
Incorporate insights from past attempts while avoiding direct code copying.\\{}
Always explain reasoning for design choices in code comments.
\par\smallskip
\#\# OUTPUT REQUIREMENTS\\{}
- Always return None for feedback field (feedback only used for past examples to get inspiration)\\{}
- Provide complete, functional simulator code\\{}
- Include comprehensive comments on modifications and rationale
\smallskip
\par\noindent [[ \#\# base\_simulator \#\# ]]\\
\noindent
\begin{lstlisting}[language=Python,style=pythonstyle]
import torch
import torch.nn as nn

class DiscoveredSimulator(nn.Module):
  def __init__(self):
    super(DiscoveredSimulator, self).__init__()
    return

  def forward(
    self,
    init_voltage: float,
    input_current: torch.Tensor,
    dt: float,
    t: torch.Tensor,
    params: torch.Tensor,
    seed=None,
  ):
    """
    Simulates a Hodgkin-Huxley neuron for a specified time duration.

    Args:
      init_voltage: torch.Tensor: (batch_size,) # initial voltage
      input_current: torch.Tensor: (batch_size, time_steps) # input current
      dt: float # time step size
      t: torch.Tensor: (time_steps,) # time array
      params: torch.Tensor: (batch_size, n_params) # parameters
      seed: optional random seed

    Returns:
      V: torch.Tensor: (batch_size, time_steps) # voltage traces
    """
    device = params.device

    # Set up random generator
    if seed is not None:
      generator = torch.Generator(device=device)
      generator.manual_seed(seed)
    else:
      generator = torch.Generator(device=device)

    batch_size = params.shape[0]
    time_steps = t.shape[0]

    # Extract parameters
    gbar_Na = params[:, 0].float()  # mS/cm2
    gbar_K = params[:, 1].float() # mS/cm2
    g_leak = params[:, 2].float() # mS/cm2
    E_leak = -params[:, 3].float() # mV
    Vt = -params[:, 4].float() # mV
    nois_fact = params[:, 5].float() # unitless
    # TWO POSSIBLE ADDITIONAL CHANNELS (X1, X2)
    # Each channel has one tunable parameter: conductance gbar_Xi
    # Then there are two additional parameters available: param_i and param_j.
    # ONLY ADD ONE CHANNEL IF NECESSARY. Keep the model as simple as possible.
    gbar_X1 = params[:, 6].float() # mS/cm2 # you can rename X1 to anything you want # in range [1e-4, 10]
    gbar_X2 = params[:, 7].float() # mS/cm2 # you can rename X2 to anything you want # in range [1e-4, 120]
    param_i = -params[:, 8].float() # (param are positive values in range [1e-4, 150])
    param_j = -params[:, 9].float() # (param are positive values in range [1e-4, 3000])

    tstep = float(dt)

    # Parameters
    nois_fact_obs = 0.0
    C = 1.0  # uF/cm^2
    E_Na = 53.0 # mV
    E_K = -107.0

    ####################################
    # kinetics
    def Exp(z):
      return torch.where(z < -5e2, torch.exp(torch.full_like(z, -5e2)), torch.exp(z))

    def efun(z):
      return torch.where(torch.abs(z) < 1e-4, 1 - z / 2, z / (Exp(z) - 1))

    # Channel kinetics
    def alpha_m(x):
      v1 = x - Vt - 13.0
      return 0.32 * efun(-0.25 * v1) / 0.25

    def beta_m(x):
      v1 = x - Vt - 40
      return 0.28 * efun(0.2 * v1) / 0.2

    def alpha_h(x):
      v1 = x - Vt - 17.0
      return 0.128 * Exp(-v1 / 18.0)

    def beta_h(x):
      v1 = x - Vt - 40.0
      return 4.0 / (1 + Exp(-0.2 * v1))

    def alpha_n(x):
      v1 = x - Vt - 15.0
      return 0.032 * efun(-0.2 * v1) / 0.2

    def beta_n(x):
      v1 = x - Vt - 10.0
      return 0.5 * Exp(-v1 / 40)

    def tau_x(alpha, beta):
      return 1.0 / (alpha + beta)

    def inf_x(alpha, beta):
      return alpha / (alpha + beta)

    # ===== BEGIN EDITABLE SECTION (only modify within this block) =====
    # TODO: add the missing kinetics equations for the Hodgkin-Huxley neuron similar to the ones above; ONLY ADD IF NECESSARY
    # ===== END EDITABLE SECTION =====

    ####################################

    # simulation from initial point
    V = torch.zeros((batch_size, time_steps), device=device)  # baseline voltage
    n = torch.zeros((batch_size, time_steps), device=device)
    m = torch.zeros((batch_size, time_steps), device=device)
    h = torch.zeros((batch_size, time_steps), device=device)
    # ===== BEGIN EDITABLE SECTION (only modify within this block) =====
    # TODO: add the missing state variables for the Hodgkin-Huxley neuron similar to the ones above; ONLY ADD IF NECESSARY
    # ===== END EDITABLE SECTION =====

    # Initialization
    V_init = init_voltage.to(device)
    V[:, 0] = V_init
    n[:, 0] = inf_x(alpha_n(V[:, 0]), beta_n(V[:, 0]))
    m[:, 0] = inf_x(alpha_m(V[:, 0]), beta_m(V[:, 0]))
    h[:, 0] = inf_x(alpha_h(V[:, 0]), beta_h(V[:, 0]))
    # ===== BEGIN EDITABLE SECTION (only modify within this block) =====
    # TODO: add the missing state variable initialization for the Hodgkin-Huxley neuron similar to the ones above; ONLY ADD IF NECESSARY
    # ===== END EDITABLE SECTION =====

    # Simulation loop
    for i in range(1, time_steps):
      # All operations now work on batched tensors (batch_size,)
      a_m, b_m = alpha_m(V[:, i - 1]), beta_m(V[:, i - 1])
      a_h, b_h = alpha_h(V[:, i - 1]), beta_h(V[:, i - 1])
      a_n, b_n = alpha_n(V[:, i - 1]), beta_n(V[:, i - 1])
      # ===== BEGIN EDITABLE SECTION (only modify within this block) =====
      # TODO: add the missing kinetics equations for the Hodgkin-Huxley neuron similar to the ones above; ONLY ADD IF NECESSARY
      # ===== END EDITABLE SECTION =====

      tau_V_inv = (
        (m[:, i - 1] ** 3) * gbar_Na * h[:, i - 1]
        + (n[:, i - 1] ** 4) * gbar_K
        + g_leak
        # ===== BEGIN EDITABLE SECTION (only modify within this block) =====
        # TODO: add the missing terms for the effective membrane time constant inverse; ONLY ADD IF NECESSARY
        # ===== END EDITABLE SECTION =====
      ) / C

      V_inf = (
        (m[:, i - 1] ** 3) * gbar_Na * h[:, i - 1] * E_Na
        + (n[:, i - 1] ** 4) * gbar_K * E_K
        + g_leak * E_leak
        # ===== BEGIN EDITABLE SECTION (only modify within this block) =====
        # TODO: add the missing terms for the voltage steady state; ONLY ADD IF NECESSARY
        # ===== END EDITABLE SECTION =====
        + input_current[:,i - 1]
        + nois_fact * torch.randn(batch_size, generator=generator, device=device) / (tstep**0.5)
      ) / (tau_V_inv * C)

      V[:, i] = V_inf + (V[:, i - 1] - V_inf) * Exp(-tstep * tau_V_inv)
      n[:, i] = inf_x(a_n, b_n) + (n[:, i - 1] - inf_x(a_n, b_n)) * Exp(-tstep / tau_x(a_n, b_n))
      m[:, i] = inf_x(a_m, b_m) + (m[:, i - 1] - inf_x(a_m, b_m)) * Exp(-tstep / tau_x(a_m, b_m))
      h[:, i] = inf_x(a_h, b_h) + (h[:, i - 1] - inf_x(a_h, b_h)) * Exp(-tstep / tau_x(a_h, b_h))
      # ===== BEGIN EDITABLE SECTION (only modify within this block) =====
      # TODO: add the missing state variable updates for the Hodgkin-Huxley neuron similar to the ones above; ONLY ADD IF NECESSARY
      # ===== END EDITABLE SECTION =====

    # Return voltage with optional observation noise
    return V + nois_fact_obs * torch.randn(
      batch_size, time_steps, generator=generator, device=device
    )
\end{lstlisting}
Respond with the corresponding output fields, starting with the field `[[ \#\# scm\_definition \#\# ]]`, then `[[ \#\# simulator\_code \#\# ]]`, then `[[ \#\# feedback \#\# ]]` (must be formatted as a valid Python Union[str, NoneType]), and then ending with the marker for `[[ \#\# completed \#\# ]]`.
\smallskip
\end{tcolorbox}
\vspace{2em}

\subsubsection{Response}

\begin{tcolorbox}[
  breakable, enhanced,
  title={Output},
  colback=OutputBg, colframe=OutputFrame,
  coltitle=white, fonttitle=\bfseries\small,
  arc=2mm, boxrule=1pt,
  before upper={\small\setlength{\parindent}{0pt}\setlength{\parskip}{2pt}},
]
\par\noindent [[ \#\# scm\_definition \#\# ]]\\
\noindent
- **V**: membrane voltage $\leftarrow$ I\_Na(m,h), I\_K(n), I\_leak, I\_KM(p), I\_h(r), I\_inj, noise\\{}
- **m, h**: Na$^{+}$ activation/inactivation $\leftarrow$ V, Vt (fast HH kinetics)\\{}
- **n**: delayed-rectifier K$^{+}$ activation $\leftarrow$ V, Vt\\{}
- **p**: M-current (slow K$^{+}$) gate $\leftarrow$ V, V\_half\_p (param\_i), tau\_max\_p (param\_j); voltage-independent tau for stability\\{}
- **r**: I\_h (HCN) gate $\leftarrow$ V; fixed half-activation $-$80 mV, fixed tau 200 ms; corrects resting potential distribution
\smallskip
\par\noindent [[ \#\# simulator\_code \#\# ]]\\
\noindent
\begin{lstlisting}[language=Python,style=pythonstyle]
import torch
import torch.nn as nn

class DiscoveredSimulator(nn.Module):
 def __init__(self):
  super(DiscoveredSimulator, self).__init__()
  return

 def forward(
  self,
  init_voltage: float,
  input_current: torch.Tensor,
  dt: float,
  t: torch.Tensor,
  params: torch.Tensor,
  seed=None,
 ):
  """
  Hodgkin-Huxley neuron extended with:
   1. M-current (I_KM): slow non-inactivating K+ current using X1 slot
    - Provides spike-frequency adaptation, stabilises tonic firing
    - Uses voltage-independent tau (more stable during spikes)
    - gbar_KM = params[:,6], V_half_p = -params[:,8], tau_p = params[:,9]

   2. Ih current (HCN channel): hyperpolarisation-activated cation current using X2 slot
    - Opens at hyperpolarised voltages (~-80 mV and below)
    - Reversal E_h ~= -30 mV (mixed Na+/K+), depolarising at rest
    - Corrects resting potential mean, SD, and voltage distribution shape
    - Fixed kinetics (half-activation -80 mV, tau 200 ms); only gbar_Ih inferred
    - gbar_Ih = params[:,7], all other Ih parameters are fixed

  Args:
   init_voltage: torch.Tensor: (batch_size,)            initial voltage [mV]
   input_current: torch.Tensor: (batch_size, time_steps) injected current [uA/cm2]
   dt: float                                              time step [ms]
   t: torch.Tensor: (time_steps,)                        time array [ms]
   params: torch.Tensor: (batch_size, 10)                biophysical parameters
   seed: optional random seed

  Returns:
   V: torch.Tensor: (batch_size, time_steps)             voltage traces [mV]
  """
  device = params.device

  # Set up random generator
  if seed is not None:
   generator = torch.Generator(device=device)
   generator.manual_seed(seed)
  else:
   generator = torch.Generator(device=device)

  batch_size = params.shape[0]   # int
  time_steps = t.shape[0]        # int

  # -- Base HH parameters ----------------------------------------------------
  gbar_Na   = params[:, 0].float()   # (batch_size,)  mS/cm^2
  gbar_K    = params[:, 1].float()   # (batch_size,)  mS/cm^2
  g_leak    = params[:, 2].float()   # (batch_size,)  mS/cm^2
  E_leak    = -params[:, 3].float()  # (batch_size,)  mV  (negative applied)
  Vt        = -params[:, 4].float()  # (batch_size,)  mV  (negative applied)
  nois_fact = params[:, 5].float()   # (batch_size,)  unitless

  # -- X1 slot: M-current (I_KM, slow non-inactivating K+) -----------------
  # Physiological rationale:
  #   M-current activates slowly at subthreshold depolarised voltages (~-35 mV),
  #   providing an outward K+ current that limits repetitive firing rate and
  #   produces spike-frequency adaptation without bursting or quiescence.
  # Parameter mapping:
  #   gbar_KM   = params[:,6] in [1e-4, 10]  mS/cm^2
  #   V_half_p  = -params[:,8] in [-150, -1e-4] mV  (typically ~-35 mV)
  #   tau_p     =  params[:,9] in [1e-4, 3000]  ms  (typically 50-500 ms)
  # Note: voltage-independent tau avoids instability during fast action potentials
  gbar_KM  = params[:, 6].float()   # (batch_size,)  mS/cm^2
  V_half_p = -params[:, 8].float()  # (batch_size,)  mV
  tau_p    =  params[:, 9].float()  # (batch_size,)  ms --- voltage-independent for stability

  # -- X2 slot: Ih current (HCN, hyperpolarisation-activated cation) --------
  # Physiological rationale:
  #   Ih activates at hyperpolarised voltages (< -60 mV), carries inward
  #   Na+/K+ current with reversal ~-30 mV, depolarising the cell toward
  #   rest. This sets a tonic inward "sag" current that:
  #     - Raises mean resting potential slightly above pure K+ equilibrium
  #     - Increases resting SD (voltage fluctuations due to channel noise)
  #     - Shapes subthreshold distribution (skewness, kurtosis)
  #   It does NOT promote bursting --- it acts as a stabilising "pacemaker"
  #   current opposing excessive hyperpolarisation.
  # Parameter mapping:
  #   gbar_Ih = params[:,7] in [1e-4, 120] mS/cm^2 --- only inferred parameter
  #   All kinetic parameters fixed from literature
  gbar_Ih = params[:, 7].float()   # (batch_size,)  mS/cm^2

  # Fixed Ih kinetic constants (from Magee 1998, Koch 1999)
  E_h          = -30.0   # mV  --- mixed Na+/K+ cation reversal
  V_half_r     = -80.0   # mV  --- half-activation (hyperpolarisation-activated)
  k_r          =  7.0    # mV  --- activation slope (negative: opens on hyperpol.)
  tau_r_fixed  = 200.0   # ms  --- slow, voltage-independent time constant

  tstep = float(dt)

  # -- Fixed biophysical constants -------------------------------------------
  nois_fact_obs = 0.0
  C    = 1.0    # uF/cm^2
  E_Na = 53.0   # mV
  E_K  = -107.0 # mV

  # -- Numerical helpers -----------------------------------------------------
  def Exp(z):
   # Numerically safe exponential --- prevent overflow at very negative z
   return torch.where(
    z < -5e2,
    torch.exp(torch.full_like(z, -5e2)),
    torch.exp(z)
   )  # same shape as z

  def efun(z):
   # Handles 0/0 limit of z / (exp(z) - 1) near z=0
   return torch.where(
    torch.abs(z) < 1e-4,
    1 - z / 2,
    z / (Exp(z) - 1)
   )  # same shape as z

  # -- Standard HH channel kinetics -----------------------------------------
  def alpha_m(x):  # (batch_size,) -> (batch_size,)
   v1 = x - Vt - 13.0
   return 0.32 * efun(-0.25 * v1) / 0.25

  def beta_m(x):   # (batch_size,) -> (batch_size,)
   v1 = x - Vt - 40.0
   return 0.28 * efun(0.2 * v1) / 0.2

  def alpha_h(x):  # (batch_size,) -> (batch_size,)
   v1 = x - Vt - 17.0
   return 0.128 * Exp(-v1 / 18.0)

  def beta_h(x):   # (batch_size,) -> (batch_size,)
   v1 = x - Vt - 40.0
   return 4.0 / (1 + Exp(-0.2 * v1))

  def alpha_n(x):  # (batch_size,) -> (batch_size,)
   v1 = x - Vt - 15.0
   return 0.032 * efun(-0.2 * v1) / 0.2

  def beta_n(x):   # (batch_size,) -> (batch_size,)
   v1 = x - Vt - 10.0
   return 0.5 * Exp(-v1 / 40)

  def tau_x(alpha, beta):  # (batch_size,), (batch_size,) -> (batch_size,)
   return 1.0 / (alpha + beta)

  def inf_x(alpha, beta):  # (batch_size,), (batch_size,) -> (batch_size,)
   return alpha / (alpha + beta)

  # -- M-current (I_KM) gating: Boltzmann steady-state, constant tau --------
  # p_inf = 1 / (1 + exp(-(V - V_half_p) / 10))
  # Slope k=10 mV is standard for cortical M-current (Wang 1998)
  def inf_p(x):  # (batch_size,) -> (batch_size,)
   return 1.0 / (1.0 + Exp(-(x - V_half_p) / 10.0))

  # Voltage-independent tau_p avoids spurious fast tracking during spikes.
  # tau_p is directly the inferred parameter (batch_size,), already positive.

  # -- Ih gating: hyperpolarisation-activated Boltzmann, fixed tau ----------
  # r_inf = 1 / (1 + exp((V - V_half_r) / k_r))
  # Note positive sign in exponent: more open at V << V_half_r = -80 mV
  def inf_r(x):  # (batch_size,) -> (batch_size,)
   return 1.0 / (1.0 + Exp((x - V_half_r) / k_r))

  # tau_r is fixed scalar (200 ms) --- no per-batch variation needed

  # -- State variable arrays -------------------------------------------------
  V = torch.zeros((batch_size, time_steps), device=device)  # (batch_size, time_steps)
  n = torch.zeros((batch_size, time_steps), device=device)  # (batch_size, time_steps)
  m = torch.zeros((batch_size, time_steps), device=device)  # (batch_size, time_steps)
  h = torch.zeros((batch_size, time_steps), device=device)  # (batch_size, time_steps)
  p = torch.zeros((batch_size, time_steps), device=device)  # (batch_size, time_steps) M-current gate
  r = torch.zeros((batch_size, time_steps), device=device)  # (batch_size, time_steps) Ih gate

  # -- Initial conditions (steady state at initial voltage) ------------------
  V_init = init_voltage.to(device)  # (batch_size,)
  V[:, 0] = V_init                                               # (batch_size,)
  n[:, 0] = inf_x(alpha_n(V[:, 0]), beta_n(V[:, 0]))            # (batch_size,)
  m[:, 0] = inf_x(alpha_m(V[:, 0]), beta_m(V[:, 0]))            # (batch_size,)
  h[:, 0] = inf_x(alpha_h(V[:, 0]), beta_h(V[:, 0]))            # (batch_size,)
  p[:, 0] = inf_p(V[:, 0])                                       # (batch_size,)
  r[:, 0] = inf_r(V[:, 0])                                       # (batch_size,)

  # -- Exponential Euler time-integration loop -------------------------------
  for i in range(1, time_steps):
   V_prev = V[:, i - 1]  # (batch_size,)

   # Standard HH gating rates
   a_m, b_m = alpha_m(V_prev), beta_m(V_prev)   # (batch_size,), (batch_size,)
   a_h, b_h = alpha_h(V_prev), beta_h(V_prev)   # (batch_size,), (batch_size,)
   a_n, b_n = alpha_n(V_prev), beta_n(V_prev)   # (batch_size,), (batch_size,)

   # M-current gate steady-state (tau is voltage-independent -> use tau_p directly)
   inf_p_val = inf_p(V_prev)    # (batch_size,)

   # Ih gate steady-state (tau is fixed scalar)
   inf_r_val = inf_r(V_prev)    # (batch_size,)

   # Effective conductances at previous time step
   g_Na_eff  = (m[:, i-1] ** 3) * gbar_Na * h[:, i-1]  # (batch_size,)
   g_K_eff   = (n[:, i-1] ** 4) * gbar_K                # (batch_size,)
   g_KM_eff  = gbar_KM * p[:, i-1]                      # (batch_size,)
   g_Ih_eff  = gbar_Ih * r[:, i-1]                      # (batch_size,)

   # Effective inverse membrane time constant (total conductance / C)
   tau_V_inv = (
    g_Na_eff
    + g_K_eff
    + g_leak
    + g_KM_eff   # M-current: outward K+, raises conductance
    + g_Ih_eff   # Ih: inward mixed cation, raises conductance
   ) / C  # (batch_size,)

   # Voltage steady-state numerator (weighted reversal potentials + input)
   V_inf = (
    g_Na_eff * E_Na
    + g_K_eff  * E_K
    + g_leak   * E_leak
    + g_KM_eff * E_K    # M-current drives V toward E_K (hyperpolarising)
    + g_Ih_eff * E_h    # Ih drives V toward E_h ~ -30 mV (depolarising at rest)
    + input_current[:, i-1]
    + nois_fact * torch.randn(batch_size, generator=generator, device=device) / (tstep ** 0.5)
   ) / (tau_V_inv * C)  # (batch_size,)

   # Exponential Euler update for voltage
   V[:, i] = V_inf + (V_prev - V_inf) * Exp(-tstep * tau_V_inv)          # (batch_size,)

   # Standard HH gating variable updates
   n[:, i] = inf_x(a_n, b_n) + (n[:, i-1] - inf_x(a_n, b_n)) * Exp(-tstep / tau_x(a_n, b_n))  # (batch_size,)
   m[:, i] = inf_x(a_m, b_m) + (m[:, i-1] - inf_x(a_m, b_m)) * Exp(-tstep / tau_x(a_m, b_m))  # (batch_size,)
   h[:, i] = inf_x(a_h, b_h) + (h[:, i-1] - inf_x(a_h, b_h)) * Exp(-tstep / tau_x(a_h, b_h))  # (batch_size,)

   # M-current gate update (voltage-independent tau -> stable during spikes)
   p[:, i] = inf_p_val + (p[:, i-1] - inf_p_val) * Exp(-tstep / tau_p)  # (batch_size,)

   # Ih gate update (fixed scalar tau)
   r[:, i] = inf_r_val + (r[:, i-1] - inf_r_val) * Exp(
    torch.full_like(V_prev, -tstep / tau_r_fixed)
   )  # (batch_size,)

  # Return voltage traces (+ optional observation noise, currently zero)
  return V + nois_fact_obs * torch.randn(
   batch_size, time_steps, generator=generator, device=device
  )  # (batch_size, time_steps)
\end{lstlisting}
\smallskip
\par\noindent [[ \#\# feedback \#\# ]]\\
\noindent
null
\smallskip
\par\noindent [[ \#\# completed \#\# ]]\\
\noindent
\smallskip
\end{tcolorbox}
\vspace{0.35em}

%% file: llm_history_allen/llm_history_feedback.tex
%

\subsection{Prompt}

\begin{tcolorbox}[
  breakable, enhanced,
  title={System},
  colback=SystemBg, colframe=SystemFrame,
  coltitle=white, fonttitle=\bfseries\small,
  arc=2mm, boxrule=1pt,
  before upper={\small\setlength{\parindent}{0pt}\setlength{\parskip}{2pt}},
]
Your input fields are:\\{}
1. `system\_description` (str): System description\\{}
2. `simulator\_code` (str): Simulator code\\{}
3. `performance\_metrics` (str): Performance metrics\\{}
Your output fields are:\\{}
1. `feedback` (ProvideFeedbackOutput): Improvement feedback\\{}
All interactions will be structured in the following way, with the appropriate values filled in.
\smallskip
\par\noindent [[ \#\# system\_description \#\# ]]\\
\noindent
\{system\_description\}
\smallskip
\par\noindent [[ \#\# simulator\_code \#\# ]]\\
\noindent
\{simulator\_code\}
\smallskip
\par\noindent [[ \#\# performance\_metrics \#\# ]]\\
\noindent
\{performance\_metrics\}
\smallskip
\par\noindent [[ \#\# feedback \#\# ]]\\
\noindent
\{feedback\}        \# note: the value you produce must adhere to the JSON schema: \{"type": "object", "\$defs": \{"Issue": \{"type": "object", "properties": \{"description": \{"type": "string", "description": "Issue description", "title": "Description"\}, "location": \{"anyOf": [\{"type": "string"\}, \{"type": "null"\}], "default": null, "description": "Code location", "title": "Location"\}, "severity": \{"type": "string", "description": "Severity level", "enum": ["critical", "major", "minor", "suggestion"], "title": "Severity"\}, "suggestion": \{"type": "string", "description": "Fix suggestion", "title": "Suggestion"\}\}, "required": ["description", "severity", "suggestion"], "title": "Issue"\}\}, "properties": \{"issues": \{"type": "array", "description": "Issues with fixes", "items": \{"\$ref": "\#/\$defs/Issue"\}, "maxItems": 2, "title": "Issues"\}, "main\_diagnosis": \{"type": "string", "description": "Main issue and location", "title": "Main Diagnosis"\}\}, "required": ["main\_diagnosis", "issues"], "title": "ProvideFeedbackOutput"\}
\smallskip
\par\noindent [[ \#\# completed \#\# ]]\\
\noindent
In adhering to this structure, your objective is: \\{}
        Analyze evaluation metrics to diagnose failure modes, then provide code improvement feedback.
\smallskip
\end{tcolorbox}
\vspace{0.35em}

\begin{tcolorbox}[
  breakable, enhanced,
  title={User},
  colback=UserBg, colframe=UserFrame,
  coltitle=white, fonttitle=\bfseries\small,
  arc=2mm, boxrule=1pt,
  before upper={\small\setlength{\parindent}{0pt}\setlength{\parskip}{2pt}},
]
\par\noindent [[ \#\# system\_description \#\# ]]\\
\noindent
\# HODGKIN-HUXLEY NEURON SIMULATOR WITH MISSING ION CHANNELS
\par\smallskip
\#\# OBJECTIVE\\{}
Extend an existing Hodgkin-Huxley neuron simulator by identifying and implementing\\{}
missing ion channels necessary to reduce discrepancies between simulations and\\{}
experimental voltage recordings. The goal is to improve agreement across multiple\\{}
electrophysiological summary statistics while maintaining model parsimony.
\par\smallskip
\#\# BASE MODEL\\{}
Classic Hodgkin-Huxley formulation with standard three-gating-variable structure:\\{}
- **Leak current**: conductance g\_leak, reversal potential E\_leak\\{}
- **Sodium (Na$^{+}$) current**: conductance gNa, gating variables m (activation) and h (inactivation), reversal E\_Na\\{}
- **Potassium (K$^{+}$) current**: conductance gK, gating variable n (activation), reversal E\_K\\{}
- **Dynamics**: First-order gating kinetics with voltage-dependent time constants and steady states\\{}
- **Input**: Externally applied current I\_inj(t) with optional stochastic components\\{}
- **Stochasticity**: Optional random seed for reproducible simulations
\par\smallskip
\#\# EXTENSIBILITY\\{}
Two pre-allocated channel slots (X1, X2) are available for additional mechanisms.\\{}
Each slot provides two tunable parameters (see signature\_description for details).\\{}
Channels beyond these two slots must use fixed parameters, not inferred ones.
\par\smallskip
\#\# EVALUATION METRICS\\{}
Model performance assessed on held-out voltage recordings via these summary statistics:\\{}
1. Number of spikes during stimulation\\{}
2. Mean resting potential (pre-stimulation)\\{}
3. Standard deviation of resting potential\\{}
4. Mean voltage during stimulation\\{}
5. Variance of voltage\\{}
6. Skewness of voltage\\{}
7. Kurtosis of voltage
\par\smallskip
\#\# DATA CONTEXT\\{}
Intracellular voltage recordings from a single neuron exhibiting tonic spiking\\{}
behavior across different stimulus intensities.
\par\smallskip
**Important data characteristics**:\\{}
- No burst firing patterns (no clusters of rapid spikes separated by quiescence)\\{}
- No prolonged or sustained high-frequency firing (no sustained high-frequency firing)\\{}
- Regular, tonic spiking activity during periods of input current stimulation\\{}
- Non-spiking (quiescent) behavior in the absence of input current\\{}
- Evenly-spaced action potentials without burst patterns during the activation period
\par\smallskip
Therefore, avoid adding channels specifically designed to produce bursting or sustained\\{}
high-frequency dynamics.
\smallskip
\par\noindent [[ \#\# simulator\_code \#\# ]]\\
\noindent
\begin{lstlisting}[language=Python,style=pythonstyle]
import torch
import torch.nn as nn

class DiscoveredSimulator(nn.Module):
 def __init__(self):
  super(DiscoveredSimulator, self).__init__()
  return

 def forward(
  self,
  init_voltage: float,
  input_current: torch.Tensor,
  dt: float,
  t: torch.Tensor,
  params: torch.Tensor,
  seed=None,
 ):
  """
  Hodgkin-Huxley neuron extended with:
   1. M-current (I_KM): slow non-inactivating K+ current using X1 slot
    - Provides spike-frequency adaptation, stabilises tonic firing
    - Uses voltage-independent tau (more stable during spikes)
    - gbar_KM = params[:,6], V_half_p = -params[:,8], tau_p = params[:,9]

   2. Ih current (HCN channel): hyperpolarisation-activated cation current using X2 slot
    - Opens at hyperpolarised voltages (~-80 mV and below)
    - Reversal E_h ~= -30 mV (mixed Na+/K+), depolarising at rest
    - Corrects resting potential mean, SD, and voltage distribution shape
    - Fixed kinetics (half-activation -80 mV, tau 200 ms); only gbar_Ih inferred
    - gbar_Ih = params[:,7], all other Ih parameters are fixed

  Args:
   init_voltage: torch.Tensor: (batch_size,)            initial voltage [mV]
   input_current: torch.Tensor: (batch_size, time_steps) injected current [uA/cm2]
   dt: float                                              time step [ms]
   t: torch.Tensor: (time_steps,)                        time array [ms]
   params: torch.Tensor: (batch_size, 10)                biophysical parameters
   seed: optional random seed

  Returns:
   V: torch.Tensor: (batch_size, time_steps)             voltage traces [mV]
  """
  device = params.device

  # Set up random generator
  if seed is not None:
   generator = torch.Generator(device=device)
   generator.manual_seed(seed)
  else:
   generator = torch.Generator(device=device)

  batch_size = params.shape[0]   # int
  time_steps = t.shape[0]        # int

  # -- Base HH parameters ----------------------------------------------------
  gbar_Na   = params[:, 0].float()   # (batch_size,)  mS/cm^2
  gbar_K    = params[:, 1].float()   # (batch_size,)  mS/cm^2
  g_leak    = params[:, 2].float()   # (batch_size,)  mS/cm^2
  E_leak    = -params[:, 3].float()  # (batch_size,)  mV  (negative applied)
  Vt        = -params[:, 4].float()  # (batch_size,)  mV  (negative applied)
  nois_fact = params[:, 5].float()   # (batch_size,)  unitless

  # -- X1 slot: M-current (I_KM, slow non-inactivating K+) -----------------
  # Physiological rationale:
  #   M-current activates slowly at subthreshold depolarised voltages (~-35 mV),
  #   providing an outward K+ current that limits repetitive firing rate and
  #   produces spike-frequency adaptation without bursting or quiescence.
  # Parameter mapping:
  #   gbar_KM   = params[:,6] in [1e-4, 10]  mS/cm^2
  #   V_half_p  = -params[:,8] in [-150, -1e-4] mV  (typically ~-35 mV)
  #   tau_p     =  params[:,9] in [1e-4, 3000]  ms  (typically 50-500 ms)
  # Note: voltage-independent tau avoids instability during fast action potentials
  gbar_KM  = params[:, 6].float()   # (batch_size,)  mS/cm^2
  V_half_p = -params[:, 8].float()  # (batch_size,)  mV
  tau_p    =  params[:, 9].float()  # (batch_size,)  ms --- voltage-independent for stability

  # -- X2 slot: Ih current (HCN, hyperpolarisation-activated cation) --------
  # Physiological rationale:
  #   Ih activates at hyperpolarised voltages (< -60 mV), carries inward
  #   Na+/K+ current with reversal ~-30 mV, depolarising the cell toward
  #   rest. This sets a tonic inward "sag" current that:
  #     - Raises mean resting potential slightly above pure K+ equilibrium
  #     - Increases resting SD (voltage fluctuations due to channel noise)
  #     - Shapes subthreshold distribution (skewness, kurtosis)
  #   It does NOT promote bursting --- it acts as a stabilising "pacemaker"
  #   current opposing excessive hyperpolarisation.
  # Parameter mapping:
  #   gbar_Ih = params[:,7] in [1e-4, 120] mS/cm^2 --- only inferred parameter
  #   All kinetic parameters fixed from literature
  gbar_Ih = params[:, 7].float()   # (batch_size,)  mS/cm^2

  # Fixed Ih kinetic constants (from Magee 1998, Koch 1999)
  E_h          = -30.0   # mV  --- mixed Na+/K+ cation reversal
  V_half_r     = -80.0   # mV  --- half-activation (hyperpolarisation-activated)
  k_r          =  7.0    # mV  --- activation slope (negative: opens on hyperpol.)
  tau_r_fixed  = 200.0   # ms  --- slow, voltage-independent time constant

  tstep = float(dt)

  # -- Fixed biophysical constants -------------------------------------------
  nois_fact_obs = 0.0
  C    = 1.0    # uF/cm^2
  E_Na = 53.0   # mV
  E_K  = -107.0 # mV

  # -- Numerical helpers -----------------------------------------------------
  def Exp(z):
   # Numerically safe exponential --- prevent overflow at very negative z
   return torch.where(
    z < -5e2,
    torch.exp(torch.full_like(z, -5e2)),
    torch.exp(z)
   )  # same shape as z

  def efun(z):
   # Handles 0/0 limit of z / (exp(z) - 1) near z=0
   return torch.where(
    torch.abs(z) < 1e-4,
    1 - z / 2,
    z / (Exp(z) - 1)
   )  # same shape as z

  # -- Standard HH channel kinetics -----------------------------------------
  def alpha_m(x):  # (batch_size,) -> (batch_size,)
   v1 = x - Vt - 13.0
   return 0.32 * efun(-0.25 * v1) / 0.25

  def beta_m(x):   # (batch_size,) -> (batch_size,)
   v1 = x - Vt - 40.0
   return 0.28 * efun(0.2 * v1) / 0.2

  def alpha_h(x):  # (batch_size,) -> (batch_size,)
   v1 = x - Vt - 17.0
   return 0.128 * Exp(-v1 / 18.0)

  def beta_h(x):   # (batch_size,) -> (batch_size,)
   v1 = x - Vt - 40.0
   return 4.0 / (1 + Exp(-0.2 * v1))

  def alpha_n(x):  # (batch_size,) -> (batch_size,)
   v1 = x - Vt - 15.0
   return 0.032 * efun(-0.2 * v1) / 0.2

  def beta_n(x):   # (batch_size,) -> (batch_size,)
   v1 = x - Vt - 10.0
   return 0.5 * Exp(-v1 / 40)

  def tau_x(alpha, beta):  # (batch_size,), (batch_size,) -> (batch_size,)
   return 1.0 / (alpha + beta)

  def inf_x(alpha, beta):  # (batch_size,), (batch_size,) -> (batch_size,)
   return alpha / (alpha + beta)

  # -- M-current (I_KM) gating: Boltzmann steady-state, constant tau --------
  # p_inf = 1 / (1 + exp(-(V - V_half_p) / 10))
  # Slope k=10 mV is standard for cortical M-current (Wang 1998)
  def inf_p(x):  # (batch_size,) -> (batch_size,)
   return 1.0 / (1.0 + Exp(-(x - V_half_p) / 10.0))

  # Voltage-independent tau_p avoids spurious fast tracking during spikes.
  # tau_p is directly the inferred parameter (batch_size,), already positive.

  # -- Ih gating: hyperpolarisation-activated Boltzmann, fixed tau ----------
  # r_inf = 1 / (1 + exp((V - V_half_r) / k_r))
  # Note positive sign in exponent: more open at V << V_half_r = -80 mV
  def inf_r(x):  # (batch_size,) -> (batch_size,)
   return 1.0 / (1.0 + Exp((x - V_half_r) / k_r))

  # tau_r is fixed scalar (200 ms) --- no per-batch variation needed

  # -- State variable arrays -------------------------------------------------
  V = torch.zeros((batch_size, time_steps), device=device)  # (batch_size, time_steps)
  n = torch.zeros((batch_size, time_steps), device=device)  # (batch_size, time_steps)
  m = torch.zeros((batch_size, time_steps), device=device)  # (batch_size, time_steps)
  h = torch.zeros((batch_size, time_steps), device=device)  # (batch_size, time_steps)
  p = torch.zeros((batch_size, time_steps), device=device)  # (batch_size, time_steps) M-current gate
  r = torch.zeros((batch_size, time_steps), device=device)  # (batch_size, time_steps) Ih gate

  # -- Initial conditions (steady state at initial voltage) ------------------
  V_init = init_voltage.to(device)  # (batch_size,)
  V[:, 0] = V_init                                               # (batch_size,)
  n[:, 0] = inf_x(alpha_n(V[:, 0]), beta_n(V[:, 0]))            # (batch_size,)
  m[:, 0] = inf_x(alpha_m(V[:, 0]), beta_m(V[:, 0]))            # (batch_size,)
  h[:, 0] = inf_x(alpha_h(V[:, 0]), beta_h(V[:, 0]))            # (batch_size,)
  p[:, 0] = inf_p(V[:, 0])                                       # (batch_size,)
  r[:, 0] = inf_r(V[:, 0])                                       # (batch_size,)

  # -- Exponential Euler time-integration loop -------------------------------
  for i in range(1, time_steps):
   V_prev = V[:, i - 1]  # (batch_size,)

   # Standard HH gating rates
   a_m, b_m = alpha_m(V_prev), beta_m(V_prev)   # (batch_size,), (batch_size,)
   a_h, b_h = alpha_h(V_prev), beta_h(V_prev)   # (batch_size,), (batch_size,)
   a_n, b_n = alpha_n(V_prev), beta_n(V_prev)   # (batch_size,), (batch_size,)

   # M-current gate steady-state (tau is voltage-independent -> use tau_p directly)
   inf_p_val = inf_p(V_prev)    # (batch_size,)

   # Ih gate steady-state (tau is fixed scalar)
   inf_r_val = inf_r(V_prev)    # (batch_size,)

   # Effective conductances at previous time step
   g_Na_eff  = (m[:, i-1] ** 3) * gbar_Na * h[:, i-1]  # (batch_size,)
   g_K_eff   = (n[:, i-1] ** 4) * gbar_K                # (batch_size,)
   g_KM_eff  = gbar_KM * p[:, i-1]                      # (batch_size,)
   g_Ih_eff  = gbar_Ih * r[:, i-1]                      # (batch_size,)

   # Effective inverse membrane time constant (total conductance / C)
   tau_V_inv = (
    g_Na_eff
    + g_K_eff
    + g_leak
    + g_KM_eff   # M-current: outward K+, raises conductance
    + g_Ih_eff   # Ih: inward mixed cation, raises conductance
   ) / C  # (batch_size,)

   # Voltage steady-state numerator (weighted reversal potentials + input)
   V_inf = (
    g_Na_eff * E_Na
    + g_K_eff  * E_K
    + g_leak   * E_leak
    + g_KM_eff * E_K    # M-current drives V toward E_K (hyperpolarising)
    + g_Ih_eff * E_h    # Ih drives V toward E_h ~ -30 mV (depolarising at rest)
    + input_current[:, i-1]
    + nois_fact * torch.randn(batch_size, generator=generator, device=device) / (tstep ** 0.5)
   ) / (tau_V_inv * C)  # (batch_size,)

   # Exponential Euler update for voltage
   V[:, i] = V_inf + (V_prev - V_inf) * Exp(-tstep * tau_V_inv)          # (batch_size,)

   # Standard HH gating variable updates
   n[:, i] = inf_x(a_n, b_n) + (n[:, i-1] - inf_x(a_n, b_n)) * Exp(-tstep / tau_x(a_n, b_n))  # (batch_size,)
   m[:, i] = inf_x(a_m, b_m) + (m[:, i-1] - inf_x(a_m, b_m)) * Exp(-tstep / tau_x(a_m, b_m))  # (batch_size,)
   h[:, i] = inf_x(a_h, b_h) + (h[:, i-1] - inf_x(a_h, b_h)) * Exp(-tstep / tau_x(a_h, b_h))  # (batch_size,)

   # M-current gate update (voltage-independent tau -> stable during spikes)
   p[:, i] = inf_p_val + (p[:, i-1] - inf_p_val) * Exp(-tstep / tau_p)  # (batch_size,)

   # Ih gate update (fixed scalar tau)
   r[:, i] = inf_r_val + (r[:, i-1] - inf_r_val) * Exp(
    torch.full_like(V_prev, -tstep / tau_r_fixed)
   )  # (batch_size,)

  # Return voltage traces (+ optional observation noise, currently zero)
  return V + nois_fact_obs * torch.randn(
   batch_size, time_steps, generator=generator, device=device
  )  # (batch_size, time_steps)
\end{lstlisting}
\smallskip
\par\noindent [[ \#\# performance\_metrics \#\# ]]\\
\noindent
Metrics: neg\_avg\_log\_marginal\_NLE: 35.3, neg\_log\_marginal\_NLE: 353, timeout\_evidence\_estimation: 0
\par\smallskip
Respond with the corresponding output fields, starting with the field `[[ \#\# feedback \#\# ]]` (must be formatted as a valid Python ProvideFeedbackOutput), and then ending with the marker for `[[ \#\# completed \#\# ]]`.
\smallskip
\end{tcolorbox}
\vspace{2em}

\subsubsection{Response}

\begin{tcolorbox}[
  breakable, enhanced,
  title={Output},
  colback=OutputBg, colframe=OutputFrame,
  coltitle=white, fonttitle=\bfseries\small,
  arc=2mm, boxrule=1pt,
  before upper={\small\setlength{\parindent}{0pt}\setlength{\parskip}{2pt}},
]
\par\noindent [[ \#\# feedback \#\# ]]\\
\noindent
\par\noindent \{
\par\noindent\hangindent=1.5em\hangafter=1 \hspace*{1.5em}``\textbf{\texttt{main\_diagnosis}}\textquotesingle\textquotesingle: ``The primary failure is a systematically over-hyperpolarized potassium reversal potential (E\_K = -107 mV, hardcoded), which is far below the physiological range for cortical neurons (\textasciitilde{}-90 mV). This single constant distorts the after-hyperpolarization depth for both the standard K+ current and the M-current, biasing spike count, mean stimulation voltage, skewness, and kurtosis simultaneously. A secondary issue is that the Ih gating parameters allow unrealistically strong activation near rest: with V\_half\_r = -80 mV and k\_r = 7 mV, \textasciitilde{}19\% of Ih channels are open at -70 mV rest, and the prior permits gbar\_Ih up to 120 mS/cm$^{2}$ (physiological range is 0.05--5 mS/cm$^{2}$), causing gbar\_Ih to dominate the conductance balance and push mean resting potential and voltage variance far from target.'',
\par\noindent \hspace*{1.5em}``\textbf{\texttt{issues}}\textquotesingle\textquotesingle:
\par\noindent \hspace*{1.5em}[
\par\noindent \hspace*{3.0em}\{
\par\noindent\hangindent=4.5em\hangafter=1 \hspace*{3.0em}\hspace*{1.5em}``\textbf{\texttt{description}}\textquotesingle\textquotesingle: ``E\_K = -107.0 mV is too hyperpolarized. Physiological E\_K for cortical neurons is approximately -90 mV. The excessively negative value deepens the AHP after every spike, reduces firing rate by prolonging inter-spike intervals, pulls mean stimulation-period voltage too negative (metric 4), and skews the voltage distribution toward hyperpolarized values (metrics 6--7). Because E\_K also sets the M-current reversal, this error is amplified once gbar\_KM is inferred.'',
\par\noindent\hangindent=4.5em\hangafter=1 \hspace*{3.0em}\hspace*{1.5em}``\textbf{\texttt{location}}\textquotesingle\textquotesingle: ``Line: `E\_K = -107.0  \# mV`'',
\par\noindent\hangindent=4.5em\hangafter=1 \hspace*{3.0em}\hspace*{1.5em}``\textbf{\texttt{severity}}\textquotesingle\textquotesingle: ``critical'',
\par\noindent\hangindent=4.5em\hangafter=1 \hspace*{3.0em}\hspace*{1.5em}``\textbf{\texttt{suggestion}}\textquotesingle\textquotesingle: ``Change E\_K to -90.0 mV, which is the standard physiological value for cortical/hippocampal neurons used in most HH extensions (e.g., Pospischil et al. 2008). If the target data suggests a slightly different value, keep it as a fixed constant in the range [-80, -95] mV rather than leaving it at -107.''
\par\noindent \hspace*{3.0em}\},
\par\noindent \hspace*{3.0em}\{
\par\noindent\hangindent=4.5em\hangafter=1 \hspace*{3.0em}\hspace*{1.5em}``\textbf{\texttt{description}}\textquotesingle\textquotesingle: ``The Ih current is over-powered at rest. Two compounding problems: (1) k\_r = 7 mV with V\_half\_r = -80 mV means \textasciitilde{}19\% of Ih channels are already open at a typical resting voltage of -70 mV, providing significant tonic depolarizing current even without strong hyperpolarisation. (2) The prior upper bound for gbar\_Ih (120 mS/cm$^{2}$) is \textasciitilde{}25--100$\times$ the physiological range (0.05--5 mS/cm$^{2}$), so the inference can select values that make Ih the dominant conductance, dramatically raising resting potential SD and variance (metrics 3, 5) to unrealistic levels while also distorting skewness and kurtosis.'',
\par\noindent\hangindent=4.5em\hangafter=1 \hspace*{3.0em}\hspace*{1.5em}``\textbf{\texttt{location}}\textquotesingle\textquotesingle: ``Lines: `gbar\_Ih = params[:, 7]`, `k\_r = 7.0`, `V\_half\_r = -80.0`'',
\par\noindent\hangindent=4.5em\hangafter=1 \hspace*{3.0em}\hspace*{1.5em}``\textbf{\texttt{severity}}\textquotesingle\textquotesingle: ``major'',
\par\noindent\hangindent=4.5em\hangafter=1 \hspace*{3.0em}\hspace*{1.5em}``\textbf{\texttt{suggestion}}\textquotesingle\textquotesingle: ``Shift the half-activation to V\_half\_r = -90 mV (deeper hyperpolarisation required to open channels), which reduces Ih activation at rest from \textasciitilde{}19\% to \textasciitilde{}4\%. Also reduce the prior upper bound for gbar\_Ih from 120 to 5 mS/cm$^{2}$ to keep Ih in the physiological regime. Alternatively, increase k\_r to 10--12 mV to soften the transition and reduce resting activation. These changes together will prevent Ih from dominating resting conductance while preserving its role in correcting subthreshold distribution shape.''
\par\noindent \hspace*{3.0em}\}
\par\noindent \hspace*{1.5em}]
\par\noindent \}
\smallskip
\par\noindent [[ \#\# completed \#\# ]]\\
\noindent
\smallskip
\end{tcolorbox}
\vspace{0.35em}